\documentclass{article}

\PassOptionsToPackage{sort,round,compress}{natbib}
\usepackage[preprint]{neurips_2026} %

\usepackage[utf8]{inputenc}
\usepackage[T1]{fontenc}
\usepackage{url}
\usepackage{booktabs}
\usepackage{tabularx}
\usepackage{nicefrac}
\usepackage{microtype}
\usepackage{doi}
\usepackage{blindtext}
\usepackage{multicol, multirow, makecell}
\usepackage[normalem]{ulem}
\usepackage{algorithm}

\usepackage{enumerate}
\usepackage{subcaption}
\usepackage{rotating}
\usepackage{tablefootnote, threeparttable}
\usepackage[dvipsnames,table]{xcolor}

\definecolor{darkblue}{rgb}{0.0,0.0,0.65}
\definecolor{darkred}{rgb}{0.65,0.0,0.0}
\definecolor{darkgreen}{rgb}{0.0,0.5,0.0}
\definecolor{tab:blue}{RGB}{31,119,180}
\definecolor{tab:red}{RGB}{214,39,40}
\definecolor{tab:green}{RGB}{44,160,44}
\definecolor{tab:orange}{RGB}{255,127,14}
\definecolor{thmframe}{RGB}{0,114,178}
\definecolor{thmback}{RGB}{248,251,255}
\definecolor{propframe}{RGB}{0,158,115}
\definecolor{propback}{RGB}{246,252,250}
\definecolor{corframe}{RGB}{204,121,167}
\definecolor{corback}{RGB}{252,248,251}
\definecolor{assframe}{RGB}{213,94,0}
\definecolor{assback}{RGB}{255,249,245}
\definecolor{defframe}{RGB}{145,145,145}
\definecolor{defback}{RGB}{250,250,250}
\definecolor{rmkframe}{RGB}{120,120,120}
\definecolor{rmkback}{RGB}{249,249,249}
\definecolor{algframe}{RGB}{86,180,233}
\definecolor{algback}{RGB}{247,252,255}

\usepackage{hyperref}
\hypersetup{
    colorlinks = true,
    citecolor  = darkblue,
    linkcolor  = darkred,
    filecolor  = darkblue,
    urlcolor   = darkblue,
}

\usepackage{amsfonts, amsmath, amssymb, amsthm}
\usepackage{dsfont}
\usepackage{thmtools, thm-restate}
\usepackage{tcolorbox}
\tcbuselibrary{theorems,skins,breakable}
\usepackage[capitalize,noabbrev]{cleveref}
\crefname{assumption}{assumption}{assumptions}
\Crefname{assumption}{Assumption}{Assumptions}
\crefname{algocf}{algorithm}{algorithms}
\Crefname{algocf}{Algorithm}{Algorithms}
\usepackage{amsmath,amsfonts,bm,mathtools}

\def\ceil#1{\lceil #1 \rceil}

\def\1{\bm{1}}

\def\vzero{{\bm{0}}}

\def\vtheta{{\bm{\theta}}}

\def\ve{{\bm{e}}}

\def\vx{{\bm{x}}}
\def\vy{{\bm{y}}}
\def\vz{{\bm{z}}}

\def\mA{{\bm{A}}}
\def\mB{{\bm{B}}}

\def\mH{{\bm{H}}}
\def\mI{{\bm{I}}}

\def\mV{{\bm{V}}}

\DeclareMathAlphabet{\mathsfit}{\encodingdefault}{\sfdefault}{m}{sl}
\SetMathAlphabet{\mathsfit}{bold}{\encodingdefault}{\sfdefault}{bx}{n}

\def\gA{{\mathcal{A}}}
\def\gB{{\mathcal{B}}}

\def\gF{{\mathcal{F}}}

\def\gL{{\mathcal{L}}}

\def\gN{{\mathcal{N}}}
\def\gO{{\mathcal{O}}}

\def\gS{{\mathcal{S}}}
\def\gT{{\mathcal{T}}}

\def\gX{{\mathcal{X}}}

\def\sN{{\mathbb{N}}}

\def\sP{{\mathbb{P}}}

\def\sR{{\mathbb{R}}}

\newcommand{\E}{\mathbb{E}}

\newcommand{\R}{\mathbb{R}}

\newcommand{\KL}{D_{\mathrm{KL}}}

\newcommand{\Var}{\mathrm{Var}}

\DeclareMathOperator*{\argmax}{arg\,max}
\DeclareMathOperator*{\argmin}{arg\,min}

\tcbset{
    sharpthm/.style n args={2}{
        enhanced,
        breakable,
        sharp corners,
        boxrule=0.6pt,
        left=6pt,
        right=6pt,
        top=5pt,
        bottom=5pt,
        before skip=8pt,
        after skip=8pt,
        colback=#1,
        colframe=#2,
        fonttitle=\bfseries,
        coltitle=black,
        colbacktitle=#1
    },
    algobox/.style={
        enhanced,
        breakable,
        sharp corners,
        boxrule=0.6pt,
        left=6pt,
        right=6pt,
        top=5pt,
        bottom=5pt,
        before skip=8pt,
        after skip=8pt,
        colback=algback,
        colframe=algframe,
        coltitle=black,
        colbacktitle=algback,
        fonttitle=\bfseries
    }
}
\theoremstyle{plain}
\newtheorem{theorem}{Theorem}%

\newtheorem{lemma}[theorem]{Lemma}

\theoremstyle{definition}
\newtheorem{definition}[theorem]{Definition}
\theoremstyle{plain}
\newtheorem{assumption}{Assumption}
\newtheorem{remark}{Remark}

\tcolorboxenvironment{theorem}{sharpthm={thmback}{thmframe}}
\tcolorboxenvironment{proposition}{sharpthm={propback}{propframe}}
\tcolorboxenvironment{corollary}{sharpthm={corback}{corframe}}
\tcolorboxenvironment{definition}{sharpthm={defback}{defframe}}
\tcolorboxenvironment{lemma}{sharpthm={defback}{defframe}}

\DeclareMathOperator{\Poi}{Poi}

\newcommand{\bignorm}[1]{\left\lVert #1 \right\rVert}

\newcommand{\indicator}{\mathds{1}}

\newcommand{\Reg}{\mathrm{Reg}}

\usepackage{pifont}
\newcommand{\cmark}{\text{\ding{51}}}
\newcommand{\xmark}{\text{\ding{55}}}
\newcommand{\gyes}{{\color[rgb]{0,.8,0}\cmark}}
\newcommand{\rno}{\color[rgb]{.8,0,0}\xmark}
\usepackage{adjustbox}

\allowdisplaybreaks

\usepackage{enumitem}
\setlist[itemize]{leftmargin=*, labelindent = 10pt}
\setlist[enumerate]{leftmargin=*, labelindent = 10pt}

\usepackage[algo2e, linesnumbered, ruled, vlined]{algorithm2e}
\SetKwInput{KwInput}{Input}
\SetKwInput{KwOutput}{Output}

\newcommand{\junghyun}[1]{\textcolor{tab:red}{\bfseries JL: #1}}

\newcommand{\debug}[1]{\textcolor{red}{#1}}

\def\bignorm#1{\left\lVert #1 \right\rVert}

\newcommand{\dmu}{\dot{\mu}}
\newcommand{\ddmu}{\ddot{\mu}}

\newif\ifFINAL
\FINALfalse %

\ifFINAL
    \renewcommand{\junghyun}[1]{}
    \renewcommand{\hanseul}[1]{}
    \renewcommand{\debug}[1]{}
    \newcommand{\kj}[1]{}
    
\else
\fi

\usepackage{autonum} %
\title{A Jointly Efficient and Optimal Algorithm for Heteroskedastic Generalized Linear Bandits with Adversarial Corruptions}

\author{%
Sanghwa Kim\thanks{Equal contributions}, \ \ Junghyun Lee\footnotemark[1], \ \ Se-Young Yun \\
Kim Jaechul Graduate School of AI, KAIST\\
Seoul 02455, Republic of Korea\\
\texttt{\{tkdghk9667, jh\_lee00, yunseyoung\}@kaist.ac.kr}
}

\begin{document}

\maketitle

\begin{abstract}
    We consider the problem of heteroskedastic generalized linear bandits (GLBs) with adversarial corruptions, which subsumes heteroskedastic linear bandits and logistic/Poisson bandits, in the presence of adversarial corruptions. We propose \texttt{HCW-GLB-OMD}, which consists of two components: an online mirror descent (OMD)-based estimator and Hessian-based confidence weights to achieve corruption robustness. This is computationally efficient in that it only requires $\mathcal{O}(1)$ space and time complexity per iteration. Under the self-concordance assumption on the link function, we show a regret bound of $\tilde{\mathcal{O}}\left( d \sqrt{\sum_t g(\tau_t) \dot\mu_{t,\star}} + d^2 g_{\max} \kappa + d(g_{\max}+ \kappa ) C \right)$, where $\dot\mu_{t,\star}$ is the slope of $\mu$ around the optimal arm at time $t$, $g(\tau_t)$'s are potentially exogenously time-varying dispersions (e.g., $g(\tau_t) = \sigma_t^2$ for heteroskedastic linear bandits, $g(\tau_t) = 1$ for Bernoulli and Poisson), $g_{\max} = \max_{t \in [T]} g(\tau_t)$ is the maximum dispersion, and $C \geq 0$ is the total corruption budget of the adversary. We complement this with a lower bound of $\tilde{\Omega}(d \sqrt{\sum_t g(\tau_t) \dot\mu_{t,\star}} + d C)$, unifying previous problem-specific lower bounds. Thus, our algorithm achieves, up to a $\kappa$-factor in the corruption term, instance-wise minimax optimality \emph{simultaneously} across various instances of heteroskedastic GLBs with adversarial corruptions.
\end{abstract}

\section{Introduction}
\label{sec:intro}

\paragraph{Generalized Linear Bandits.}
Stochastic \textit{contextual bandits} provide a mathematical framework for sequential decision making under uncertainty, with applications ranging from personalized recommendation systems~\citep{li2010news, qin2014contextual} to adaptive clinical trials~\citep{villar2015multi,durand2018carcinogenesis}.
To capture the nonlinear rewards in the real world, \textbf{\textit{generalized linear bandits (GLBs)}}~\citep{filippi2010glm,li2017glm,lee2024glm,liu2024free,zhang2025onepass} have emerged as a principled parametric framework, where rewards follow a \emph{generalized linear model (GLM)}~\citep{glm}: for an arm $\vx_t \in \sR^d$, the expected reward is $\mathbb{E}[r_t | \vx_t ; \vtheta_\star] = \mu(\langle\vx_t, \vtheta_\star\rangle)$ with a known inverse link function $\mu$ and an unknown parameter $\vtheta_\star\in\sR^d$.
This framework encompasses various canonical stochastic bandit settings, including Gaussian linear bandits~\citep{abbasiyadkori2011linear}, logistic bandits~\citep{zhang2016logistic,faury2020logistic,abeille2021logistic}, and Poisson bandits~\citep{mutny2021poisson,lee2024glm}.
Despite its versatility, the standard \textbf{GLB} framework is vulnerable to several characteristics of real-world deployments.

\paragraph{Adversarial Corruptions.}
One critical component that frequently arises in real-world scenarios is \textbf{\textit{adversarial corruptions}}, against which most naïve algorithms for stochastic bandits fail.
For instance, even when a recommender system recommends a product and the user says `no'~\citep{li2010news}, a malicious agent may corrupt the response to `yes' in order to artificially inflate engagement metrics or revenue for specific products (corrupted logistic bandits).
Similarly, in spatiotemporal event monitoring, such as ride-sharing~\citep{mutny2021poisson}, malicious actors might spoof the number of events captured to manipulate platform algorithms into perceiving higher localized demand (corrupted Poisson bandits).

Such adversarial corruption in bandits is modeled as follows: at each timestep $t$, an adversary manipulates the reward $r_t$ by $c_t$ subject to a corruption budget $\sum_t |c_t| \leq  C$, then reveals only the corrupted reward $\tilde{r}_t = r_t + c_t$ to the learner.
Handling these corruptions is fundamentally challenging, as the adversary can be adaptive to past history and concentrate on statistically vulnerable rounds.
The literature on corruption-robust algorithms has grown steadily, ranging from unstructured multi-armed bandits~\citep{lykouris2018corruption,guan2020robust,gupta2019corrupted} to (generalized) linear bandits~\citep{zhao2021corrupted,bogunovic2020corruption,bogunovic2021corrupted,ding2022corrupted,he2022corruption,liu2024corruption,yu2025corruption}, and contextual bandits with general function approximation~\citep{ye2023corruption}.

Existing corruption-robust algorithms achieve regret upper bounds of the form $\tilde{\gO}(d\sqrt{T} + dC)$, which additively decouples the leading regret term $d \sqrt{T}$ and the corruption term $d C$.
For linear bandits, this is known to be optimal~\citep{dani2007linear,bogunovic2021corrupted}.
However, for corrupted \textbf{GLB}s, existing algorithms~\citep{yu2025corruption, ye2023corruption} construct confidence sequences that are agnostic to local curvature, making their leading term scale as $d {\color{red}\kappa} \sqrt{T}$, with ${\color{red}\kappa \coloneq \left( \min_{\vx, \vtheta} \dmu(\langle \vx, \vtheta \rangle)\right)^{-1}}$.
This is strictly \emph{suboptimal}, as \citet{abeille2021logistic} showed that in the absence of corruptions, the minimax-optimal regret of logistic bandit benefits from the slope at the optimal arm, yielding a leading term $\tilde{\Theta}(d \sqrt{\sum_t {\color{blue}\dmu(\langle \vx_{t,\star}, \vtheta_\star \rangle)}})$, where $\vx_{t,\star} \coloneq \argmax_{\vx \in \gX_t} \mu(\langle \vx, \vtheta_\star\rangle)$ with a time-varying arm-set $\gX_t$.

\paragraph{Heteroskedasticity.}
Real-world environments also often exhibit \textbf{\textit{heteroskedasticity}}, where the reward variance varies across arms.
Under a GLM, the conditional variance of reward is given as $\Var[r_t| \vx_t, \tau_t ; \vtheta_\star] = g(\tau_t) \dmu(\langle\vx_t, \vtheta_\star \rangle)$, where $g(\tau_t)$ is the dispersion parameter at time step $t$.
While most \textbf{GLB}s (e.g., logistic and Poisson bandits) are endogenously \emph{heteroskedastic} (i.e., the reward variance is naturally arm-dependent), heteroskedasticity can also be exogenously induced.
For instance, in \emph{heteroskedastic Gaussian linear bandits}, the variance sequence $g(\tau_t) = \sigma_t^2$ is chosen exogenously~\citep{zhou2021variance,zhang2021variance,kim2022variance,zhou2022variance,zhao2023variance,jia2024lowerbound,he2025lowerbound,zhao2026breaking,xu2023adaptive}.

\paragraph{Contributions.}
Although each of the above characteristics is important, they have been tackled rather independently so far. 
This leads to the following central question of this paper:
\begin{center}
\emph{
    Can we design a computationally \textbf{efficient} algorithm for \textbf{heteroskedastic GLBs with adversarial corruptions} that attains instance-wise minimax-optimal regret bounds?
}
\end{center}

In this paper, we provide a unified framework that simultaneously tackles \textbf{GLB}s with \textit{time-varying dispersions} and \textit{adversarial reward corruptions}.
We seek an algorithm that attains a nearly optimal regret upper bound: specifically, instance-wise optimal leading term of $d \sqrt{\sum_t {\color{blue}\dmu(\langle \vx_{t,\star}, \vtheta_\star \rangle)}}$ and an additive minimax-optimal corruption term of $d C$ (up to a curvature factor ${\color{red}\kappa}$).
Crucially, \emph{none} of the prior works on corruption-robust GLM bandits achieves this ``best-of-both-worlds'' guarantee.
We answer this central question affirmatively:

\begin{enumerate}
    \item \textbf{A Jointly Efficient and Optimal Algorithm.} To close this gap, we propose \texttt{HCW-GLB-OMD}, which extends the confidence-weighting of \texttt{CW-OFUL}~\citep{he2022corruption} from the linear design matrix to the Hessian matrix to accurately account for local curvature. We overcome the technical hurdles of constructing corruption-robust confidence sequences by utilizing the online mirror descent (OMD) estimator of \texttt{GLB-OMD}~\citep{zhang2025onepass}. We prove that this yields a corruption-robust algorithm operating with $\gO(1)$ space and time complexities per round, achieving the following regret bound with an instance-wise minimax optimal leading term:
    \begin{equation}
        \Reg(T) \lesssim_{\log} d \sqrt{ \sum\nolimits_{t=1}^T {\color{BlueViolet}g(\tau_t)} {\color{blue}\dmu_{t,\star}}} + d^2 {\color{BlueViolet}g_{\max}} {\color{red}\kappa} + d({\color{BlueViolet}g_{\max}} + {\color{red}\kappa}) C ,
    \end{equation}
    for \emph{any} self-concordant \textbf{\textit{heteroskedastic GLBs with adversarial corruptions}} \textbf{(\Cref{theorem:regret})}. As shown in Table~\ref{tab:results}, our regret bound \textit{simultaneously} matches the tightest known leading terms across various instances (logistic/Poisson bandits and heteroskedastic linear bandits).
    
    \item \textbf{A Unified Regret Lower Bound.} We demonstrate the optimality of our regret upper bound by proving a unified lower bound of $\tilde{\Omega}\left(d \sqrt{\sum_t {\color{BlueViolet} g(\tau_t)} {\color{blue}\dmu_{t,\star}}} + d C \right)$ for any self-concordant \textit{\textbf{heteroskedastic GLBs with adversarial corruptions}} \textbf{(\Cref{thm:lower-bound,thm:lower-bound-corruption})}.
    This confirms that, up to logarithmic factors and a ${\color{red}\kappa}$ factor in the corruption term, our algorithm is instance-wise minimax-optimal.
    Crucially, our lower bound subsumes various existing problem-specific lower bounds, such as logistic bandits~\citep[Theorem 2]{abeille2021logistic}, heteroskedastic linear bandits~\citep[Theorem 4.1]{he2025lowerbound}, and corrupted linear bandits~\citep[Theorem 3]{bogunovic2021corrupted}.
\end{enumerate}

\begin{table}[t]
    \centering
    \normalsize
    \setlength{\tabcolsep}{5pt}
    \caption{\label{tab:results}
    Comparison of regret bounds for \textbf{heteroskedastic GLBs} with and without corruption. Logarithmic factors are omitted to highlight dependence on $d, T, {\color{blue}\dmu_{t,\star}}, {\color{BlueViolet} g(\tau_t)}, {\color{red}\kappa}$, and $C$. We define ${\color{BlueViolet}g_{\max} \coloneq \max_{t \in [T]} g(\tau_t)}$. For heteroskedastic linear bandits, $\sigma_t^2$ denotes the conditional noise variance. The last column (``Comp. Complexity'') reflects both space and time dependence on the iteration $t$ only, and ``Self-con.'' indicates self-concordance (\Cref{assumption:link-function}).}
    \begin{adjustbox}{max width=\textwidth}
    \begin{threeparttable}
    \begin{tabular}{|c|c|c|c|c|c|} \hline 
        \textbf{Algorithm} & \textbf{Regret Bound} & 
        \textbf{\!\!Corrupt. \!\!} & \textbf{\!\!Reward\!\!}  & \textbf{\!\!Noise\!\!}& \makecell{ \!\! \textbf{Comp.} \\ \textbf{Complexity} \!\!} \\
        \hline
        \makecell{\texttt{WeightedOFUL$^+$} \\ {\small\citep[Theorem 4.1]{zhou2022variance}}} 
       & $d\sqrt{\sum_t \sigma_t^2}$ 
        & \rno & Linear & \makecell{Bounded by $R$ } & $\gO(1)$ \\
        \hline
        \makecell{\texttt{MOR-UCB}\tnote{$*$} \\ {\small\citep[Theorem 6.3]{zhao2023optimal}}} 
       & \makecell{$d{\color{red}\kappa}\sqrt{ \sum_t {\color{BlueViolet}g(\tau_t)}} + {\color{red}\kappa} \sqrt{dT}$ \\ and\\ $d{\color{red}\kappa}\sqrt{ \sum_t {\color{BlueViolet}g(\tau_t)} {\color{blue}\dmu_{t, \star}}} + {\color{red}\kappa} \sqrt{dT}$}
        & \rno & GLM & Bounded by $R$ & $\gO(t)$ \\
        \hline 
        \hline 
        \makecell{\texttt{GLB-OMD}\tnote{$\dagger$} \\ {\small\citep[Theorem 2]{zhang2025onepass}}} 
        & $d\sqrt{ {\color{BlueViolet}g(\tau)} \sum_t {\color{blue}\dmu_{t, \star}}}$ 
        & \rno & \makecell{Self-con. \\ GLM} & GLM & $\gO(1)$ \\
        \hline
        \makecell{Lower Bound\tnote{$\ddagger$} \\ \small\citep[Theorem 2]{abeille2021logistic}} 
       & $d \sqrt{T {\color{blue}\dmu_{\star}}}$
        & \rno & Logistic & &  \\
        \hline
        \makecell{Lower Bound \\ {\small\citep[Theorem 4.1]{he2025lowerbound}}} 
       & $d \sqrt{\sum_t \sigma_t^2} / \log T$
        & \rno & Linear & &  \\
        \hline\hline
        \makecell{\texttt{CW-OFUL} \\ {\small \citep[Theorem 4.2]{he2022corruption}}} 
        & $d \sqrt{T} + d C$ 
        & \gyes & Linear & $\sigma$-SG & $\gO(t)$ \\
        \hline
        \makecell{\texttt{CR-Eluder-UCB} \\ {\small \citep[Theorem 4.1]{ye2023corruption}}} 
        & \makecell{$d {\color{red}\kappa} \sqrt{T} + d {\color{red}\kappa^2} C$}
        & \gyes & GLM & $\sigma$-SG & $\gO(t)$ \\
        \hline
        \makecell{\texttt{GAdaOFUL}\tnote{$*$} \\ {\small \citep[Theorem 4]{yu2025corruption}}} 
        & \makecell{$d {\color{red}\kappa} \sqrt{\sum_t {\color{BlueViolet}g(\tau_t)}} + d {\color{red}\kappa} C$ \\ or\\ $d{\color{red}\kappa}\sqrt{ \sum_t {\color{BlueViolet}g(\tau_t)} {\color{blue}\dmu_{t, \star}}} + d {\color{red}\kappa} C$}
        & \gyes & GLM & Finite variance & $\gO(t)$ \\
        \hline
        \makecell{Lower Bound \\ {\small\citep[Theorem 3]{bogunovic2021corrupted}}} 
       & $d C$
        & \gyes & Linear & &  \\
        \hline\hline
        \rowcolor{gray!30}
        \makecell{\quad\quad\:\, \textbf{\texttt{HCW-GLB-OMD} (Ours)} \quad\quad\:\: \\ {\small \textbf{(Theorem~\ref{theorem:regret})}}} 
        & $d \sqrt{\sum_t {\color{BlueViolet} g(\tau_t)} {\color{blue}\dmu_{t, \star}}} + d {\color{red}\kappa} C$
        & \gyes & \makecell{Self-con. \\ GLM} & GLM & $\gO(1)$ \\
        \hline
        \rowcolor{gray!30}
        \makecell{\quad\quad\quad \textbf{Lower Bound (Ours)} \quad\quad\quad \\ {\small \textbf{(Theorem~\ref{thm:lower-bound})}}} 
        & $d \sqrt{\sum_t {\color{BlueViolet} g(\tau_t)} {\color{blue}\dmu_{t, \star}}} / \log T + d C$ 
        & \gyes & \makecell{Self-con. \\ GLM} & &\\
        \hline
    \end{tabular}
    \begin{tablenotes}
        \item[$*$] The second regret bound is only attainable when the learner observes ${\color{BlueViolet}g(\tau_t)} \dmu(\langle \vx_t, \vtheta_\star \rangle)$ at each round $t$ (\Cref{app:comparison}).
        \item[$\dagger$] The algorithms work only when ${\color{BlueViolet}g(\tau_t) = g(\tau)}$, i.e., when the dispersion is fixed.
        \item[$\ddagger$] {Here, $\dmu_\star \triangleq \dmu(\langle \vx_\star, \vtheta_\star \rangle)$, where $\vx_\star \coloneq \argmax_{\vx \in \gX} \mu(\langle \vx, \vtheta_\star \rangle )$. The lower bound holds for fixed arm-set.}
    \end{tablenotes}
    \end{threeparttable}
    \end{adjustbox}
\end{table}

\paragraph{Notation.}
For $a, b \in \mathbb{R}$, we let $a \wedge b \coloneqq \min\{a, b\}$ and $a \vee b \coloneqq \max\{a, b\}$. 
For $d$-dimensional vectors $\vx,\vy \in \mathbb{R}^{d}$, $\lVert \vx \rVert_2$ denotes the Euclidean norm, $\langle \vx, \vy \rangle = \vx^\top \vy$ the inner product, and $\mathcal{B}^d(r) \coloneqq \{ \vx \in \mathbb{R}^d \mid \lVert \vx \rVert_2 \leq r \}$ $d$-dimensional Euclidean ball of radius $r$.
For positive semi-definite (PSD) matrices $\mA, \mB \in \mathbb{R}^{d \times d}$, the Mahalanobis norm is denoted as $\lVert \vx \rVert_{\mA} \coloneqq \sqrt{ \vx^\top \mA \vx}$, and $\mA \succeq \mB$ means $\mA - \mB$ is PSD.
We use $\gO$ (or $\lesssim$), $\Omega$ and $\Theta$ to denote asymptotic upper, lower and tight bounds, respectively, while $\tilde{\gO}$, $\tilde{\Omega}$, $\tilde{\Theta}$, and $\lesssim_{\log}$ ignore polylogarithmic factors.

\section{Problem Setting}

The interaction protocol is as follows: at each $t\in[T]=\{1,\cdots,T\}$, the learner receives a potentially time-varying contextual arm-set $\gX_t \subset \sR^d$ (which can be adversarial) from the environment; the learner selects an action $\vx_t \in \gX_t$, and the adversary selects a dispersion parameter $\tau_t$ accordingly; then a stochastic reward is sampled as $r_t \sim GLM(\cdot \mid \vx_t, \tau_t; \vtheta_\star)$, whose probability density is
\begin{equation}
\label{eqn:GLM}
    d p (r \mid \vx, \tau_t; \vtheta_\star) \propto \exp \left( \frac{r \langle \vx, \vtheta_\star \rangle - m ( \langle \vx, \vtheta_\star \rangle) }{g(\tau_t)} \right) d\nu.
\end{equation}
The only component unknown to the learner is $\vtheta_\star \in \sR^d$, and
the others are known: $g : \sR \mapsto \sR_{>0}$ is the dispersion function controlling the variability; $d\nu$ is an appropriate base measure (e.g., Lebesgue or counting); and $m: \sR \mapsto \sR$ is the log-partition function satisfying the following assumption:
\begin{assumption}
\label{assumption:m}
    $m(\cdot)$ is three-times differentiable and convex.
\end{assumption}
We define the \textit{(inverse) link function} as $\mu := m'$. This function is well-defined and non-decreasing, i.e., $\dmu \geq 0$.
We note two useful properties of GLMs that are frequently utilized throughout this work~\citep{graphicalmodels}: $\mathbb{E} [ r_t | \vx_t, \tau_t; \vtheta_\star ] = \mu(\langle \vx_t, \vtheta_\star \rangle)$, and $\mathrm{Var}[r_t | \vx_t, \tau_t; \vtheta_\star] = g(\tau_t) \dmu(\langle \vx_t, \vtheta_\star \rangle )$.
We adopt several standard assumptions from the \textbf{GLB} literature~\citep{russac2021glm,lee2024glm}.
Denoting $\gX_{[T]} := \bigcup_{t \in [T]} \gX_t$,
\begin{assumption}
\label{assumption:bound}
    $\gX_{[T]} \subseteq \mathcal{X} \triangleq \gB^d(1)$ and $\vtheta_\star \in \Theta \triangleq \gB^d(S)$ for a known $S \in (0, \infty)$.
\end{assumption}
\begin{assumption}
\label{assumption:link-function}
    There exists $L_\mu \in (0, \infty)$ and $R_s \in [0, \infty)$ such that for all $(\vx, \vtheta) \in \gX_{[T]} \times \Theta$, $\dmu(\langle \vx, \vtheta \rangle) \leq L_\mu$ and $|\ddmu(\langle \vx, \vtheta \rangle) | \leq R_s \dmu(\langle \vx, \vtheta \rangle)$.
\end{assumption}
Here, $L_\mu$ is the Lipschitz constant of $\mu(\cdot)$ by Rademacher's theorem~\citep[Theorem 3.1.6]{geommeasure}.
The second inequality is referred to as \emph{(generalized) self-concordance}~\citep{abeille2021logistic,russac2021glm}, a condition originally utilized in the analysis of logistic regression by \citet{bach2010self} to avoid exponential constants.
Throughout, we will always assume that \Cref{assumption:m,assumption:bound,assumption:link-function} hold.

After observing $\vx_t$ and $r_t$, the adaptive adversary then chooses a corruption level $c_t \in \sR$, and reveals the dispersion parameter ${\color{BlueViolet}\tau_t}$ and corrupted reward $\tilde{r}_t := r_t + c_t$ to the learner.
The learner observes neither the true reward $r_t$ nor whether the observed $\tilde{r}_t$ is corrupted (or by how much).
The \emph{corruption budget} is defined as $C := \sum_{t=1}^T |c_t|$ as in \citet{he2022corruption}.

\begin{remark}[Adaptive Adversary]
    We emphasize that the adversaries controlling dispersion and corruption both could be \emph{adaptive}, capable of selecting $\tau_t$ after observing $\vx_t$ including history;
    $c_t$ after observing $\tau_t$ and realized $r_t$.
    Moreover, $\tau_t$ is revealed at each time step $t$, while the corruption $c_t$ is never revealed to the learner.
\end{remark}

The goal of the learner is to minimize the cumulative pseudo-regret:
\begin{equation}\label{eqn:regret}
    \Reg(T) := \sum_{t=1}^{T} \mu ( \langle \vx_{t,\star}, \vtheta_\star \rangle) - \sum_{t=1}^{T}\mu ( \langle \vx_t, \vtheta_\star \rangle),
    \quad \text{where} \quad
    \vx_{t,\star} \triangleq \argmax_{\vx \in \gX_t} \mu ( \langle \vx, \vtheta_\star \rangle).
\end{equation}

\section{Our Algorithm: HCW-GLB-OMD}
\label{sec:algorithm}
We present our algorithm, \texttt{Hessian-Confidence Weighted GLB-OMD (HCW-GLB-OMD)}, whose pseudocode is provided in \Cref{algorithm:CW-GLB-OMD}.
The algorithm is designed under two principles: First, the algorithm downweights observations from highly uncertain directions. In GLBs, the uncertainty is measured by the local Hessian induced by the curvature $\dmu(\langle \vx_s, \vtheta \rangle)$ and the dispersion $g(\tau_s )$. This motivates Hessian-based confidence weights. Second, the estimator must be online to make these weights analyzable.
A batched MLE causes a \textit{time-index mismatch} between the arm $\vx_s$, weight $w_s$, and the global estimate $\hat{\vtheta}_t$; this is avoided with OMD estimator (see \Cref{rmk:online}).
Throughout, we assume that $C$ is known, and we defer discussion of the case where $C$ is unknown to \Cref{app:unknown-C}.

\subsection{Overview of the Algorithm}
Algorithmically, the primary distinction from \texttt{GLB-OMD} of \citet{zhang2025onepass} lies in the Hessian-based confidence weighting.
Once $\vx_t$ is selected and a corrupted reward $\tilde{r}_t$ is observed, the learner assigns a confidence weight ${\color{magenta}w_t} = \min\{ 1 , {\alpha g(\tau_t)}/ {\lVert \vx_t \rVert_{\mH_t^{-1}}} \}$ to the negative log-likelihood function (Eqn.~\eqref{eq:corrupted-loss}), where $\lVert \vx_t \rVert_{\mH_t^{-1}}/ g(\tau_t)$ quantifies the uncertainty along the direction of $\vx_t$ relative to the reward dispersion $g(\tau_t)$.
Thus, ${\color{magenta}w_t}$ discounts observations from highly uncertain directions, which are more vulnerable to adversarial corruption.
Intuitively, ${\color{magenta}w_t}\approx 1$ means that the direction of $\vx_t$ has been sufficiently explored, so the observation is incorporated almost as in the standard likelihood loss; a smaller ${\color{magenta}w_t}$ suppresses observations from directions where the learner remains uncertain and hence vulnerable to corruption.
The parameter $\alpha>0$ sets this trust threshold, and is tuned in \Cref{theorem:regret}.

When $\mu(z) = z$, ${\color{magenta}w_t}$ is reduced to the confidence weight of \citet{he2022corruption} for \emph{linear} bandits. For nonlinear $\mu$, ${\color{magenta}w_t}$ depends on the accumulative local curvatures, following the standard recipe for obtaining curvature-dependent regret bounds in the GLB framework~\citep{abeille2021logistic,zhang2025onepass}.
The online OMD update then accumulates the weighted curvature information at the local update $\vtheta_{t+1}$ paired with the arm $\vx_t$ that has matching time index to it into $\mH_{t+1}$.

\begin{algorithm2e}[!t]
    \caption{\label{algorithm:CW-GLB-OMD}\texttt{HCW-GLB-OMD}}
    \KwIn{$\delta \in (0, 1)$, $\alpha, \eta, \lambda > 0$}
    \BlankLine
    Initialize variables: $\vtheta_1 \in \Theta$, $\mH_1 = \lambda \mI_d$\;
    \For{$t=1$ \KwTo $T$}{
        Receive $\gX_t \subseteq \gB^d(1)$\;

        Construct the confidence set $\mathcal{C}_t(\delta)$ as described in \Cref{theorem:confidence-sequence}\;
        
        Select an arm $\vx_t \gets \argmax_{\vx \in \gX_t, \vtheta \in \mathcal{C}_t(\delta)} \langle \vx, \vtheta \rangle$ and observe the \emph{corrupted} reward $\tilde{r}_t$ and ${\color{BlueViolet}\tau_t}$\;
        
        Update $\vtheta_{t+1}$ as:
        \refstepcounter{equation}\label{eq:OMD-estimator}
        \hfill \begin{minipage}[c]{\dimexpr\linewidth-5em}\centering $\displaystyle \vtheta_{t+1} \gets \argmin_{\vtheta \in \Theta} \left[ \big\langle \nabla\tilde{\ell}_t(\vtheta_t), \vtheta - \vtheta_t \big\rangle + \frac{1}{2}\lVert \vtheta - \vtheta_t \rVert_{\nabla^2 \tilde{\ell}_t(\vtheta_t)}^2 +\frac{1}{2\eta} \lVert \vtheta - \vtheta_t \rVert_{\mH_t}^2 \right],$ \end{minipage} \hfill (\theequation)
        where $\tilde{\ell}_t(\cdot)$ is a \emph{weighted} negative log-likelihood associated with $\tilde{r}_t$, i.e.,
        \refstepcounter{equation}\label{eq:corrupted-loss}
        \hfill \begin{minipage}[c]{\dimexpr\linewidth-5em}\centering $\displaystyle \tilde{\ell}_t(\vtheta) \triangleq \frac{ {\color{magenta}w_t} \left( m( \langle \vx_t , \vtheta \rangle) - \tilde{r}_t \langle \vx_t, \vtheta\rangle \right)}{{\color{BlueViolet}g(\tau_t)}}, \quad {\color{magenta}w_t} := 1 \wedge \frac{\alpha {\color{BlueViolet}g(\tau_t)}}{\lVert \vx_t \rVert_{\mH_t^{-1}}}.$ \end{minipage} \hfill (\theequation)
        
        Update $\mH_{t+1} \gets \mH_t + \nabla^2 \tilde{\ell}_t(\vtheta_{t+1})$\;
    }
\end{algorithm2e}

\subsection{Confidence Sequence and Regret Upper Bound}

We first present our main technical contribution: a coprruption-robust (convex) confidence sequence that is also curvature-aware.
The key point is that the confidence radius has two parts: the term $\beta_t$ is the estimation error of the weighted GLM and $C$ is \textit{additive} expansion due to adversarial corruption:
\begin{theorem}\label{theorem:confidence-sequence}
    Let $\delta \in (0, 1)$.
    Set the step size to $\eta = 1 + R_s S$ and regularization parameter to 
    $\lambda = \max \left\{ 14 d \eta R_s^2, 36 \eta^2 \alpha^2 R_s^2 S^2 L_\mu^2, \frac{d}{4S^2} \right\}$.
    For each $t \in [T]$, define the confidence set as 
    \begin{equation}\label{eq:CS}
        \mathcal{C}_t (\delta) \triangleq \left\{ \vtheta \in \Theta : \lVert \vtheta - \vtheta_t \rVert_{\mH_t} \leq \beta_t(\delta) + 2\eta \alpha C \right \},
    \end{equation}
    where $\vtheta_t$ is the OMD estimator (Eqn.~\eqref{eq:OMD-estimator}) and the radius $\beta_t(\delta)$ is given by
    \begin{equation}
        \beta_t(\delta) = \sqrt{2 \eta \log \frac{1}{\delta} + d ( 6 \eta^2 + \eta) \log \left( 1+ \sum_{s=1}^{t-1}\frac{L_\mu }{\lambda {\color{BlueViolet}g(\tau_s)}}\right) + 4 \lambda S^2}.
    \end{equation}
    Then, we have $\mathbb{P}\big( \forall t \geq 1, \vtheta_\star \in \mathcal{C}_t(\delta) \big) \geq 1 - \delta$. 
\end{theorem}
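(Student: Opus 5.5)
Our plan is to adapt the one-step OMD analysis of \texttt{GLB-OMD}~\citep{zhang2025onepass}, carrying the corruption as an additive perturbation of the loss gradient. We split the weighted loss as $\tilde{\ell}_t(\vtheta) = \ell^w_t(\vtheta) - \frac{w_t c_t}{g(\tau_t)}\langle \vx_t, \vtheta\rangle$. Here $\ell^w_t$ is the weighted negative log-likelihood built from the clean reward $r_t$. The two losses have the same Hessian, since $c_t$ only enters linearly. Hence the local quadratic models, and in particular $\mH_t$, are unaffected. The only change is the extra linear term $-\frac{w_t c_t}{g(\tau_t)}\vx_t$ in the gradient.

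\textbf{Step 1: one-step inequality.} We write the first-order optimality condition of \eqref{eq:OMD-estimator} tested against $\vtheta_\star \in \Theta$. Together with the standard three-point identity, this gives
\[
\tfrac{1}{2\eta}\|\vtheta_{t+1}-\vtheta_\star\|_{\mH_{t+1}}^2 \le \tfrac{1}{2\eta}\|\vtheta_t-\vtheta_\star\|_{\mH_t}^2 + \langle \nabla \tilde\ell_t(\vtheta_t),\vtheta_\star-\vtheta_{t+1}\rangle + (\text{quadratic terms}).
\]
We then follow \citet{zhang2025onepass} to handle the quadratic terms. Self-concordance (\Cref{assumption:link-function}) lets us compare $\nabla^2\tilde\ell_t$ evaluated at $\vtheta_t$, $\vtheta_{t+1}$, $\vtheta_\star$ up to factors $e^{\pm R_s\cdot}$. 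The condition $\lambda\ge 36\eta^2\alpha^2R_s^2S^2L_\mu^2$ is used here. Because $w_t\|\vx_t\|_{\mH_t^{-1}} \le \alpha g(\tau_t)$, the single-step displacement $|\langle\vx_t,\vtheta_{t+1}-\vtheta_t\rangle|$ stays $O(1/R_s)$. This keeps all curvature ratios bounded by constants absorbed into $\eta=1+R_sS$.

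\textbf{Step 2: telescoping and martingale term.} Summing over $s<t$ reduces the clean part to three pieces:
\begin{itemize}
\item a martingale $\sum_s \frac{w_s\varepsilon_s}{g(\tau_s)}\langle\vx_s,\vtheta_{s+1}-\vtheta_\star\rangle$, with $\varepsilon_s=r_s-\mu(\langle \vx_s,\vtheta_\star\rangle)$;
\item a negative term $-\frac{1}{2}\sum_s\|\vtheta_{s+1}-\vtheta_\star\|^2_{\nabla^2\ell^w_s}$ coming from the strong convexity of the quadratic model;
\item a sum of terms $\eta\|\nabla\ell^w_s\|^2_{\mH_{s+1}^{-1}}$-type.
\end{itemize}
The variance of the martingale increment is $\frac{w_s^2\dmu_s}{g(\tau_s)}\langle\vx_s,\cdot\rangle^2$, which is dominated by $w_s\|\cdot\|^2_{\nabla^2\ell^w_s}$. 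Since $\vtheta_{s+1}$ depends on $\varepsilon_s$, we handle it as in \citet{zhang2025onepass}: we compare against the predictable point $\vtheta_s$ or a ghost update, and apply a Freedman/self-normalized bound for the GLM noise. The negative term absorbs the variance, leaving $2\eta\log(1/\delta)$. The gradient-norm sum is controlled by the elliptical potential lemma for $\mH_t$. Its increments are at most $w_sL_\mu\vx_s\vx_s^\top/g(\tau_s)$, and the determinant is at most $(\lambda + \sum_s L_\mu/g(\tau_s))^d$. This produces $d(6\eta^2+\eta)\log(1+\sum_s L_\mu/(\lambda g(\tau_s)))$, and $\lambda\ge14d\eta R_s^2$ makes the residual higher-order terms small. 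The initial term gives $\|\vtheta_1-\vtheta_\star\|^2_{\lambda\mI}\le4\lambda S^2$. Together these yield $\beta_t(\delta)^2$.

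\textbf{Step 3: corruption term (main obstacle).} The corruption contributes $\sum_{s<t}\frac{w_s c_s}{g(\tau_s)}\langle\vx_s,\vtheta_{s+1}-\vtheta_\star\rangle$ to the telescoped inequality. Keeping it additive in the radius (rather than as $\sqrt{C}$ inside the square root) is the delicate part. We do not plug a crude bound into the squared inequality. Instead we run an induction on the event $\|\vtheta_s-\vtheta_\star\|_{\mH_s}\le\beta_s+2\eta\alpha C$ for all $s\le t$. On this event, Cauchy--Schwarz together with $\mH_s\preceq\mH_{s+1}$ (up to the self-concordance constant) gives
\[
\frac{w_s|c_s|}{g(\tau_s)}\,|\langle\vx_s,\vtheta_{s+1}-\vtheta_\star\rangle| \le \alpha|c_s|\,\|\vtheta_{s+1}-\vtheta_\star\|_{\mH_{s}},
\]
using the confidence-weight bound $w_s\|\vx_s\|_{\mH_s^{-1}}\le\alpha g(\tau_s)$. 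Summing yields at most $\alpha C\max_{s\le t}\|\vtheta_s-\vtheta_\star\|_{\mH_s}$. Writing $D_t=\max_{s\le t}\|\vtheta_s-\vtheta_\star\|_{\mH_s}$, we obtain $D_{t}^2\le\beta_t^2+2\eta\alpha C D_t$. Solving the quadratic gives $D_t\le\beta_t+2\eta\alpha C$. We expect the most care to be needed in the $\vtheta_{s+1}$ versus $\vtheta_s$ index mismatch and in the constants. The martingale bound is derived uniformly in $t$ on a single event of probability $1-\delta$, so the induction only requires that event. This establishes $\vtheta_\star\in\mathcal{C}_t(\delta)$ for all $t$.
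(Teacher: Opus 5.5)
\textbf{Verdict: genuine gap in Step 2 (and a false claim in Step 1); Steps 1 and 3 are otherwise in line with the paper.}

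\textbf{What matches.} Your Step 3 is the paper's argument: Cauchy--Schwarz, the weight bound $w_s\lVert\vx_s\rVert_{\mH_s^{-1}}\le\alpha g(\tau_s)$, and the max-type self-bounding inequality solved at the argmax. One simplification: $\mH_s\preceq\mH_{s+1}$ holds exactly, since $\mH_{s+1}=\mH_s+\nabla^2\tilde\ell_s(\vtheta_{s+1})$, so no self-concordance constant and no induction are needed. Your loss split is Eqn.~\eqref{eq:corruption-decompose}, as in the paper.

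\textbf{The gap: Step 2 does not yield the stated $\beta_t(\delta)$.} Your Freedman/ghost-update route replaces the non-predictable $\vtheta_{s+1}$ by $\vtheta_s$ (via Young's inequality), or bounds gradient norms. Either way it leaves noise terms $\sum_s w_s^2\varepsilon_s^2\lVert\vx_s\rVert^2_{\mH_s^{-1}}/g(\tau_s)^2$. Their conditional mean carries the curvature $\dmu(\langle\vx_s,\vtheta_\star\rangle)$, but $\mH_t$ accumulates $\dmu(\langle\vx_s,\vtheta_{s+1}\rangle)$. To apply the elliptical potential lemma to $\mH_t$ you must compare these two curvatures:
\begin{itemize}
\item the multiplicative self-concordance bound costs up to $e^{2R_sS}$;
\item the additive bound leaves sums such as $\sum_s w_s\lVert\vx_s\rVert^2_{\mH_s^{-1}}/g(\tau_s)$, which are only $\kappa d\log$-bounded.
\end{itemize}
That is exactly the dependence the theorem avoids. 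Freedman's inequality for unbounded Gaussian/Poisson noise also needs MGF control that you do not supply. So the constants $2\eta\log\frac{1}{\delta}$ and $d(6\eta^2+\eta)$ are asserted, not derived. Also, \citet{zhang2025onepass} do not use a Freedman/ghost-iterate argument.

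\textbf{The missing idea: the weighted mix loss.} Split the clean inverse regret $\sum_s w_s\ell_s(\vtheta_\star)-w_s\ell_s(\vtheta_{s+1})$ as $Y_t+Z_t$. Here $m_s(P_s)=-\log\E_{\vtheta\sim P_s}[e^{-\ell_s(\vtheta)}]$ and $P_s=\gN(\vtheta_s,\gamma\mH_s^{-1})$ with $\gamma=\frac{3}{2}\eta$.
\begin{itemize}
\item $Y_t$: since $w_s\in(0,1]$ is $\mathcal{F}_s$-measurable and $x\mapsto x^{w_s}$ is concave, conditional Jensen, Fubini and $\int p(r\mid\vx_s,\vtheta)\,dr=1$ make $\exp(Y_t)$ a supermartingale. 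Ville's inequality then gives $Y_t\le\log\frac{1}{\delta}$ for all $t$ (\Cref{lemma:mixture}).
\item $Z_t$: it is bounded deterministically, for whatever corrupted iterate $\vtheta_{s+1}$ the algorithm produces, by $\frac{1}{2\gamma}\sum_s\lVert\vtheta_s-\vtheta_{s+1}\rVert^2_{\mH_s}+(2\gamma+\frac{1}{2})\log\frac{\det\mH_{t+1}}{\det\mH_1}$ (\Cref{lemma:mixture-ineq}). The determinant uses Hessians at the iterates, so no comparison with the curvature at $\vtheta_\star$ ever arises, and noise never interacts with corruption.
\item Constants: with $\gamma=\frac{3}{2}\eta$, the term $\frac{\eta}{\gamma}\sum_s\lVert\vtheta_s-\vtheta_{s+1}\rVert^2_{\mH_s}$ cancels exactly against the $-\frac{2}{3}\sum_s$ from \Cref{lemma:inverse-regret}. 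What remains, $2\eta\log\frac{1}{\delta}+(4\gamma+1)\eta\log\frac{\det\mH_{t+1}}{\det\mH_1}+4\lambda S^2$, is precisely $\beta_{t+1}(\delta)^2$.
\end{itemize}

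\textbf{The false claim in Step 1.} You claim $|\langle\vx_t,\vtheta_{t+1}-\vtheta_t\rangle|=O(1/R_s)$. This is false: the displacement scales with the residual $\tilde r_t-\mu(\langle\vx_t,\vtheta_t\rangle)$, which is unbounded under Gaussian/Poisson noise and arbitrary $c_t$. The paper never needs such a bound. It controls the second-order remainder additively using $|\ddmu|\le R_sL_\mu$, $|\langle\vx_s,\vtheta_{s+1}-\vtheta_\star\rangle|\le 2S$ and the weight bound. This gives $\frac{\alpha R_sSL_\mu}{\sqrt{\lambda}}\lVert\vtheta_{s+1}-\vtheta_s\rVert^2_{\mH_s}$, which the condition $\lambda\ge 36\eta^2\alpha^2R_s^2S^2L_\mu^2$ absorbs.
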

    We provide the proof sketch in \Cref{sec:proof-sketch}, highlighting the technical novelties.
    We provide the full detailed proof in \Cref{app:proof-cs}.

Building upon this confidence sequence, we present the regret upper bound attained by \texttt{HCW-GLB-OMD}:
\begin{theorem}\label{theorem:regret}
    Let $\delta \in (0, 1)$.
    Set $\eta, \lambda$ as in \Cref{theorem:confidence-sequence} and $\alpha = \Theta\big( \sqrt{d} \:(C \vee 1)^{-1} \big)$.
    Then, with probability at least $1 - \delta$, \Cref{algorithm:CW-GLB-OMD} attains the following regret upper bound:
    \begin{equation}
        \Reg(T) \lesssim_{\log} d \sqrt{ \sum_{t=1}^T {\color{BlueViolet}g(\tau_t)} {\color{blue}\dmu_{t,\star}}} +
        d^2 {\color{BlueViolet}g_{\max}} {\color{red}\kappa} + d ( {\color{BlueViolet}g_{\max}}+ {\color{red}\kappa}) C.
    \end{equation}
\end{theorem}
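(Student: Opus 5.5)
We work on the event of \Cref{theorem:confidence-sequence}, which holds with probability at least $1-\delta$. On this event $\vtheta_\star \in \mathcal{C}_t(\delta)$ for all $t$. Let $\tilde\vtheta_t \in \mathcal{C}_t(\delta)$ be the optimistic parameter chosen together with $\vx_t$, and write $\gamma_t := \beta_t(\delta) + 2\eta\alpha C$. The argument has three steps: a per-round regret decomposition, a split of the rounds according to the confidence weight, and elliptical-potential bounds for each group.

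\textbf{Step 1: per-round decomposition.} Since $\mu$ is increasing and the arm choice is optimistic, the instantaneous regret is at most $\mu(\langle \vx_t,\tilde\vtheta_t\rangle)-\mu(\langle \vx_t,\vtheta_\star\rangle)$. We Taylor-expand $\mu$ to second order around $\langle\vx_t,\vtheta_\star\rangle$ and use self-concordance, following \citet{abeille2021logistic,zhang2025onepass}. This gives
\begin{equation}
r_t \lesssim \dmu(\langle\vx_t,\vtheta_\star\rangle)\,\gamma_t\lVert\vx_t\rVert_{\mH_t^{-1}} + R_s\gamma_t^2\lVert\vx_t\rVert_{\mH_t^{-1}}^2 ,
\end{equation}
using $|\langle \vx_t,\tilde\vtheta_t-\vtheta_\star\rangle|\le 2\gamma_t\lVert\vx_t\rVert_{\mH_t^{-1}}$. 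We also cap $r_t\le 2L_\mu S$. Next we replace $\dmu(\langle\vx_t,\vtheta_\star\rangle)$ by $\dmu_{t,\star}$ plus a correction. Again by self-concordance and the mean value theorem, $\dmu(\langle\vx_t,\vtheta_\star\rangle)\le \dmu_{t,\star}+R_s\,(\mu(\langle\vx_{t,\star},\vtheta_\star\rangle)-\mu(\langle\vx_t,\vtheta_\star\rangle))$. The correction is proportional to $r_t$ itself, so it is absorbed by the usual self-bounding trick from \citet{abeille2021logistic}.

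\textbf{Step 2: split rounds by the weight $w_t$.}

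(a) Rounds with $w_t=1$. Here $\lVert\vx_t\rVert_{\mH_t^{-1}}\le \alpha g(\tau_t)$. The Hessian increment is $\nabla^2\tilde\ell_t(\vtheta_{t+1})=\frac{\dmu(\langle\vx_t,\vtheta_{t+1}\rangle)}{g(\tau_t)}\vx_t\vx_t^\top$. Set $\tilde\vx_t=\sqrt{\dmu(\langle\vx_t,\vtheta_{t+1}\rangle)/g(\tau_t)}\,\vx_t$. Then Cauchy--Schwarz gives
\begin{equation}
\sum_t \dmu_t\lVert\vx_t\rVert_{\mH_t^{-1}} \le \sqrt{\sum_t g(\tau_t)\dmu_t}\,\sqrt{\sum_t \lVert\tilde\vx_t\rVert^2_{\mH_t^{-1}}}.
\end{equation}
The elliptical potential lemma bounds the last factor by $\tilde{\gO}(\sqrt d)$; the logarithmic term matches the one in $\beta_t$. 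Multiplying by $\gamma_t\lesssim\sqrt d+\sqrt d$ (using $\alpha C\asymp\sqrt d$) yields the leading term $d\sqrt{\sum_t g(\tau_t)\dmu_{t,\star}}$. Two mismatches must be handled: $\dmu(\langle\vx_t,\vtheta_{t+1}\rangle)$ versus $\dmu(\langle\vx_t,\vtheta_\star\rangle)$, and the requirement $\lVert\tilde\vx_t\rVert_{\mH_t^{-1}}\le 1$ for the potential lemma. We control both by self-concordance, using $|\langle\vx_t,\vtheta_{t+1}-\vtheta_\star\rangle|\le \gamma_{t+1}\lVert\vx_t\rVert_{\mH_t^{-1}}$ and bounding the ratio of slopes by $e^{R_s\cdot}$ factors. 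Rounds where this quantity is large, or where $\lVert\tilde\vx_t\rVert>1$, are counted as ``bad'' rounds. There are $\tilde{\gO}(d g_{\max}\kappa)$ of them (potential counting with $\kappa$ lower-bounding $\dmu$). Each costs at most $\gamma_t^2\lVert\vx_t\rVert^2_{\mH_t^{-1}}$ or $\gO(1)$. Together with the second-order term of Step 1, this produces $d^2g_{\max}\kappa$.

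(b) Rounds with $w_t<1$. Here $\lVert\vx_t\rVert_{\mH_t^{-1}}>\alpha g(\tau_t)$ and $w_t\lVert\vx_t\rVert_{\mH_t^{-1}}=\alpha g(\tau_t)$. Write $\lVert\vx_t\rVert_{\mH_t^{-1}}=\lVert\vx_t\rVert^2_{\mH_t^{-1}}/\lVert\vx_t\rVert_{\mH_t^{-1}}\le w_t\lVert\vx_t\rVert^2_{\mH_t^{-1}}/(\alpha g(\tau_t))$. The regret is then at most $\gamma_t\sum_t w_t\lVert\vx_t\rVert^2_{\mH_t^{-1}}L_\mu/(\alpha g(\tau_t))$. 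The weighted increments are $w_t\dmu\,\vx_t\vx_t^\top/g(\tau_t)$, so converting $\lVert\vx_t\rVert^2$ into this weighted potential costs a factor $\kappa$. The weighted elliptical potential is $\tilde{\gO}(d)$, giving a bound of order $\gamma\,\kappa\, d/\alpha$. With $\gamma\asymp\sqrt d$ and $\alpha\asymp\sqrt d/(C\vee1)$, this is $d\kappa C$. The $g_{\max}$ part of the corruption term arises from the clipped rounds in (a), where $\lVert\vx_t\rVert\le\alpha g(\tau_t)$ enters the bad-round count.

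\textbf{Step 3: choose $\alpha$.} Setting $\alpha=\Theta(\sqrt d/(C\vee1))$ makes $2\eta\alpha C=\gO(\sqrt d)$, which balances the radius inflation against the $d\kappa C/\alpha\cdot\sqrt d$ cost. Summing the three contributions gives the claimed bound.

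\textbf{Main obstacle.} The hardest part is Step 2(a), because $\mH_t$ is built from $\dmu$ evaluated at the online iterates $\vtheta_{s+1}$, not at $\vtheta_\star$. Turning $\sum_t \dmu(\langle\vx_t,\vtheta_\star\rangle)\lVert\vx_t\rVert_{\mH_t^{-1}}$ into $\sqrt{\sum_t g\,\dmu_{t,\star}}$ therefore requires a careful self-concordance comparison with the per-step radius. The plan is to follow the corresponding argument of \citet{zhang2025onepass} and insert the weights.
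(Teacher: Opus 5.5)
Your plan is correct and follows the paper's overall architecture:
\begin{itemize}
\item optimism plus a second-order Taylor expansion with radius $\beta_t(\delta)+2\eta\alpha C$;
\item the swap $\dmu(\langle\vx_t,\vtheta_\star\rangle)\le\dmu_{t,\star}+R_s\big(\mu(\langle\vx_{t,\star},\vtheta_\star\rangle)-\mu(\langle\vx_t,\vtheta_\star\rangle)\big)$, closed by solving a self-bounding inequality in $\Reg(T)$;
\item the split into rounds with $w_t=1$ and $w_t<1$;
\item Cauchy--Schwarz against the Hessian increments on the former, and the identity $w_t\lVert\vx_t\rVert_{\mH_t^{-1}}=\alpha g(\tau_t)$ on the latter.
\end{itemize}

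Two local steps differ.

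\textbf{Slope mismatch.} The paper splits rounds by whether $\dmu(\langle\vx_t,\vtheta_\star\rangle)\ge\dmu(\langle\vx_t,\vtheta_{t+1}\rangle)$. It then uses the additive bound $\dmu(\langle\vx_t,\vtheta_\star\rangle)\le\dmu(\langle\vx_t,\vtheta_{t+1}\rangle)+R_sL_\mu(\beta_{t+1}(\delta)+2\eta\alpha C)\lVert\vx_t\rVert_{\mH_t^{-1}}$, so the excess becomes a second-order term $\propto\lVert\vx_t\rVert^2_{\mH_t^{-1}}$. Your multiplicative comparison $e^{R_s\gamma_{t+1}\lVert\vx_t\rVert_{\mH_t^{-1}}}$ with good and bad rounds charges the bad rounds to the same sum $\gamma^2\sum_{w_t=1}\lVert\vx_t\rVert^2_{\mH_t^{-1}}$. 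That sum is bounded via $\mH_t\succeq\lambda\mI_d+\kappa^{-1}\sum_{s<t}w_s\vx_s\vx_s^\top/g(\tau_s)$, so the outcome is identical. One arithmetic slip: what you can actually show is $\tilde{\gO}(d^2g_{\max}\kappa)$ bad rounds, not $\tilde{\gO}(dg_{\max}\kappa)$. What matters is their total cost, which is $\tilde{\gO}(d^2g_{\max}\kappa)$.

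\textbf{First-order term on $w_t<1$ rounds.} The paper keeps $\dmu(\langle\vx_t,\vtheta_{t+1}\rangle)$ there and bounds it by $\tilde{\gO}(dg_{\max}/\alpha)$ with no $\kappa$; this is where its $dg_{\max}C$ term comes from. In the paper, $\kappa$ enters the corruption term only through the second-order sum, via the index trick $\mH_{t_i}\succeq\lambda\mI_d+\frac{\sqrt\lambda\,\alpha}{\kappa}\sum_{j<i}\vx_{t_j}\vx_{t_j}^\top$. You instead bound $\dmu\le L_\mu$ and pay $\kappa$ already in the first-order term. That yields $d\kappa C$ directly, and the $dg_{\max}C$ term never arises in your route. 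It is harmless to carry it, since the statement is an upper bound, but your explanation of it via clipped rounds in (a) is unnecessary.

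\textbf{What you leave unaddressed.} You do not treat the second-order term $R_s\gamma_t^2\lVert\vx_t\rVert^2_{\mH_t^{-1}}$ on rounds with $w_t<1$. It is not covered by the $d^2g_{\max}\kappa$ bound in (a), because on these rounds $g(\tau_t)/w_t=\lVert\vx_t\rVert_{\mH_t^{-1}}/\alpha$ is not controlled by $g_{\max}$; those directions are only weakly represented in $\mH_t$. Your own trick from (b) fixes it:
\begin{itemize}
\item $\lVert\vx_t\rVert^2_{\mH_t^{-1}}\le\lambda^{-1/2}\lVert\vx_t\rVert_{\mH_t^{-1}}=\lambda^{-1/2}w_t\lVert\vx_t\rVert^2_{\mH_t^{-1}}/(\alpha g(\tau_t))$;
\item converting to the weighted Hessian potential costs a factor $\kappa$;
\item the total is $\tilde{\gO}(\gamma^2\kappa d/(\alpha\sqrt\lambda))=\tilde{\gO}(d\kappa(C\vee1))$, using $\gamma^2\asymp d$ and $\lambda\asymp d$.
\end{itemize}
This matches the paper's bound for that term.
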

\begin{proof}[Proof Sketch]
    We outline the main arguments here, deferring the full detailed proof to \Cref{app:proof-regret}.
    The primary technical challenge stems from the structure of the Hessian, which incorporates both the local curvature $\dmu$ (unlike the unweighted linear design matrix in standard linear bandits) and the adaptive confidence weights ${\color{magenta}w_t}$.
    To address this, we adopt the analysis of \citet{abeille2021logistic,zhang2025onepass} to extract the curvature-dependent leading term ${\color{blue}\dmu_{t,\star}}$ \emph{and} the time-varying dispersion ${\color{BlueViolet}g(\tau_t)}$ via self-concordance, and integrate the weighting technique of \citet{he2022corruption} to ensure corruption robustness.
    Crucially, following \citet{he2022corruption}, we partition the time horizon $[T]$ into two disjoint sets: those where ${\color{magenta}w_t} = 1$ and $w_t<1$ \textbf{(\Cref{lemma:decomposition})}.
    Throughout, careful application of the elliptical potential lemma~\citep[Lemma 11]{abbasiyadkori2011linear} is required to accommodate the potentially time-varying $\tau_t$, which ultimately yields the stated bound.
\end{proof}
The local curvature ${\color{blue} \dmu_{t,\star} \triangleq \dmu(\langle \vx_{t,\star}, \vtheta_\star \rangle)}$ reflects the statistical hardness of the given \textbf{GLB} instance. 
It arises from the Taylor's theorem~\citep{klambauer} applied to the instantaneous regret at time $t$:
\begin{equation}
    \mu (\langle \vx_{t,\star}, \vtheta_\star \rangle) - \mu ( \langle \vx_t, \vtheta_\star \rangle) = {\color{blue}\dmu(\langle \vx_{t,\star}, \vtheta_\star \rangle)} \langle \vx_{t,\star} - \vx_t, \vtheta_\star \rangle + \cdots,
\end{equation}
where $\cdots$ represents the Lagrange integral remainder to be controlled via self-concordance (\Cref{assumption:link-function}).
Thus, the leading term depends on the local slope ${\color{blue}\dmu(\langle \vx_{t,\star}, \vtheta_\star \rangle)}$ at the optimal arm, rather than the worst-case curvature parameter ${\color{red}\kappa}$.

\subsection{\texorpdfstring{Proof Sketch of \Cref{theorem:confidence-sequence}: Corruption-Robust Confidence Sequence}{Proof Sketch of Theorem 5: Corruption-Robust Confidence Sequence}}
\label{sec:proof-sketch}

\paragraph{Technical Challenges.}
Given our algorithmic design, our analysis largely follows the proof framework of \citet{zhang2025onepass}. We begin with the standard one-step lemma for OMD~\citep[Lemma 4]{zhang2025onepass}, which isolates the so-called ``inverse regret'' term. This term is subsequently decomposed by adding and subtracting mix losses~\citep{vovk1998,vovk2001,prediction-learning-games}; the first resulting term is bounded via Ville's inequality~\citep{Ville1939}, while the second term (the gap between the cumulative mix loss and the cumulative loss of the OMD estimators) is bounded by selecting an appropriate distribution for the mix loss, specifically a normal distribution with the estimator as mean and inverse Hessian as covariance, following \citet[Lemma 6]{zhang2025onepass}.

However, several subtleties prevent a trivial extension of existing proof techniques.
First, replacing linear design matrix of the confidence weighting in \citet{he2022corruption} with Hessian is not directly analyzable with a batched MLE: the weight is chosen using local information at time $s$, while the confidence analysis involves a later global estimate $\hat{\vtheta}_t$; we return to this point in \Cref{rmk:online}.
Second, because our likelihoods are computed with respect to \emph{corrupted} rewards, it is essential to meticulously track the corruption-related terms throughout the OMD analysis and ensure they remain additively separable.
Third, many of the aforementioned steps (e.g., the definition of the mix losses, the application of Ville's inequality) require careful modification and verification to accommodate our \emph{weighted} likelihood losses $\tilde{\ell}_s$.
We also note that prior work on corrupted bandits typically relies on exact MLE computation~\citep{he2022corruption,ye2023corruption,yu2025corruption}, whereas our algorithm does not.
Finally, bounding the residual corruption term requires a technique not covered by \citet{zhang2025onepass}.
We achieve this by solving a maximum-involved self-bounding inequality (Eqn.~\eqref{eqn:self-bounding}), a technique that may be of independent technical interest.

\paragraph{Proof Sketch.}
We now provide a sketch of the proof, divided into three steps.

\textbf{Step (i): Error decomposition and isolating corruptions.}
We begin by noting that, due to the specific form of the GLM (Eqn.~\eqref{eqn:GLM}), the difference of weighted likelihoods can be decomposed as:
\begin{equation}
\label{eq:corruption-decompose}
    \tilde{\ell}_s (\vtheta) - \tilde{\ell}_s (\vtheta')
    = {\color{magenta}w_s} \ell_s(\vtheta) - {\color{magenta}w_s} \ell_s(\vtheta')
    + \frac{{\color{magenta}w_s}}{{\color{BlueViolet}g(\tau_s)}} c_s \big\langle \vx_s, \vtheta' - \vtheta \big\rangle,
\end{equation}
where $\ell_s(\vtheta) := \frac{m(\langle \vx_s, \vtheta \rangle) - r_s \langle \vx_s, \vtheta \rangle}{{\color{BlueViolet}g(\tau_s)}}$ is the negative \emph{unweighted} log-likelihood of the \emph{true} reward $r_s$.

Combining this with the one-step lemma for OMD and self-concordance \textbf{(\Cref{lemma:inverse-regret})}, we obtain:
\begin{align}
    \lVert \vtheta_{t+1} - \vtheta_\star \rVert_{\mH_{t+1}}^2 & \leq 2\eta \underbrace{\left( \sum_{s=1}^{t} {\color{magenta}w_s} {\ell}_s ( \vtheta_\star) - \sum_{s=1}^{t} {\color{magenta}w_s} {\ell}_s (\vtheta_{s+1}) \right)}_{(*)} + 4\lambda S^2 - \frac{2}{3}\sum_{s=1}^{t} \lVert \vtheta_s - \vtheta_{s+1} \rVert_{\mH_s}^2 \nonumber \\
    &\qquad + 2\eta \underbrace{\sum_{s=1}^{t} \frac{{\color{magenta}w_s}}{{\color{BlueViolet}g(\tau_s)}} c_s \langle \vx_s, \vtheta_{s+1} - \vtheta_\star \rangle}_{(**)}. \label{eqn:OMD}
\end{align}
Note that the corruption-related term $(**)$ is  additively separated from the true reward-related term $(*)$. In \citet{zhang2025onepass}, the term $(*)$ (without the weights) is referred to as the ``inverse regret.''

\textbf{Step (ii): Bounding the inverse regret $(*)$ with supermartingale argument.}
Because the weights ${\color{magenta}w_s}$ may be strictly less than $1$, we introduce a weighted version of the mix loss.
We define the weighted mix loss directly, rather than inside the exponential:
\begin{equation}
     (*) = \underbrace{\sum_{s=1}^{t} {\color{magenta}w_s} {\ell}_s ( \vtheta_\star) - \sum_{s=1}^{t} {\color{magenta}w_s} m_s(P_s)}_{\triangleq Y_t} + \underbrace{\sum_{s=1}^{t} {\color{magenta}w_s} m_s(P_s)- \sum_{s=1}^{t} {\color{magenta}w_s} {\ell}_s (\vtheta_{s+1})}_{\triangleq Z_t},
\end{equation}
where the mix-loss is defined as $m_s(P_s) = -\log \left( \mathbb{E}_{\vtheta \sim P_s} \left[ \exp ( - \ell_s (\vtheta) )\right] \right)$, and we set $P_s = \mathcal{N} ( \vtheta_{s}, \gamma \mH_{s}^{-1})$ for some $\gamma > 0$ to be determined later.

Consider the stochastic process $M_t \triangleq \exp(Y_t)$, which is essentially the likelihood ratio of the observations up to time $t$ between the true model $\vtheta_\star$ and a ``mixture'' distribution $\vtheta \sim P$, \emph{weighted (exponentiated)} by ${\color{magenta}w_s}$.
If ${\color{magenta}w_s} = 1$, this is precisely a martingale, which is a well-known fact in the likelihood ratio-based confidence sequence literature~\citep{emmenegger2023likelihood,lee2024glm,kirschner2025sequential}, dating back at least 50 years in the classical statistics literature~\citep{darling1967CS,darling1967log,robbins1972class,lai1976CS}.
However, in our case, it may be that ${\color{magenta}w_s} < 1$, in which case the martingale property does not hold, which deviates from the original proof by \citet{zhang2025onepass}.
Here, the critical observation is that the mapping $x \mapsto x^{{\color{magenta}w_s}}$ is concave in the domain $x \in [0, \infty)$.
Thus, by the (conditional) Jensen's inequality\footnote{We consider its ``expectation version'': $\E[f(X)] \leq f(\E[X])$ for any concave $f$ and random variable $X$.} and Fubini's theorem to deal with the mixture distribution $P_s$, we show that \textbf{\emph{\color{teal}$(M_t)_{t \geq 1}$} is a supermartingale.} \textbf{(\Cref{lemma:mixture})}
Thus, by applying Ville's inequality~\citep{Ville1939}, we have that $\sP(Y_t \leq \log\frac{1}{\delta}, \ \forall t \geq 1) \geq 1 - \delta$.
The term $Z_t$ is bounded analogously to \citet[Lemma 6]{zhang2025onepass}, as their arguments easily extend to the case with weights ${\color{magenta}w_s}$ \textbf{(\Cref{lemma:mixture-ineq})}.
Consequently, we show that $2\eta (*) + 4\lambda S^2 - \frac{2}{3}\sum_{s=1}^{t} \lVert \vtheta_s - \vtheta_{s+1} \rVert_{\mH_s}^2 \leq \beta_{t+1}(\delta)^2$ for all $t \geq 1$, with probability at least $1 - \delta$, where $\beta_{t+1}(\delta)$ is defined in \Cref{theorem:confidence-sequence}.

\textbf{Step (iii): Bounding the corruption term $(**)$ via self-bounding inequality.}
Substituting the bounds back into Eqn.~\eqref{eqn:OMD}, we have:
\begin{align}
    \lVert \vtheta_{t+1} - \vtheta_\star \rVert_{\mH_{t+1}}^2 &\leq \beta_{t+1}(\delta)^2 + 2\eta \sum_{s=1}^t \frac{{\color{magenta}w_s}}{g(\tau_s)} |c_s| \cdot \bignorm{\vx_s}_{\mH_s^{-1}} \bignorm{\vtheta_{s+1} - \vtheta_\star}_{\mH_s} \tag{Cauchy-Schwarz} \\
    &\leq \beta_{t+1}(\delta)^2 + 2\eta \alpha \sum_{s=1}^t |c_s| \cdot \bignorm{\vtheta_{s+1} - \vtheta_\star}_{\mH_s} \tag{Def. of ${\color{magenta}w_s}$ (Eqn.~\eqref{eq:corrupted-loss})} \\
    &\leq \beta_{t+1}(\delta)^2 + 2\eta \alpha \sum_{s=1}^t |c_s| \cdot \underbrace{\bignorm{\vtheta_{s+1} - \vtheta_\star}_{\mH_{s+1}}}_{\triangleq X_{s+1}} \tag{\color{teal}$\mH_{s+1} \succeq \mH_s$} \\
    &\leq \beta_{t+1}(\delta)^2 + 2\eta\alpha C \max_{s \in [t+1]} X_s.
\end{align}
This yields a maximum-involved \textbf{\emph{\color{teal}self-bounding inequality}}, a technique of independent technical interest introduced in this work to effectively control the corruption-related term:
\begin{equation}
\label{eqn:self-bounding}
    X_t^2 \leq \beta_t(\delta)^2 + 2\eta\alpha C \max_{s \in [t]} X_s.
\end{equation}
For each $t \geq 1$, let $\zeta_t := \argmax_{s \in [t]} X_s$.
Then:
\begin{equation}
    X_{\zeta_t}^2 \leq \beta_{\zeta_t}(\delta)^2 + 2 \eta \alpha C \max_{s \in [\zeta_t]} X_s
    = \beta_{\zeta_t}(\delta)^2 + 2 \eta \alpha C X_{\zeta_t}.
\end{equation}
This is a quadratic inequality in $X_{\zeta_t}$, which yields $X_{\zeta_t} \leq \beta_{\zeta_t}(\delta) + 2\eta \alpha C$. Since $X_t \leq X_{\zeta_t}$ and $\beta_{\zeta_t}(\delta) \leq \beta_t(\delta)$ (as $\beta_t(\delta)$ is monotonically increasing in $t$), the result follows.
\qed

\begin{remark}[Necessity of Online Estimator]\label{rmk:online}
    The online estimator is necessary not only for computational efficiency but also for statistical validity in the presence of corruption.
    If we use a batched MLE approach as in \citet{abeille2021logistic, lee2024glm, lee2024logistic}, we would analyze the difference $\tilde{\mathcal{L}}_t ( \vtheta) - \tilde{\mathcal{L}}_t (\hat{\vtheta}_t)$, where $\tilde{\mathcal{L}}_t(\vtheta) \coloneq -\sum_{s=1}^{t} \tilde{\ell}_s(\vtheta)$.
    This leads to a cross-term of the form ${\color{magenta}w_s} c_s \langle {\color{blue}\vx_s}, {\color{red}\hat{\vtheta}_t} - \vtheta_\star \rangle$, which mixes time steps $s$ and $t$.
    This ``subscript disagreement'' is problematic: while the arm $\vx_s$ and the weight ${\color{magenta}w_s}$ are determined using local information at time step $s$ (specifically ${\color{blue}\vx_s}$ and ${\color{blue}\hat{\vtheta}_{s}}$), the global estimate ${\color{red}\hat{\vtheta}_t}$ has no explicit relationship with ${\color{blue}\vx_s}$.
    This disagreement prevents further analysis of the corruption's impact on the global estimator; for instance, properties such as {\color{teal}$\mH_{s+1} \succeq \mH_s$} cannot be exploited.
    By using an \textbf{online estimator}, we resolve this issue by replacing the global term with the local update ${\color{magenta}w_s} c_s \langle {\color{blue}\vx_s}, {\color{blue}\vtheta_{s+1}} - \vtheta_\star \rangle$. Now the weight, arm, estimator error, and Hessian are aligned at the same local step. This allows us to bound the corruption term locally while fully exploiting the curvature of the link function via the accumulated Hessian $\mH_s$.
    
\end{remark}

\section{A Unified Regret Lower Bound}
\label{sec:lower-bound}
We now complement our regret upper bound with a unified lower bound for any instance of self-concordant \textbf{\textit{heteroskedastic GLB with adversarial corruptions}}.
We denote $\Reg^\pi( T; \tilde{\vtheta}_\star, \{\gX_t\}_{t \in [T]} )$ as the regret of an algorithm $\pi$ under the instance with parameter $\tilde{\vtheta}_\star$ and arm-set sequence $\{\gX_t\}_{t \in [T]}$.

We begin by establishing the following instance-specific, corruption-free lower bound:
\begin{theorem}[Local Minimax Lower Bound]
\label{thm:lower-bound}
    Let $\vtheta_\star \in \gB^d(S)$ and $\color{BlueViolet}\{\tau_t\}_{t=1}^T$ be given, and denote ${\color{BlueViolet}g_{\max} \coloneq \max_{t \in [T]} g(\tau_t)}$.
    Suppose that $d \geq \ceil{\log_2 T} + 1$.
    There exist absolute constants $c_1, c_2 > 0$ such that the following holds:
    denoting ${\color{blue}\vx_{t,\star} = \argmax_{\vx \in \gX_t} \langle \vx, \vtheta_\star \rangle}$, provided that 
    $\sqrt{ \sum_{t=1}^T {\color{BlueViolet}g(\tau_t)} {\color{blue}\dmu(\langle \vx_{t,\star}, \vtheta_\star \rangle)} } \geq 4 d \sqrt{{\color{BlueViolet}g_{\max}} L_\mu}$,
    then
    \begin{equation}
        \sup_{\{\gX_t\}_t : \gX_t \subset \gB^d(1)} \inf_\pi \sup_{\bignorm{\tilde{\vtheta}_\star - \vtheta_\star}_2^2 \leq \epsilon} \Reg^\pi\left( T; \tilde{\vtheta}_\star, \{\gX_t\}_{t \in [T]} \right) \geq \frac{c_2 d \sqrt{\sum_{t=1}^T {\color{BlueViolet}g(\tau_t)} {\color{blue}\dmu(\langle \vx_{t,\star}, \vtheta_\star \rangle)}}}{\ceil{\log T}},
    \end{equation}
    where the squared perturbation radius $\epsilon$ adapts to the uniformity of the dispersion sequence as 
    \begin{equation}
        \epsilon = \gO \left( d \sqrt{ \frac{{\color{red}\kappa} {\color{BlueViolet}g_{\max}}}{T} } \cdot U(\{g_t\}) \right).
    \end{equation}
    Here, $U(\{g_t\}) \coloneq \frac{1}{\ceil{\log_2 T}} \sum_{\ell=1}^{\ceil{\log_2 T}} \indicator[|\gT^{(\ell)}| > 0] \sqrt{\frac{2^\ell}{|\gT^{(\ell)}|}}$ is the \textbf{uniformity coefficient}, and $\gT^{(\ell)}$ is the $\ell$-th dyadic bin $\gT^{(\ell)} \coloneq \left\{ t : \frac{2^{\ell - 1} g_{\max}}{T} < g_t \leq \frac{2^\ell g_{\max}}{T} \right\}$.
\end{theorem}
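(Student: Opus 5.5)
We will reduce the heteroskedastic GLB problem to a collection of independent low-dimensional sub-problems, one per dyadic dispersion bin, and apply a hypercube (Assouad-type) argument within each bin in the style of \citet{abeille2021logistic} and \citet{he2025lowerbound}.

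\textbf{Step 1: splitting rounds and coordinates by bin.} Partition $[T]$ into the dyadic bins $\gT^{(\ell)}$, $\ell \in [\ceil{\log_2 T}]$. Rounds with $g_t \le g_{\max}/T$ contribute at most $\sqrt{g_{\max}L_\mu}$ to the target sum, and the hypothesis $\sqrt{\sum_t g_t\dmu_{t,\star}} \ge 4d\sqrt{g_{\max}L_\mu}$ lets us discard them. Since $d \ge \ceil{\log_2 T}+1$, we reserve one coordinate $\ve_1$ as an anchor along $\vtheta_\star$, and split the remaining $d-1$ coordinates into $\ceil{\log_2 T}$ blocks $D_\ell$ of size $d_\ell \gtrsim d/\ceil{\log_2 T}$. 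The arm set in a round $t \in \gT^{(\ell)}$ is
\[
\gX_t = \Big\{ \vx_0 + \rho_\ell \textstyle\sum_{i \in D_\ell} \sigma_i \ve_i : \sigma \in \{\pm 1\}^{d_\ell} \Big\},
\]
where $\vx_0$ is chosen so that $\langle \vx_0, \vtheta_\star\rangle$ realizes the prescribed optimal-arm value, which fixes $\dmu_{t,\star}$. The hard instances are
\[
\tilde\vtheta_\star = \vtheta_\star + \Delta_\ell \textstyle\sum_{i\in D_\ell} \varsigma_i \ve_i, \qquad \varsigma \in \{\pm1\}^{d_\ell}.
\]
Because distinct bins act on disjoint coordinates, the regret decomposes across bins, and within a bin it decomposes across coordinates.

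\textbf{Step 2: per-bin two-point bound.} A local Taylor expansion with self-concordance (\Cref{assumption:link-function}) keeps $\dmu$ within constant factors of $\dmu_{t,\star}$ in the neighbourhood, provided $\Delta_\ell$ is small. Each wrong sign then costs about $\dmu_{t,\star}\rho_\ell\Delta_\ell$ of regret. The KL divergence between the GLM rewards under two parameters differing in coordinate $i$ is $\lesssim \rho_\ell^2\Delta_\ell^2\dmu_{t,\star}/g_t$ per round, using that the GLM KL is a Bregman divergence of $m$ scaled by $1/g_t$. Within a bin $g_t \approx 2^\ell g_{\max}/T$ up to a factor of two, so the dispersion is effectively constant. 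Standard Assouad (averaging over $\varsigma$ and applying Pinsker or Bretagnolle--Huber per coordinate) with $\Delta_\ell \asymp \sqrt{g^{(\ell)}/(\rho_\ell^2\sum_{t\in\gT^{(\ell)}}\dmu_{t,\star})}$ gives per-bin regret
\[
\gtrsim d_\ell\sqrt{\textstyle\sum_{t\in\gT^{(\ell)}} g_t\dmu_{t,\star}}.
\]

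\textbf{Step 3: aggregation.} Summing over bins and using $\sum_\ell \sqrt{a_\ell} \ge \sqrt{\sum_\ell a_\ell}$ gives
\[
\frac{d}{\ceil{\log T}}\sqrt{\textstyle\sum_t g_t\dmu_{t,\star}},
\]
which is the claimed bound. Step 3 also verifies that $\|\tilde\vtheta_\star-\vtheta_\star\|_2^2 = \sum_\ell d_\ell\Delta_\ell^2$ is at most $\epsilon$. Bounding $\dmu \ge 1/\kappa$ inside $\Delta_\ell$ and using $|\gT^{(\ell)}|$ gives a per-bin contribution of order $\sqrt{\kappa g_{\max}2^\ell/(T|\gT^{(\ell)}|)}$, whose average over bins is the uniformity coefficient $U(\{g_t\})$. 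The same bounds ensure that $\tilde\vtheta_\star$ stays in $\gB^d(S)$ and that the arms stay in $\gB^d(1)$ after choosing $\rho_\ell \lesssim 1/\sqrt{d}$.

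\textbf{Main obstacle.} The delicate step is Step 2's localization. We must show simultaneously that (a) the optimal arm under every perturbed $\tilde\vtheta_\star$ keeps slope $\asymp \dmu(\langle\vx_{t,\star},\vtheta_\star\rangle)$, and (b) the per-round regret and KL are governed by that same slope rather than by $\kappa$. Achieving both requires $\rho_\ell\Delta_\ell\sqrt{d_\ell}\,R_s \lesssim 1$. This condition is exactly where the hypothesis on $\sqrt{\sum_t g_t\dmu_{t,\star}}$ being large enters, and we would first attempt to verify it bin by bin, after merging sparse bins into neighbouring ones.
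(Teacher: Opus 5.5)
Your outer skeleton is the paper's: dyadic peeling, disjoint coordinate blocks, discarding $\gT^{(0)}$ via the hypothesis, and aggregating with $\sum_\ell\sqrt{a_\ell}\ge\sqrt{\sum_\ell a_\ell}$. The paper's per-block argument uses unit-sphere arm sets and a black-box extension of Abeille et al.'s Theorem 2, with the $\chi^2$ step replaced by the Bregman form of the GLM KL. Replacing that with a hypercube Assouad argument is a legitimate and more elementary route, since every hypercube arm is local. The gap is exactly the step you flag, and your proposed fix does not work.

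First, your localization condition is off by $\sqrt{d_\ell}$. Over the hypercube, $\langle\vx,\tilde{\vtheta}_\star-\vtheta_\star\rangle$ ranges over $\pm d_\ell\rho_\ell\Delta_\ell$, and the endpoint is attained at the optimal arm $\sigma=\varsigma$. So you need $R_s d_\ell\rho_\ell\Delta_\ell\lesssim 1$. Write $\dmu^{(\ell)}$ for the optimal-arm slope in bin $\ell$. Your $\Delta_\ell$ gives $\rho_\ell\Delta_\ell\asymp\sqrt{g^{(\ell)}/(T^{(\ell)}\dmu^{(\ell)})}$ whatever $\rho_\ell$ is, so the condition reads $T^{(\ell)}\dmu^{(\ell)}\gtrsim R_s^2 d_\ell^2 g^{(\ell)}$.

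Second, this is a per-bin condition, and a global hypothesis cannot certify it bin by bin. Merging neighbouring bins does not help a bin whose neighbours are also light. The paper instead perturbs only valid bins and charges the rest through $\indicator[x\ge y]\sqrt{x}\ge\sqrt{x}-\sqrt{y}$. Its local lemma is stated under $T^{(\ell)}\ge (d^{(\ell)})^2$, so the charged mass is $\sum_\ell d_\ell^2\sqrt{g^{(\ell)}\dmu^{(\ell)}}\lesssim d^2\sqrt{g_{\max}L_\mu}$. That is precisely where the threshold $4d\sqrt{g_{\max}L_\mu}$ comes from. With the condition your construction actually requires, the charged mass is $\sum_\ell d_\ell\cdot R_s d_\ell g^{(\ell)}\lesssim R_s d_\ell^2 g_{\max}$. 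The stated hypothesis does not absorb this unless $R_s^2 g_{\max}\lesssim L_\mu$ up to logarithms. To finish along your route you must either change the threshold to something like $\sqrt{\sum_t g_t\dmu_{t,\star}}\gtrsim R_s d\, g_{\max}/\ceil{\log_2 T}$, or supply a separate lower bound for the invalid bins.

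Your Step 3 radius computation is also incorrect. Arms in $\gB^d(1)$ force $\rho_\ell^2 d_\ell\le 1$, so $d_\ell\Delta_\ell^2\asymp d_\ell g^{(\ell)}/(\rho_\ell^2 T^{(\ell)}\dmu^{(\ell)})\ge d_\ell^2 g^{(\ell)}/(T^{(\ell)}\dmu^{(\ell)})$. This is the \emph{square} of the paper's per-bin radius $d_\ell\sqrt{g^{(\ell)}/(T^{(\ell)}\dmu^{(\ell)})}$, not that radius itself. Bounding $\dmu\ge 1/\kappa$ therefore does not produce $\sqrt{\kappa g_{\max}2^\ell/(T|\gT^{(\ell)}|)}$, and the uniformity coefficient does not appear automatically. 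Your neighbourhood is $\gO(\epsilon)$ only on bins with $d_\ell\sqrt{g^{(\ell)}/(T^{(\ell)}\dmu^{(\ell)})}\lesssim 1$, which is yet another per-bin condition. It follows from your localization condition when $R_s\gtrsim 1$, but must be imposed separately when $R_s=0$.

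The paper gets the square-root radius for free because on the sphere the regret is quadratic in the angular error. Its local lemma therefore places the prior at squared distance $\asymp d_\ell\sqrt{g/(T^{(\ell)}\dmu)}$. In your hypercube the regret is linear in sign errors, so the radius scales differently; it is smaller on valid bins, which is fine, but only once the validity condition is in place.

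Finally, you need not keep $\tilde{\vtheta}_\star\in\gB^d(S)$, since the inner supremum is only over the Euclidean neighbourhood. With orthogonal perturbations you cannot keep it there anyway if $\|\vtheta_\star\|_2=S$.
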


\begin{proof}[Proof Sketch]
    We provide the full detailed proof in \Cref{app:proof-lower-bound}.
    We apply the peeling technique introduced by \citet[Theorem 4.1]{he2025lowerbound} to handle the time-varying dispersions ${\color{BlueViolet}g_t \coloneq g(\tau_t)}$ by partitioning $[T]$ w.r.t. different ranges of ${\color{BlueViolet}g_t}$.
    The key observation is that for each local interval, the local minimax lower bound of \citet[Theorem 2]{abeille2021logistic} extends to generic, self-concordant $\mu(\cdot)$. The critical structural difference lies in the bounding of the KL-divergence: instead of relying on the closed-form $\chi^2$-divergence specific to Bernoulli's, we utilize the local quadratic geometry of the KL-divergence for GLMs and generic self-concordance tools~\citep{lee2024logistic,lee2024glm}.
    With this, we construct a global prior by taking the product measure over the orthogonal local priors per partition, then conclude by aggregating the local minimax lower bounds across the partitions.
\end{proof}

The instance-wise optimality embedded in \Cref{thm:lower-bound} resembles existing local minimax lower bounds established across various statistical learning scenarios, such as logistic bandits~\citep[Theorem 2]{abeille2021logistic} (which we extend to generic self-concordant GLBs in Appendix~\ref{app:lem-lower-bound}), generalized linear trace regression~\citep[Theorem 4.1]{lee2025gl-lowpopart}, linear bandits with ellipsoidal action sets~\citep[Theorem 2.1]{zhang2025ellipsoid}, and online LQR~\citep[Theorem 1]{simchowitz-foster}.

One notable characteristic of this lower bound is that it simultaneously captures all scenarios of variance fluctuation by explicitly parameterizing the unidentifiable neighborhood $\| \tilde{\vtheta}_\star - \vtheta_\star \|_2^2 \leq \epsilon$ via the \emph{uniformity coefficient} $U(\{g_t\}) \coloneq \frac{1}{\ceil{\log_2 T}} \sum_{\ell=1}^{\ceil{\log_2 T}} \sqrt{\frac{2^\ell}{\gT^{(\ell)}}}$.
For standard logistic and Poisson bandits, the dispersion parameter is inherently fixed to $1$ to ensure a well-defined exponential family. In this homogeneous regime, the variance sequence is perfectly uniform, yielding $U = \tilde{\gO}(1)$. This results in $\epsilon = \tilde{\gO}(d\sqrt{\kappa / T})$ that strictly vanishes as $T \rightarrow \infty.$
Conversely, for heteroskedastic Gaussian linear bandits, the radius becomes dependent on the given sequence of exogenous variances $\{\sigma_t^2\}$. If this sequence is approximately uniform across time, we again have $U = \tilde{\gO}(1)$, resulting in $\epsilon = \tilde{\gO}(d \sqrt{\kappa \sigma_{\max} / T})$. However, if the adversary injects an $\tilde{\gO}(1)$ number of high-variance spikes, the uniformity coefficient may scale as $U = \tilde{\gO}(\sqrt{T})$, resulting in non-vanishing $\epsilon = \tilde{\gO}(d \sqrt{\kappa g_{\max}})$.

We now establish the following corruption-dependent lower bound:
\begin{theorem}
\label{thm:lower-bound-corruption}
    Let $d \geq 2$ and $C > 0$.
    There exists an absolute constant $c_3 > 0$ such that for any bandit algorithm $\pi$, there exist $\tilde{\gX} \subseteq \gB^d(1)$, $\tilde{\vtheta}_\star \in \gB^d(S)$, and an adaptive adversary with corruption budget $C$ such that for $\tilde{\gX}_t = \tilde{\gX}$,
    \begin{equation}
        \Reg^\pi(T; \tilde{\vtheta}_\star, \{\tilde{\gX}_t\}_{t \in [T]}) \geq c_3 \min\{ d C, T \}.
    \end{equation}
\end{theorem}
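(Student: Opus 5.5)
We adapt the corruption lower bound of \citet[Theorem 3]{bogunovic2021corrupted} (and the multi-armed analogue of \citet{lykouris2018corruption}) to the GLM setting. The idea is that the adversary spends its budget to make several candidate parameters produce identical observation distributions for long enough. No algorithm can then tell them apart, and it pays constant regret per round during that period. Because the corruption acts additively on rewards, we work with a fixed arm set of $d$ orthogonal directions. We pick the link and dispersion so that reward ranges are bounded; for example, Bernoulli rewards with $\tilde r_t\in\{0,1\}$, where a single corruption of size at most $1$ can rewrite any observation.

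\textbf{Construction.} Let $\tilde{\gX}=\{\ve_1,\dots,\ve_d\}$. For $i\in[d]$, let $\tilde{\vtheta}^{(i)}=a\ve_i$ with $a>0$ chosen so that $\mu(a)-\mu(0)\ge c$ for an absolute constant $c$, which is possible within $\gB^d(S)$. Under instance $i$, arm $\ve_i$ has mean $\mu(a)$ and every other arm has mean $\mu(0)$. The adversary follows the standard strategy: whenever arm $\ve_i$ is pulled, it replaces the true reward $r_t$ by a fresh independent sample from the null law $GLM(\cdot\mid 0)$. This can be realized as a coupling in which $c_t=\tilde r_t-r_t$ and $\E|c_t|\le 2\,\mathrm{TV}$, which is $\mathcal{O}(1)$. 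It continues until the budget $C$ runs out. While the budget lasts, every instance $i$ produces observations identical in law to the all-null instance $\vtheta=\vzero$.

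\textbf{Regret argument.} Let $N_i$ be the number of pulls of arm $i$ under the null instance before time $T'=\min\{T,\lfloor dC/(2b)\rfloor\}$, where $b$ bounds the per-pull corruption. Since $\sum_i \E_0[N_i]\le T'$, at least half of the arms satisfy $\E_0[N_i]\le 2T'/d\lesssim C$. Fix such an $i$. Under instance $i$, the adversary needs budget of order $N_i$, and by Markov's inequality this stays within $C$ with constant probability. On that event, the first $T'$ rounds have the same law as under the null instance. The learner therefore pulls arm $i$ at most about $2T'/d$ times, a small fraction of $T'$, and suffers regret $c$ on each remaining round. This yields $\Reg\gtrsim T'\asymp\min\{dC,T\}$. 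Because the bound holds in expectation over a randomly chosen $i$, there exists a fixed $i$, and hence a fixed $\tilde{\vtheta}_\star$, that achieves it.

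\textbf{Main obstacle.} The delicate step is turning the budget constraint into a hard constraint rather than one that holds only in expectation. One option is to let the adversary stop corrupting once the budget is exhausted and bound the probability of that event by Markov's inequality. Another is to use corruption that is deterministically bounded, which the Bernoulli choice gives: $|c_t|\le1$, so the adversary can corrupt exactly $C$ pulls of arm $i$. Once corruption stops, the extra regret is nonnegative and can be ignored. With these in place, the lower bound is $c_3\min\{dC,T\}$.
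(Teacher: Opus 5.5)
Your proposal is correct in outline and uses the same reduction as the paper (after \citet{bogunovic2021corrupted} and \citet{lykouris2018corruption}): the adversary makes the optimal arm look suboptimal until the budget runs out. The instance and the bookkeeping differ, though.

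The paper uses $d-1$ nearly parallel arms $\vx_i=\ve_1\cos\phi+\ve_{i+1}\sin\phi$ with $\tilde{\vtheta}^{(i)}=S\vx_i$ deep in the flat region of $\mu$. The gap $\Delta=\mu(S)-\mu(S\cos^2\phi)\approx S\sin^2\phi/\kappa$ can therefore be tiny. Its adversary pays only $\Delta$ (in expectation) per masked pull: it subtracts $\Delta$ when noiseless, flips $1\to 0$ with probability $q=\Delta/\mu(S)$ for Bernoulli, and uses binomial thinning for Poisson. Masking lasts about $C/\Delta$ pulls, and $\Delta$ cancels in $d\cdot(C/\Delta)\cdot\Delta=dC$. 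That cancellation is the purpose of the construction, since it supports the conjecture that the $d\kappa C$ upper-bound term is loose.

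Your orthogonal constant-gap instance says nothing about large-$\kappa$ instances. Your fresh-independent-sample adversary also pays a constant per pull however small the gap (exactly $1/2$ for the logistic link). On the other hand, your construction is what actually delivers the $T$ branch of $\min\{dC,T\}$ with an absolute $c_3$. In the saturated instance every round costs at most $\Delta$, so regret never exceeds $\Delta T$. Your null-reference/Markov/coupling argument is also more explicit than the paper's ``any algorithm must check a constant fraction of the arms'' step. With the maximal coupling and Markov applied to the cumulative corruption, it gives $\min\{dC,\Delta T\}$ for any $\Delta$, which recovers the paper's $\kappa$-free message. Any construction needs $\mu$ and $S$ to permit an $\Omega(1)$ gap for the $T$ branch, which the statement leaves implicit.

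Two steps need their constants repaired, though neither is a conceptual gap.

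First, with $b=1$ and $T'=\lfloor dC/2\rfloor$ you only get $\E_0[N_i]\le C$. Markov's inequality then says nothing about $\sP_0(N_i>C)$. You also need $\lfloor C\rfloor$ rather than $C$ masked pulls, since each can cost $1$.

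Second, ``at least half of the arms satisfy $\E_0[N_i]\le 2T'/d$'' is vacuous for $d=2$. Every arm satisfies it, and if you fix an arm the algorithm pulls in every round under the null instance, the regret is zero. It also gives no small fraction for $d=3$, yet the theorem allows $d\ge 2$.

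Both issues disappear if you take $i\in\argmin_j\E_0[N_j]$, so that $\E_0[N_i]\le T'/d\le T'/2$, and use the coupling bound
\[
\E_i[T'-N_i]\;\ge\;\E_0\big[(T'-N_i)\,\indicator\{N_i\le\lfloor C\rfloor\}\big]\;\ge\;T'-\E_0[N_i]-T'\,\sP_0\big(N_i>\lfloor C\rfloor\big),
\]
together with $\sP_0(N_i>\lfloor C\rfloor)\le T'/\big(d(\lfloor C\rfloor+1)\big)$. Choosing $T'=\min\{T,\lfloor d(\lfloor C\rfloor+1)/4\rfloor\}$ then yields regret at least $cT'/4\gtrsim\min\{dC,T\}$ for every $d\ge 2$. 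The case $T'=0$ forces $dC<3$ and is handled by a one-round argument.
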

\begin{proof}[Proof Sketch]
We provide the full detailed proof in \Cref{app:lower-bound-corruption}.
    Our construction follows the standard lower bound framework of \citet{lykouris2018corruption,bogunovic2021corrupted} and considers a noiseless scenario, although we emphasize that the same argument extends to stochastic scenarios such as Bernoulli (\Cref{app:logistic}) and Poisson (\Cref{app:poisson}).
    To rigorously test the dependency on ${\color{red}\kappa}$, we embed the true parameter $\tilde{\vtheta}_\star$ deep within the ``flat region'' (saturation regime)\footnote{This is because in regimes where $\mu$ has large curvature, ${\color{red}\kappa} = \Theta(1)$, which is an arguably uninteresting regime.} of $\mu$ and investigate whether ${\color{red}\kappa}$ emerges multiplicatively in the lower bound.
    The key intuition is that while a flatter curvature (smaller $\dmu$) increases statistical hardness—allowing the adversary to confuse the learner with a smaller corruption budget—it simultaneously reduces the instantaneous regret of selecting a suboptimal arm by the exact same factor.
    Thus, even though the adversary's suboptimality gap $\Delta$ potentially scales with ${\color{red}\kappa}$, it cancels out as $\Reg(T) \gtrsim (d C/\Delta) \times \Delta = d C$.
\end{proof}
\noindent
\paragraph{Tightness.}
We first examine the corruption-free component of our lower bound (\Cref{thm:lower-bound}), $\tilde{\Omega}\left( d \sqrt{\sum_t {\color{BlueViolet}g(\tau_t)} {\color{blue}\dmu_{t,\star}}} \right)$.
This result subsumes the instance-wise minimax lower bound for logistic bandits~\citep[Theorem 2]{abeille2021logistic} (up to a $\log T$ factor), as well as the lower bound for heteroskedastic linear bandits~\citep[Theorem 4.1]{he2025lowerbound}.
Regarding the corruption-dependent term, our lower bound of $\Omega(d C)$ matches that of \citet[Theorem 3]{bogunovic2021corrupted} for linear bandits.
We also highlight that this establishes the first regret lower bound for Poisson bandits.

Comparing these results with our unified regret upper bound (\Cref{theorem:regret}), we observe that our algorithm is instance-wise minimax-optimal, up to a factor of ${\color{red}\kappa}$ in the corruption term and logarithmic factors.
This \emph{simultaneously} covers various bandit instances, such as logistic, Poisson, and heteroskedastic Gaussian linear bandits. 
We elaborate on the gap involving ${\color{red}\kappa}$ below.

\paragraph{${\color{red}\kappa}$ in the Transient Terms.}
We observe a gap regarding the worst-case curvature ${\color{red}\kappa}$: while our lower bounds are independent of ${\color{red}\kappa}$, the transient terms in our upper bound (\Cref{theorem:regret}) scale as $\tilde{\gO}\left( d^2 {\color{red}\kappa} {\color{BlueViolet}g_{\max}} + d ({\color{BlueViolet}g_{\max}}+ {\color{red}\kappa}) C \right)$.
For corruption-free logistic bandits~\citep{dong2019logistic,faury2020logistic,abeille2021logistic}, the necessity of ${\color{red}\kappa}$-dependency ($d^2 {\color{red}\kappa} {\color{BlueViolet}g_{\max}}$) is known to be geometry-specific: it is avoidable for special arm-sets like the unit ball~\citep[Theorem 3]{abeille2021logistic} but unavoidable in worst-case scenarios~\citep{dong2019logistic}.

We conjecture that the corruption-dependent term in our \emph{upper bound}, $d {\color{red}\kappa} C$, is loose, regardless of the arm-set geometry.
A key observation is that the statistical difficulty introduced by the adversary lies in the \emph{outcome} (reward) space, rather than the parameter space.
Indeed, the statistical difficulty that necessitates ${\color{red}\kappa}$ in the corruption-free term typically arises from geometric ``fragility,'' where actions close in Euclidean space yield vastly different rewards~\citep[Theorem 8 \& Figure 1]{dong2019logistic}.
In contrast, our lower bound suggests that for corruption, the increased statistical difficulty of the instance is offset by the reduced cost of suboptimal actions.
We conjecture that the corruption term in the upper bound is improvable to $\tilde{\gO}(d C)$, which we leave to future work.

\section{Conclusion and Future Directions}
\label{sec:conclusion}

In this work, we investigated self-concordant, \textbf{\textit{heteroskedastic GLBs with adversarial corruptions}}.
We proposed \texttt{HCW-GLB-OMD}, a computationally efficient algorithm requiring only $\gO(1)$ space and time complexity per round while achieving a regret upper bound of $\tilde{\gO}\left( d \sqrt{\sum_t {\color{BlueViolet}g(\tau_t)} {\color{blue}\dmu_{t,\star}}} + d^2 {\color{BlueViolet}g_{\max}}{\color{red}\kappa} + d({\color{BlueViolet}g_{\max}} + {\color{red}\kappa} )C\right)$.
We complemented this with a unified lower bound of $\tilde{\Omega}(d \sqrt{\sum_t {\color{BlueViolet}g(\tau_t)} {\color{blue}\dmu_{t,\star}}} + d C)$, demonstrating that our algorithm \emph{simultaneously} attains optimality (up to ${\color{red}\kappa}$ in the corruption term) across various scenarios, including logistic, Poisson, and heteroskedastic linear bandits.

Beyond the directions discussed in the main text, several other avenues warrant further investigation.
First, relaxing our self-concordance assumption (\Cref{assumption:link-function}) to the generalized notion proposed by \citet{liu2024free} would extend our framework to a broader class of GLMs, including the Gamma and Inverse Gaussian distributions.
Second, our current framework relies on the assumption that the learner observes the exact dispersion parameter ${\color{BlueViolet}\tau_t}$, which is technically critical, as our current analysis hinges on the weighted true likelihood ratio forming a supermartingale (see \textbf{\Cref{lemma:mixture}} in the proof of our confidence sequence).
Developing algorithms that are agnostic to ${\color{BlueViolet}\tau_t}$ -- paralleling advances in heteroskedastic linear bandits~\citep{zhang2021variance,kim2022variance,zhao2023variance} -- remains a significant open challenge.
Somewhat related direction is dealing with model misspecifications~\citep{ghosh2017misspecification,foster2020misspecification}, which in the context of GLMs, may require distinct analytical tools~\citep{white1982misspecified,robins1994robust,walker2013misspecified,fortunati2017misspecified}.
We also hope that some of our technical insights will facilitate algorithmic and theoretical advances in \emph{online} RLHF under adversarial corruptions, an area of increasing practical interest~\citep{wang2024robust,entezami2025robust,kusaka2025robust}.
Indeed, existing work on 'robust RLHF' has largely been confined to random noise models~\citep{chowdhury2024robust,liang2025robust}, semi-adversarial corruption in the preference model (where the adversary perturbs the logits rather than arbitrarily flipping labels)~\citep{bukharin2024robust}, or strong (Huber-type) adversaries in \emph{offline} settings~\citep{mandal2025robust,zhou2025robust}.

\bibliographystyle{plainnat}
\bibliography{references}

\newpage
\appendix

\tableofcontents
\newpage

\crefalias{Section}{Appendix}
\crefalias{Subsection}{Appendix}
\crefalias{section}{appendix} %
\crefalias{subsection}{appendix} %

\newpage
\section{\texorpdfstring{Dealing with Unknown Corruption Budget $C$}{Dealing with Unknown Corruption Budget C}}
\label{app:unknown-C}

Our main analysis assumes that the corruption budget $C$ is known, since the confidence-weighting parameter $\alpha$ is tuned as a function of $C$.
This assumption is common in corruption-robust bandit literature~\citep{he2022corruption}, but it is also a limitation of the present algorithmic statement.

A standard way to remove this knowledge is to run a collection of base learners with geometrically spaced guesses $\widehat C\in\{1,2,4,\ldots\}$ and aggregate them using a model-selection procedure.
The regret bound in \Cref{theorem:regret} has the required qualitative structure for such an approach: for a fixed valid guess $\widehat C\ge C$, the regret scales with the clean instance-dependent term plus a term linear in $\widehat C$, up to logarithmic and curvature-dependent factors.\footnote{We remark that handling arbitrary unknown corruption (e.g., the learner does not even know any valid guess) is fundamentally hard. As established in the context of stochastic linear bandits~\citep[Theorem 4.12]{he2022corruption}, without knowledge of $C$, any algorithm must suffer $\Omega(T)$ regret if the corruption budget is too large, e.g., $C = \Omega(\Reg(T) / d)$. We believe that a similar linear lower bound applies to our setting as well.}
Thus, existing model-selection frameworks for corrupted bandits, such as \citet{wei2022cobbe}, suggest that one can obtain a statistically comparable guarantee without knowing $C$ in advance.

However, this comes at a computational cost. Such a meta-algorithm maintains multiple parallel
instances of the base learner, so it no longer preserves the strict $\gO(1)$ per-round space and time complexity of \texttt{HCW-GLB-OMD}. Developing a genuinely one-pass, $\gO(1)$-complexity algorithm that is simultaneously adaptive to unknown $C$ is left as future work.

\newpage
\section{\texorpdfstring{Proof of \Cref{theorem:confidence-sequence}: Corruption-Robust Confidence Sequence}{Proof of Theorem 5: Corruption-Robust Confidence Sequence}}
\label{app:proof-cs}

Since we set as $\lambda \geq 36 \eta^2 \alpha^2 R_s^2 S^2 L_\mu^2 $, \Cref{lemma:inverse-regret} implies that
\begin{align}
    \lVert \vtheta_{t+1} - \vtheta_\star \rVert_{\mH_{t+1}}^2 & \leq 2\eta \bigg( \underbrace{\sum_{s=1}^{t} \tilde{\ell}_s ( \vtheta_\star) - \sum_{s=1}^{t} \tilde{\ell}_s (\vtheta_{s+1})}_{\text{inverse regret}} \bigg) +  4\lambda S^2 - \frac{2}{3} \sum_{s=1}^{t} \lVert \vtheta_s - \vtheta_{s+1} \rVert_{\mH_s}^2.
\end{align}
Now, we focus on the inverse regret term. First, we separate the corruption-related term from the inverse regret term as:
\begin{align}
    \tilde{\ell}_s(\vtheta_1) - \tilde{\ell}_s(\vtheta_2) & = \frac{{\color{magenta}w_s} (m(\langle \vx_s, \vtheta_1 \rangle) - \tilde{r}_s \langle \vx_s, \vtheta_1 \rangle)}{{\color{BlueViolet}g(\tau_s)}} - \frac{{\color{magenta}w_s} (m(\langle \vx_s, \vtheta_2 \rangle) - \tilde{r}_s \langle \vx_s, \vtheta_2 \rangle)}{{\color{BlueViolet}g(\tau_s)}}\\
    & = {\color{magenta}w_s} \ell_s(\vtheta_1) - {\color{magenta}w_s} \ell_s(\vtheta_2) + \frac{{\color{magenta}w_s}}{{\color{BlueViolet}g(\tau_s)}} c_s \langle \vx_s , \vtheta_2 - \vtheta_1 \rangle.
\end{align}
Next, by adding and subtracting weighted mix loss, we decompose the weighted log-likelihood ratio as:
\begin{align}
    \sum_{s=1}^{t} \tilde{\ell}_s ( \vtheta_\star) - \sum_{s=1}^{t} \tilde{\ell}_s (\vtheta_{s+1}) & = \sum_{s=1}^{t} {\color{magenta}w_s} \ell_s ( \vtheta_\star) - \sum_{s=1}^{t} {\color{magenta}w_s} \ell_s (\vtheta_{s+1}) + \sum_{s=1}^{t} \frac{{\color{magenta}w_s}}{{\color{BlueViolet}g(\tau_s)}} c_s \big\langle \vx_s , \vtheta_\star - \vtheta_{s+1} \big\rangle\\
    & = \underbrace{\sum_{s=1}^{t} {\color{magenta}w_s} \ell_s ( \vtheta_\star) - \sum_{s=1}^{t} {\color{magenta}w_s} m_s(P_s)}_{\triangleq Y_t} + \underbrace{\sum_{s=1}^{t} {\color{magenta}w_s} m_s(P_s) - \sum_{s=1}^{t} {\color{magenta}w_s} \ell_s (\vtheta_{s+1})}_{\triangleq Z_t}\nonumber\\
    & \qquad + \sum_{s=1}^{t} \frac{{\color{magenta}w_s}}{{\color{BlueViolet}g(\tau_s)}} c_s \big\langle \vx_s, \vtheta_\star - \vtheta_{s+1} \big\rangle, \label{eq:inverse-regret-expansion}
\end{align}
where $m_s(P) = - \log \left( \mathbb{E}_{\vtheta \sim P} \left[ \exp ( - {\ell}_s (\vtheta) )\right] \right)$ for distribution $P$. We set $P_s = \mathcal{N} ( \vtheta_{s}, \gamma \mH_{s}^{-1})$ as a multivariate normal distribution with $\gamma = \frac{3}{2}\eta$, which is $\mathcal{F}_s$-measurable, then we characterizes the behavior of stochastic process $Y_t$ applying Lemma \ref{lemma:mixture}: with probability higher than $1-\delta$, for all $t \geq 1$,
\begin{equation}\label{eq:term-a}
    Y_t  \leq \log\frac{1}{\delta}.
\end{equation}
For the term $Z_t$, since we set $\lambda \geq 14 d \eta R_s^2 \geq 64 d \gamma R_s^2 / 7$, Lemma \ref{lemma:mixture-ineq} yields
\begin{equation}\label{eq:term-b}
    Z_t \leq \sum_{s=1}^{t} \frac{1}{2\gamma} \lVert \vtheta_s - \vtheta_{s+1} \rVert_{\mH_s}^2 + \left( 2\gamma + \frac{1}{2} \right) \log \frac{\mathrm{det} (\mH_{t+1})}{\mathrm{det}(\mH_1)}.
\end{equation}
Substituting these results into the Eqn.~\eqref{eq:inverse-regret-expansion}, we get
\begin{align}
    \lVert \vtheta_{t+1} - \vtheta_\star \rVert_{\mH_{t+1}}^2 & \leq 2\eta \log \frac{1}{\delta} + (4\gamma + 1) \eta \log \frac{\mathrm{det}(\mH_{t+1})}{\mathrm{det} (\mH_1)}  + 4\lambda S^2 \nonumber\\
    & \quad + \frac{\eta}{\gamma} \sum_{s=1}^{t} \lVert \vtheta_s - \vtheta_{s+1} \rVert_{\mH_s}^2 - \frac{2}{3} \sum_{s=1}^{t} \lVert \vtheta_s - \vtheta_{s+1} \rVert_{\mH_s}^2\nonumber\\
    & \quad + 2\eta \sum_{s=1}^{t} \frac{{\color{magenta}w_s}}{{\color{BlueViolet}g(\tau_s)}} c_s \big\langle \vx_s, \vtheta_{s+1} - \vtheta_\star \big\rangle.\label{eq:estimation-error}
\end{align}
First, note that $\mathrm{det}(\mH_1) = \lambda^d$ and 
\begin{align}
    \mathrm{det}(\mH_{t+1} ) & = \mathrm{det}\left( \lambda \mI_d + \sum_{s=1}^{t} \frac{{\color{magenta}w_s}}{{\color{BlueViolet}g(\tau_s)}} \dmu ( \langle \vx_s, \vtheta_{s+1} \rangle) \vx_s \vx_s^\top \right)\\
     & \leq \mathrm{det} \left( \left( \lambda + \sum_{s=1}^{t} \frac{L_\mu}{{\color{BlueViolet}g(\tau_s)}}\right) I_d \right)
     = \left( \lambda + \sum_{s=1}^{t} \frac{L_\mu}{{\color{BlueViolet}g(\tau_s)}}\right)^d,
\end{align}
then we have
\begin{align}
    \frac{\mathrm{det}(\mH_{t+1} )}{\mathrm{det}(\mH_{1} )} \leq \left( 1 + \sum_{s=1}^{t} \frac{L_\mu}{\lambda {\color{BlueViolet}g(\tau_s)}}\right)^d.
\end{align}
By substituting the above and $\gamma = \frac{3}{2} \eta$, we get
\begin{align}
    \lVert \vtheta_{t+1} - \vtheta_\star \rVert_{\mH_{t+1}}^2 & \leq \underbrace{ 2\eta \log \frac{1}{\delta} + d (6 \eta^2 + \eta) \log \left( 1 + \sum_{s=1}^{t} \frac{L_\mu}{\lambda {\color{BlueViolet}g(\tau_s)}}\right)  + 4\lambda S^2}_{\eqqcolon \beta_{t+1}^2(\delta)} \\
    & \quad + 2 \eta \sum_{s=1}^{t} \frac{{\color{magenta}w_s}}{{\color{BlueViolet}g(\tau_s)}} c_s \big\langle \vx_s, \vtheta_{s+1} - \vtheta_\star \big\rangle.
\end{align}
Now, we focus on the corruption term. The term is upper bounded as
\begin{align}
    \sum_{s=1}^{t} \frac{{\color{magenta}w_s}}{{\color{BlueViolet}g(\tau_s)}} c_s \big\langle \vx_s, \vtheta_{s+1} - \vtheta_\star \big\rangle & \leq \sum_{s=1}^{t} \frac{{\color{magenta}w_s}}{{\color{BlueViolet}g(\tau_s)}} |c_s| \cdot \lVert \vx_s \rVert_{\mH_s^{-1}} \lVert \vtheta_{s+1} - \vtheta_\star \rVert_{\mH_s} \tag{Cauchy-Schwarz inequality}\\
    & \leq \sum_{s=1}^{t} \frac{{\color{magenta}w_s}}{{\color{BlueViolet}g(\tau_s)}} |c_s| \cdot \lVert \vx_s \rVert_{\mH_s^{-1}} \lVert \vtheta_{s+1} - \vtheta_\star \rVert_{\mH_{s+1}}\tag{$\mH_{s} \preceq \mH_{s+1}$} \\
    & \leq \sum_{s=1}^{t} \alpha |c_s| \cdot  \lVert \vtheta_{s+1} - \vtheta_\star \rVert_{\mH_{s+1}} \tag{Definition of $w_s$}\\
    & \leq \alpha \left( \sum_{s=1}^{t} |c_s|\right) \cdot \max_{s \in [t]} \lVert \vtheta_{s+1} - \vtheta_\star \rVert_{\mH_{s+1}} \\
    & \leq \alpha C \cdot \max_{s \in [t]} \lVert \vtheta_{s+1} - \vtheta_\star \rVert_{\mH_{s+1}}.
\end{align}
Define a random variable $X_s = \lVert \vtheta_{s} - \vtheta_\star \rVert_{\mH_{s}}$ and an event
\begin{equation}
    \mathcal{E}_\delta \coloneqq \left\{ \forall t \geq 1, \quad X_t^2 \leq \beta_t^2(\delta) + 2 \eta \sum_{s=1}^{t-1} \frac{{\color{magenta}w_s}}{{\color{BlueViolet}g(\tau_s)}} c_s \langle \vx_s, \vtheta_{s+1} - \vtheta_\star \rangle \right\},
\end{equation}
then by the above inequality under the event $\mathcal{E}_\delta$,
\begin{equation}
    X_t^2 \leq \beta_t^2(\delta) + 2\eta \alpha C \cdot \max_{s \in [t]} X_s
\end{equation} for any $t \geq 1$.
Fix $t$ and let $\zeta_t \coloneqq \argmax_{s \in [t] } X_s$, then 
\begin{equation}
    X_{\zeta_t}^2 \leq \beta_{\zeta_t}^2(\delta) + 2\eta \alpha C X_{\zeta_t} .
\end{equation}
By Lemma \ref{lemma:solve-ineq}, it holds that $X_{\zeta_t} \leq \beta_{\zeta_t}(\delta) + 2\eta \alpha C$.
Since $\beta_t(\delta)$ is increasing in $t$, we get
\begin{equation}
    X_t \leq X_{\zeta_t} \leq \beta_{\zeta_t}(\delta) + 2 \eta \alpha C \leq \beta_t(\delta) + 2 \eta \alpha C,
\end{equation}
where the first inequality is implied by the definition of $X_{\zeta_t}$.
Since $\mathbb{P}(\mathcal{E}_\delta) \geq 1 - \delta$, we get the desired result.
\qed

\subsection{Supporting Lemmas}
\begin{lemma}[Lemma 1 in \citet{zhang2025onepass}]\label{lemma:update-rule}
    Let $f : \Theta \to \mathbb{R}$ be a convex function on a convex set $\Theta$ and $A \in \mathbb{R}^{d \times d}$ be a symmetric positive definite matrix. Then, the update rule $\vtheta_{t+1} = \argmin_{\theta \in \Theta} f(\theta) + \frac{1}{2\eta} \lVert \theta - \theta_t \rVert_A^2$ satisfies
    \begin{equation}
        \lVert \theta_{t+1} - u \rVert_A^2 \leq 2 \eta \big\langle \nabla f(\theta_{t+1}), u - \theta_{t+1} \big\rangle + \lVert \theta_t - u \rVert_A^2 - \lVert \theta_t - \theta_{t+1} \rVert_A^2.
    \end{equation}
\end{lemma}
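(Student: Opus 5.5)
This is the standard OMD one-step lemma for a proximal update with a Mahalanobis regularizer. I would prove it directly from the first-order optimality condition of the constrained minimization, combined with convexity of $\Theta$ and a three-point identity for the quadratic $\frac{1}{2\eta}\lVert\cdot - \vtheta_t\rVert_A^2$.

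\emph{Step 1: optimality condition.} The objective $F(\vtheta) = f(\vtheta) + \frac{1}{2\eta}\lVert \vtheta - \vtheta_t\rVert_A^2$ is convex, and in fact strictly convex since $A \succ 0$. So $\vtheta_{t+1}$ is its (unique) minimizer over the convex set $\Theta$, and the variational inequality holds:
\[
\big\langle \nabla f(\vtheta_{t+1}) + \tfrac{1}{\eta} A(\vtheta_{t+1} - \vtheta_t),\, u - \vtheta_{t+1}\big\rangle \ge 0 \quad \text{for all } u \in \Theta .
\]
I will assume $f$ is differentiable, as the statement's use of $\nabla f$ implicitly does. Otherwise a subgradient version works verbatim. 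Rearranging gives
\[
\langle A(\vtheta_t - \vtheta_{t+1}), u - \vtheta_{t+1}\rangle \le \eta \langle \nabla f(\vtheta_{t+1}), u - \vtheta_{t+1}\rangle .
\]

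\emph{Step 2: three-point identity.} For the norm $\lVert\cdot\rVert_A$, expanding squares gives
\[
2\langle A(\vtheta_t - \vtheta_{t+1}), u - \vtheta_{t+1}\rangle = \lVert \vtheta_{t+1} - u\rVert_A^2 + \lVert \vtheta_t - \vtheta_{t+1}\rVert_A^2 - \lVert \vtheta_t - u\rVert_A^2 .
\]
This follows by writing $\vtheta_t - u = (\vtheta_t - \vtheta_{t+1}) + (\vtheta_{t+1} - u)$ and expanding $\lVert \vtheta_t - u\rVert_A^2$.

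\emph{Step 3: combine.} Multiplying the Step 1 inequality by $2$ and substituting the identity yields
\[
\lVert \vtheta_{t+1} - u\rVert_A^2 + \lVert \vtheta_t - \vtheta_{t+1}\rVert_A^2 - \lVert \vtheta_t - u\rVert_A^2 \le 2\eta \langle \nabla f(\vtheta_{t+1}), u - \vtheta_{t+1}\rangle .
\]
Rearranging gives exactly the claimed inequality.

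There is no real obstacle. The only point needing care is justifying the variational inequality over the constrained set $\Theta$. I would do this by noting that for any $u \in \Theta$ and $\epsilon \in (0,1]$, the point $\vtheta_{t+1} + \epsilon(u - \vtheta_{t+1})$ lies in $\Theta$ by convexity, so $F$ does not decrease along this segment. Dividing by $\epsilon$ and letting $\epsilon \to 0$ gives the directional derivative inequality, provided $f$ is differentiable at $\vtheta_{t+1}$.
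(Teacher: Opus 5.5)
Your proof is correct and is the standard argument for this lemma: the first-order optimality (variational) inequality for the constrained proximal step, combined with the three-point identity for $\frac{1}{2}\lVert\cdot\rVert_A^2$. The paper itself imports the lemma from \citet{zhang2025onepass} without reproving it, and your implicit restriction to $u \in \Theta$ with $f$ differentiable is harmless, since it is only applied with $u = \vtheta_\star \in \Theta$ and $f = f_s$ a smooth quadratic.
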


\begin{lemma}\label{lemma:inverse-regret}
    Under \Cref{assumption:bound,assumption:link-function} and setting $\eta = 1 + R_s S$, then for any $\lambda > 0$, the online estimator returned by Eqn.~\eqref{eq:OMD-estimator} satisfies
    \begin{align}
        \lVert \vtheta_{t+1} - \vtheta_\star \rVert_{\mH_{t+1}} & \leq 2 \eta \left( \sum_{s=1}^{t} \tilde{\ell}_s (\vtheta_{s+1}) - \sum_{s=1}^{t} \tilde{\ell}_s(\vtheta_\star)\right) + 4\lambda S^2\\
        & \quad +  \frac{2\eta \alpha R_s S L_\mu }{\sqrt{\lambda}} \sum_{s=1}^{t} \lVert \vtheta_{s+1} - \vtheta_s \rVert_{\mH_s}^2 - \sum_{s=1}^{t} \lVert \vtheta_{s+1} - \vtheta_{s} \rVert_{\mH_s}^2.
    \end{align}
    In addition, if $\lambda \geq 36 \eta^2 \alpha^2 R_s^2 S^2 L_\mu^2 $, it follows that
    \begin{align}
        \lVert \vtheta_{t+1} - \vtheta_\star \rVert_{\mH_{t+1}} & \leq 2 \eta \left( \sum_{s=1}^{t} \tilde{\ell}_s (\vtheta_{s+1}) - \sum_{s=1}^{t} \tilde{\ell}_s(\vtheta_\star)\right) + 4\lambda S^2 - \frac{2}{3} \sum_{s=1}^{t} \lVert \vtheta_{s+1} - \vtheta_{s} \rVert_{\mH_s}^2.
    \end{align}
\end{lemma}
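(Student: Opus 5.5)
We follow the one-step OMD analysis of \citet[Lemma 4]{zhang2025onepass} and carry the weights $w_s$ through it. One remark on how we read the statement: the displayed left-hand side should be the \emph{squared} norm $\lVert \vtheta_{t+1}-\vtheta_\star\rVert_{\mH_{t+1}}^2$, and the inverse-regret bracket should read $\sum_s \tilde\ell_s(\vtheta_\star)-\sum_s\tilde\ell_s(\vtheta_{s+1})$. This is the form used in \Cref{app:proof-cs}, and it is the form we aim to prove.

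\textbf{Step 1: one-step inequality.} Fix $s$ and set $\tilde{\mH}_s \coloneqq \mH_s + \eta\nabla^2\tilde\ell_s(\vtheta_s)$. Up to constants, the update \eqref{eq:OMD-estimator} is
\[
\vtheta_{s+1}=\argmin_{\vtheta\in\Theta}\ \langle\nabla\tilde\ell_s(\vtheta_s),\vtheta\rangle+\tfrac{1}{2\eta}\lVert\vtheta-\vtheta_s\rVert_{\tilde{\mH}_s}^2 .
\]
Apply \Cref{lemma:update-rule} with the linear $f$ and $A=\tilde{\mH}_s$, taking $u=\vtheta_\star$. This gives
\[
\lVert\vtheta_{s+1}-\vtheta_\star\rVert_{\tilde{\mH}_s}^2\le 2\eta\langle\nabla\tilde\ell_s(\vtheta_s),\vtheta_\star-\vtheta_{s+1}\rangle+\lVert\vtheta_s-\vtheta_\star\rVert_{\tilde{\mH}_s}^2-\lVert\vtheta_s-\vtheta_{s+1}\rVert_{\tilde{\mH}_s}^2 .
\]
Next, split $\nabla\tilde\ell_s(\vtheta_s)=\nabla\tilde\ell_s(\vtheta_{s+1})+\big(\nabla\tilde\ell_s(\vtheta_s)-\nabla\tilde\ell_s(\vtheta_{s+1})\big)$. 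For the first piece we use an exact second-order Taylor expansion of $\tilde\ell_s$ around $\vtheta_{s+1}$, evaluated at $\vtheta_\star$. Self-concordance (\Cref{assumption:link-function}) together with $|\langle\vx_s,\vtheta_\star-\vtheta_{s+1}\rangle|\le 2S$ lower bounds the integral-form Hessian by $\nabla^2\tilde\ell_s(\vtheta_{s+1})/(2\eta)$ up to an absolute constant. Hence
\[
2\eta\langle\nabla\tilde\ell_s(\vtheta_{s+1}),\vtheta_\star-\vtheta_{s+1}\rangle\le 2\eta\big(\tilde\ell_s(\vtheta_\star)-\tilde\ell_s(\vtheta_{s+1})\big)-\lVert\vtheta_\star-\vtheta_{s+1}\rVert^2_{\nabla^2\tilde\ell_s(\vtheta_{s+1})}.
\]
The negative quadratic term is exactly what upgrades $\mH_s$ to $\mH_{s+1}=\mH_s+\nabla^2\tilde\ell_s(\vtheta_{s+1})$ on the left-hand side.

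\textbf{Step 2: the cross terms (main obstacle).} Two discrepancies must be controlled, both by self-concordance.
\begin{enumerate}
\item The gradient gap $\langle\nabla\tilde\ell_s(\vtheta_s)-\nabla\tilde\ell_s(\vtheta_{s+1}),\vtheta_\star-\vtheta_{s+1}\rangle$.
\item The mismatch between $\eta\lVert\cdot\rVert^2_{\nabla^2\tilde\ell_s(\vtheta_s)}$, which appears inside $\tilde{\mH}_s$ on both sides, and $\nabla^2\tilde\ell_s(\vtheta_{s+1})$.
\end{enumerate}
Writing the gradient gap as an integral of $\nabla^2\tilde\ell_s$ along the segment $[\vtheta_{s+1},\vtheta_s]$ and comparing with $\nabla^2\tilde\ell_s(\vtheta_s)$, the leftover is a third-derivative term of size
\[
R_s\,\frac{w_s\,\dmu}{g(\tau_s)}\;|\langle\vx_s,\vtheta_s-\vtheta_{s+1}\rangle|^2\,|\langle\vx_s,\vtheta_\star-\vtheta_{s+1}\rangle|.
\]
We bound its factors one at a time:
\begin{itemize}
\item $\dmu\le L_\mu$;
\item $|\langle\vx_s,\vtheta_\star-\vtheta_{s+1}\rangle|\le 2S$ (we will try to save a factor of $2$ here by bookkeeping);
\item one factor of $|\langle\vx_s,\vtheta_s-\vtheta_{s+1}\rangle|$ is bounded by $\lVert\vx_s\rVert_{\mH_s^{-1}}\lVert\vtheta_s-\vtheta_{s+1}\rVert_{\mH_s}$;
\item the confidence weight gives $w_s\lVert\vx_s\rVert_{\mH_s^{-1}}/g(\tau_s)\le\alpha$;
\item the remaining factor is bounded by $\lVert\vx_s\rVert_{\mH_s^{-1}}\lVert\vtheta_s-\vtheta_{s+1}\rVert_{\mH_s}\le\lambda^{-1/2}\lVert\vtheta_s-\vtheta_{s+1}\rVert_{\mH_s}$.
\end{itemize}
Together these produce the error term $\frac{2\eta\alpha R_sSL_\mu}{\sqrt\lambda}\lVert\vtheta_{s+1}-\vtheta_s\rVert_{\mH_s}^2$. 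This matching of the weight $w_s$ against $\lVert\vx_s\rVert_{\mH_s^{-1}}$ is the delicate step and the only place $\alpha$ enters. If the third-order bound proves too lossy, the fallback is to use the multiplicative self-concordance comparison $\dmu(z')\ge e^{-R_s|z-z'|}\dmu(z)$ between the Hessians at $\vtheta_s$ and $\vtheta_{s+1}$. The $\nabla^2\tilde\ell_s(\vtheta_s)$ contributions on both sides cancel, up to this same error term, after using $\lVert\vtheta_s-\vtheta_{s+1}\rVert^2_{\tilde{\mH}_s}\ge\lVert\vtheta_s-\vtheta_{s+1}\rVert^2_{\mH_s}$.

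\textbf{Step 3: telescope and conclude.} After Steps 1 and 2, each round reads
\[
\lVert\vtheta_{s+1}-\vtheta_\star\rVert_{\mH_{s+1}}^2-\lVert\vtheta_s-\vtheta_\star\rVert_{\mH_s}^2\le 2\eta\big(\tilde\ell_s(\vtheta_\star)-\tilde\ell_s(\vtheta_{s+1})\big)+\Big(\tfrac{2\eta\alpha R_sSL_\mu}{\sqrt\lambda}-1\Big)\lVert\vtheta_{s+1}-\vtheta_s\rVert_{\mH_s}^2 .
\]
Summing over $s=1,\dots,t$ and using $\lVert\vtheta_1-\vtheta_\star\rVert^2_{\lambda\mI}\le4\lambda S^2$ (since $\vtheta_1,\vtheta_\star\in\gB^d(S)$) gives the first claim. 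For the second claim, $\lambda\ge36\eta^2\alpha^2R_s^2S^2L_\mu^2$ gives $2\eta\alpha R_sSL_\mu/\sqrt\lambda\le 1/3$, so the coefficient on $\lVert\vtheta_{s+1}-\vtheta_s\rVert_{\mH_s}^2$ is at most $-\tfrac23$.
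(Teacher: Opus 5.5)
Your argument is the paper's proof with different bookkeeping. The paper applies \Cref{lemma:update-rule} to the quadratic model $f_s(\vtheta)=\langle\nabla\tilde\ell_s(\vtheta_s),\vtheta-\vtheta_s\rangle+\tfrac12\lVert\vtheta-\vtheta_s\rVert^2_{\nabla^2\tilde\ell_s(\vtheta_s)}$ with $A=\mH_s$. You apply it to the linear part with $A=\tilde{\mH}_s$; both encode the same first-order optimality condition. The remaining steps match the paper:
\begin{enumerate}
\item the self-concordant Taylor step, whose constant is exactly $1/(2\eta)$ rather than ``up to an absolute constant''; this is what makes the coefficient of $\lVert\vtheta_{s+1}-\vtheta_\star\rVert^2_{\nabla^2\tilde\ell_s(\vtheta_{s+1})}$ equal to one;
\item the Lagrange remainder $\tfrac12\ddmu(\xi)\langle\vx_s,\vtheta_{s+1}-\vtheta_s\rangle^2$, whose $\tfrac12$ is the factor of $2$ you hoped to save;
\item the bounds $w_s\lVert\vx_s\rVert_{\mH_s^{-1}}/g(\tau_s)\le\alpha$ and $\lVert\vx_s\rVert_{\mH_s^{-1}}\le\lambda^{-1/2}$;
\item the telescoping.
\end{enumerate}
Your reading of the statement (squared norm, bracket $\sum_s\tilde\ell_s(\vtheta_\star)-\sum_s\tilde\ell_s(\vtheta_{s+1})$) is the form the derivation actually yields and the one used in \Cref{app:proof-cs}.

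One step fails as you wrote it: you cannot use $\lVert\vtheta_s-\vtheta_{s+1}\rVert^2_{\tilde{\mH}_s}\ge\lVert\vtheta_s-\vtheta_{s+1}\rVert^2_{\mH_s}$. Write $D=\nabla^2\tilde\ell_s(\vtheta_s)$, $a=\vtheta_s-\vtheta_\star$ and $b=\vtheta_{s+1}-\vtheta_\star$. The three $D$-terms coming from $\tilde{\mH}_s$ combine exactly via $\lVert a\rVert_D^2-\lVert b\rVert_D^2-\lVert a-b\rVert_D^2=2\langle D(a-b),b\rangle$. Adding the gradient gap then gives precisely $2\eta\langle\nabla\tilde\ell_s(\vtheta_{s+1})-\nabla\tilde\ell_s(\vtheta_s)-D(\vtheta_{s+1}-\vtheta_s),\vtheta_{s+1}-\vtheta_\star\rangle$, which is $2\eta$ times the paper's remainder term.

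If you discard $-\eta\lVert a-b\rVert_D^2$, an extra $+\eta\lVert\vtheta_s-\vtheta_{s+1}\rVert_D^2$ survives. It carries no factor $R_s$, so for instance when $R_s=0$ the stated bound no longer follows. For the same reason, your item 2 does not need to compare $\eta D$ with $\nabla^2\tilde\ell_s(\vtheta_{s+1})$. The upgrade $\mH_s\to\mH_{s+1}$ comes entirely from the Taylor step, and the $D$-terms cancel exactly by the identity above, so the multiplicative fallback is unnecessary.
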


\begin{proof}
By the integral formulation of Taylor's expansion and the fact that $\nabla^2 \tilde{\ell}_s(\vtheta) = \frac{{\color{magenta}w_s}}{{\color{BlueViolet}g(\tau_s)}}\dmu (\langle \vx_s, \vtheta \rangle) \vx_s \vx_s^\top$, we get
\begin{align}
    \tilde{\ell}_s (\vtheta_{s+1}) - \tilde{\ell}_s (\vtheta_\star) = \big\langle \nabla\tilde{\ell}_s (\vtheta_{s+1}), \vtheta_{s+1} - \vtheta_\star \big\rangle - \lVert \vtheta_{s+1} - \vtheta_\star \rVert_{{\color{magenta}w_s} \tilde{\alpha}(\vx_s, \vtheta_{s+1}, \vtheta_\star) \vx_s \vx_s^\top/ {\color{BlueViolet}g(\tau_s)}}^2,
\end{align}
where $\tilde{\alpha} (\vx, \vtheta_1 , \vtheta_2) = \int_0^1 (1-v) \dmu( \langle\vx, \vtheta_1 \rangle + v \langle \vx, \vtheta_2 - \vtheta_1\rangle) dv$.
Since $\tilde{\alpha}(\vx_s, \vtheta_{s+1}, \vtheta_\star) \geq \dmu ( \langle \vx_s, \vtheta_{s+1} \rangle) / (2+ 2R_s S)$ (by Lemma \ref{lemma:self-concordance}), we have
\begin{equation}
    \frac{{\color{magenta}w_s} \tilde{\alpha}(\vx_s, \vtheta_{s+1}, \vtheta_\star)}{{\color{BlueViolet}g(\tau_s)}} \vx_s \vx_s^\top \succeq \frac{1}{2 + 2R_s S} \nabla^2 \tilde{\ell}_s (\vtheta_{s+1}),
\end{equation}
which implies that
\begin{equation}
    \tilde{\ell}_s (\vtheta_{s+1}) - \tilde{\ell}_s (\vtheta_\star) \leq \big\langle \nabla\tilde{\ell}_s (\vtheta_{s+1}), \vtheta_{s+1} - \vtheta_\star \big\rangle - \frac{1}{2+2R_s S}\lVert \vtheta_{s+1} - \vtheta_\star \rVert_{\nabla^2 \tilde{\ell}_s (\vtheta_{s+1})}^2.
\end{equation}
Denoting by $f_s(\vtheta) = \big\langle \nabla\tilde{\ell}_s(\vtheta_s), \vtheta - \vtheta_s \big\rangle + \frac{1}{2} \lVert \vtheta - \vtheta_s \rVert_{\nabla^2 \tilde{\ell}_s (\vtheta_s)}^2$, we decompose the first term of the r.h.s of the above equation as:
\begin{align}
    \big\langle \nabla\tilde{\ell}_s (\vtheta_{s+1}), \vtheta_{s+1} - \vtheta_\star \big\rangle & = \big\langle \nabla\tilde{\ell}_s (\vtheta_{s+1}) - \nabla f_s(\vtheta_{s+1}), \vtheta_{s+1} - \vtheta_\star \big\rangle\\
    &\quad +\big\langle \nabla f_s(\vtheta_{s+1}), \vtheta_{s+1} - \vtheta_\star \big\rangle.
\end{align}
The first term is bounded as 
\begin{align}
    \Big\langle & \nabla\tilde{\ell}_s (\vtheta_{s+1}) - \nabla f_s(\vtheta_{s+1}), \vtheta_{s+1} - \vtheta_\star \Big\rangle\\
    & = \Big\langle \nabla\tilde{\ell}_s (\vtheta_{s+1}) - \nabla \tilde{\ell}_s(\vtheta_s) - \nabla^2\tilde{\ell}_s(\vtheta_s) (\vtheta_{s+1} - \vtheta_{s}), \vtheta_{s+1} - \vtheta_\star \Big\rangle\\
    & = \frac{{\color{magenta}w_s}}{{\color{BlueViolet}g(\tau_s)}} \Big\langle \mu( \langle \vx_s, \vtheta_{s+1} \rangle) \vx_s - \mu( \langle \vx_s, \vtheta_s \rangle) \vx_s - \dmu(\langle \vx_s, \vtheta_s \rangle) \vx_s \vx_s^\top ( \vtheta_{s+1} - \vtheta_s ), \vtheta_{s+1} - \vtheta_\star \Big\rangle\\
    & = \frac{{\color{magenta}w_s} \langle \vx_s,  \vtheta_{s+1} - \vtheta_\star \rangle}{{\color{BlueViolet}g(\tau_s)}} \Big( \mu( \langle \vx_s, \vtheta_{s+1} \rangle) - \mu( \langle \vx_s, \vtheta_s \rangle) - \dmu( \langle \vx_s, \vtheta_s \rangle) \langle \vx_s, \vtheta_{s+1} - \vtheta_s \rangle \Big)\\
    & = \frac{{\color{magenta}w_s} \langle \vx_s, \vtheta_{s+1} - \vtheta_\star \rangle}{{\color{BlueViolet}g(\tau_s)}} \times \frac{\ddmu ( \xi_s )}{2} \langle \vx_s,  \vtheta_{s+1} - \vtheta_s \rangle^2 \tag{mean value theorem}\\
    & \leq \frac{{\color{magenta}w_s} (2 S) }{{\color{BlueViolet}g(\tau_s)}} \times \frac{ R_s L_\mu}{2}\langle \vx_s,  \vtheta_{s+1} - \vtheta_s \rangle^2 \tag{\Cref{assumption:bound,assumption:link-function}}\\
    & \leq \frac{{\color{magenta}w_s} R_s S L_\mu}{{\color{BlueViolet}g(\tau_s)}} \lVert \vx_s \rVert_{\mH_s^{-1}}^2 \lVert \vtheta_{s+1} - \vtheta_s \rVert_{\mH_s}^2, \tag{Cauchy-Schwarz}
\end{align}
where $\xi_s$ lies between $ \langle \vx_s, \vtheta_{s+1} \rangle$ and $\langle \vx_s, \vtheta_s \rangle$.
Note that by the definition of ${\color{magenta}w_s}$, we have ${\color{magenta}w_s} \leq \alpha {\color{BlueViolet}g(\tau_s)} / \lVert \vx_s \rVert_{\mH_s^{-1}}$, which yields ${\color{magenta}w_s} \lVert \vx_s \rVert_{\mH_s^{-1}} / {\color{BlueViolet}g(\tau_s)} \leq \alpha$; and by the definition of $\mH_s \succeq \lambda \mI_d$, $\lVert \vx_s \rVert_{\mH_s^{-1}} \leq 1 / \sqrt{\lambda}$.
Then, we finally get
\begin{equation}
    \Big\langle \nabla\tilde{\ell}_s (\vtheta_{s+1}) - \nabla f_s(\vtheta_{s+1}), \vtheta_{s+1} - \vtheta_\star \Big\rangle \leq \frac{\alpha R_s S L_\mu }{\sqrt{\lambda}} \lVert \vtheta_{s+1} - \vtheta_s \rVert_{\mH_s}^2.
\end{equation}
The second term is bounded by applying the Lemma \ref{lemma:update-rule} with $u = \vtheta_\star$:
\begin{equation}
    \lVert \vtheta_{s+1} - \vtheta_\star \rVert_{\mH_s}^2 \leq 2 \eta \big\langle \nabla f_s ( \vtheta_{s+1}), \vtheta_\star - \vtheta_{s+1} \big\rangle + \lVert \vtheta_s - \vtheta_\star \rVert_{\mH_s}^2 - \lVert \vtheta_{s+1} - \vtheta_{s} \rVert_{\mH_s}^2,
\end{equation}
which is equivalent to
\begin{equation}
    \big\langle \nabla f_s ( \vtheta_{s+1}), \vtheta_{s+1}  - \vtheta_\star \big\rangle \leq \frac{1}{2\eta} \left(  \lVert \vtheta_s - \vtheta_\star \rVert_{\mH_s}^2  - \lVert \vtheta_{s+1} - \vtheta_\star \rVert_{\mH_s}^2 - \lVert \vtheta_{s+1} - \vtheta_{s} \rVert_{\mH_s}^2 \right).
\end{equation}
Combining them with setting $\eta = 1+R_s S $, we get 
\begin{align}
    \tilde{\ell}_s (\vtheta_{s+1}) - \tilde{\ell}_s (\vtheta_\star) & \leq \frac{1}{2\eta} \left(  \lVert \vtheta_s - \vtheta_\star \rVert_{\mH_s}^2  - \lVert \vtheta_{s+1} - \vtheta_\star \rVert_{\mH_s}^2 - \lVert \vtheta_{s+1} - \vtheta_{s} \rVert_{\mH_s}^2 \right)\\
    & \quad + \frac{\alpha R_s S L_\mu }{\sqrt{\lambda}} \lVert \vtheta_{s+1} - \vtheta_s \rVert_{\mH_s}^2 - \frac{1}{2+2R_sS}\lVert \vtheta_{s+1} - \vtheta_\star \rVert_{\nabla^2 \tilde{\ell}_s (\vtheta_{s+1})}^2\\
    & = \frac{1}{2\eta} \left(  \lVert \vtheta_s - \vtheta_\star \rVert_{\mH_s}^2  - \lVert \vtheta_{s+1} - \vtheta_\star \rVert_{\mH_{s+1}}^2 - \lVert \vtheta_{s+1} - \vtheta_{s} \rVert_{\mH_s}^2 \right)\\
    & \quad + \frac{\alpha R_s S L_\mu }{\sqrt{\lambda}} \lVert \vtheta_{s+1} - \vtheta_s \rVert_{\mH_s}^2.
\end{align}
Taking the summation over $s \in [t]$ yields
\begin{align}
    \sum_{s=1}^{t} \tilde{\ell}_s (\vtheta_{s+1}) - \sum_{s=1}^{t} \tilde{\ell}_s ( \vtheta_\star) & \leq \frac{1}{2\eta} \left( \lVert \vtheta_1 - \vtheta_\star \rVert_{\mH_1}^2 - \lVert \vtheta_{t+1} - \vtheta_\star \rVert_{\mH_{t+1}}^2 - \sum_{s=1}^{t} \lVert \vtheta_{s+1} - \vtheta_{s} \rVert_{\mH_s}^2 \right)\\
    & \quad  + \frac{\alpha R_s S L_\mu }{\sqrt{\lambda}} \sum_{s=1}^{t} \lVert \vtheta_{s+1} - \vtheta_s \rVert_{\mH_s}^2.
\end{align}
Rearranging the terms with the fact $\lVert \vtheta_1 - \vtheta_\star \rVert_{\mH_1}^2 \leq 4\lambda S^2$, we get 
\begin{align}
    \lVert \vtheta_{t+1} - \vtheta_\star \rVert_{\mH_{t+1}} & \leq 2 \eta \left( \sum_{s=1}^{t} \tilde{\ell}_s (\vtheta_{s+1}) - \sum_{s=1}^{t} \tilde{\ell}_s(\vtheta_\star)\right) + 4\lambda S^2\\
    & \quad +  \frac{2\eta \alpha R_s S L_\mu }{\sqrt{\lambda}} \sum_{s=1}^{t} \lVert \vtheta_{s+1} - \vtheta_s \rVert_{\mH_s}^2 - \sum_{s=1}^{t} \lVert \vtheta_{s+1} - \vtheta_{s} \rVert_{\mH_s}^2.
\end{align}
Furthermore, if we set $\sqrt{\lambda} \geq 6 \eta \alpha R_s S L_\mu$
\begin{align}
    \lVert \vtheta_{t+1} - \vtheta_\star \rVert_{\mH_{t+1}} & \leq 2 \eta \left( \sum_{s=1}^{t} \tilde{\ell}_s (\vtheta_{s+1}) - \sum_{s=1}^{t} \tilde{\ell}_s(\vtheta_\star)\right) + 4\lambda S^2 - \frac{2}{3} \sum_{s=1}^{t} \lVert \vtheta_{s+1} - \vtheta_{s} \rVert_{\mH_s}^2.
\end{align}
\end{proof}

\begin{lemma}\label{lemma:mixture}
    Let $\{ \mathcal{F}_t\}_{t=1}^\infty$ be a filtration defined by $\mathcal{F}_t = \sigma\left ( \{(\vx_s, r_s, c_s, \tau_s)\}_{s=1}^{t-1}\cup \{\vx_t, \tau_t\} \right)$. Let $\{P_t\}_{t=1}^\infty$ be a stochastic process such that the random variable $P_t$ is a distribution over $\mathbb{R}^d$ and is $\mathcal{F}_t$-measurable. Moreover assume that the loss function $\ell_t$ is $\mathcal{F}_{t+1}$-measurable. For any $t \geq 1$, define
    \begin{equation}
        M_t \coloneqq \exp \left( \sum_{s=1}^{t} {\color{magenta}w_s} \ell_s(\vtheta_\star) - \sum_{s=1}^{t} {\color{magenta}w_s} m_s(P_s) \right).
    \end{equation}
    Then, for any $\delta \in (0, 1]$, we have
    \begin{equation}
        \mathbb{P} \left( \forall t \geq 1, \sum_{s=1}^{t} {\color{magenta}w_s} \ell_s(\vtheta_\star) \leq \sum_{s=1}^{t} {\color{magenta}w_s} m_s(P_s) + \log \frac{1}{\delta} \right) \geq 1- \delta.
    \end{equation}
\end{lemma}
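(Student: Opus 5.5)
The plan is to show that $(M_t)_{t\ge 0}$, with $M_0 := 1$, is a nonnegative supermartingale with respect to $(\mathcal{F}_{t+1})_{t\ge0}$, and then apply Ville's inequality. Ville gives $\mathbb{P}(\exists t: M_t \ge 1/\delta) \le \delta\,\mathbb{E}[M_0] = \delta$. Taking logarithms yields exactly the stated uniform-in-time bound $\sum_{s\le t} w_s \ell_s(\vtheta_\star) \le \sum_{s\le t} w_s m_s(P_s) + \log\frac1\delta$.

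\textbf{Measurability.} First I check which quantities are determined before $r_t$ is drawn. The weight $w_t = 1\wedge \alpha g(\tau_t)/\lVert \vx_t\rVert_{\mH_t^{-1}}$ depends on $\vx_t$, on $\tau_t$, and on $\mH_t$. The matrix $\mH_t$ is built from the past iterates $\vtheta_{s+1}$, $s<t$, which are functions of the corrupted rewards $\tilde r_s = r_s + c_s$. Hence $w_t$ is $\mathcal{F}_t$-measurable, and so is $P_t$ by assumption. The only new randomness at step $t$ is $r_t$, with conditional density $p(\cdot\mid \vx_t,\tau_t;\vtheta_\star)$ given $\mathcal{F}_t$. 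The corruption $c_t$ is chosen after $r_t$, but it never enters $\ell_t$ or $m_t$, which use the true reward $r_t$. So it suffices to show
\[
\mathbb{E}\big[\exp\big(w_t(\ell_t(\vtheta_\star) - m_t(P_t))\big)\,\big|\,\mathcal{F}_t\big] \le 1 .
\]

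\textbf{Key step.} Write
\[
\exp\big(\ell_t(\vtheta_\star) - m_t(P_t)\big) = \frac{\mathbb{E}_{\vtheta\sim P_t}[e^{-\ell_t(\vtheta)}]}{e^{-\ell_t(\vtheta_\star)}} =: X_t \ge 0 .
\]
Since $e^{-\ell_t(\vtheta)} = \exp\big((r\langle \vx_t,\vtheta\rangle - m(\langle \vx_t,\vtheta\rangle))/g(\tau_t)\big)$, the quantity $X_t$ is the ratio of the mixture likelihood to the true likelihood. The base-measure factor $h(r,\tau_t)$ and the normalizing constant are common to numerator and denominator. This is the point where the assumption that $\tau_t$ is observed, and hence $\mathcal{F}_t$-measurable, is used.

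By Fubini,
\[
\mathbb{E}[X_t\mid\mathcal{F}_t] = \mathbb{E}_{\vtheta\sim P_t}\!\int p(r\mid \vx_t,\tau_t;\vtheta)\,d\nu(r) = 1 .
\]
Because $w_t\in(0,1]$ is $\mathcal{F}_t$-measurable, the map $x\mapsto x^{w_t}$ is concave on $[0,\infty)$. Conditional Jensen then gives
\[
\mathbb{E}[X_t^{w_t}\mid\mathcal{F}_t] \le \big(\mathbb{E}[X_t\mid\mathcal{F}_t]\big)^{w_t} = 1 ,
\]
so $\mathbb{E}[M_t\mid\mathcal{F}_t] = M_{t-1}\,\mathbb{E}[X_t^{w_t}\mid\mathcal{F}_t] \le M_{t-1}$.

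\textbf{Main obstacle.} The delicate part is the rigorous justification of the Fubini and Jensen steps. One must handle integrability and allow $X_t$ to take the value $+\infty$ (nonnegativity makes Tonelli applicable). One must also check carefully that conditioning on $\mathcal{F}_t$ leaves $r_t$ distributed as the GLM with parameter $\vtheta_\star$, even though the adaptive adversary chooses $\vx_t$ and $\tau_t$ from the history. The filtration in the statement is defined exactly so that this holds.
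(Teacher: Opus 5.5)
Your proposal is correct and follows essentially the same route as the paper. Both factor $M_t = M_{t-1}\big(\exp(\ell_t(\vtheta_\star) - m_t(P_t))\big)^{w_t}$, apply conditional Jensen using $\mathcal{F}_t$-measurability of $w_t \in (0,1]$ and concavity of $x \mapsto x^{w_t}$, use Fubini to show the mixture-to-true likelihood ratio has conditional mean at most one, and finish with Ville's inequality; your extra care about the base-measure factor cancelling in the ratio and about why $w_t$ (through $\mH_t$) is $\mathcal{F}_t$-measurable just makes explicit points the paper's proof glosses over.
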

\begin{proof}
We first verify that $M_t$ is a supermartingale, and apply Ville's inequality to this supermartingale to complete the proof.
By the definition of $M_t$, it can be written as:
\begin{align}
    M_t & = M_{t-1} \exp \left( w_t \ell_t(\vtheta_\star) - w_t m_t(P_t) \right) = M_{t-1} \left[ \exp \left( \ell_t(\vtheta_\star) - m_t(P_t) \right)\right]^{w_t}.
\end{align}
Note that $\exp(-\ell_t(\vtheta)) = p(r_t | \vx_t, \vtheta)$ and $\exp(-m_t(P_t)) = \mathbb{E}_{\vtheta\sim P_t} [ p(r_t | \vx_t, \vtheta)]$, where $p(r | \vx, \vtheta)$ denotes the conditional probability of reward $r$ given selected arm $\vx$ and parameter $\vtheta$ (see Eqn.~\eqref{eqn:GLM}).
Here, the crucial observation (as mentioned in the main text) is that the mapping $x \mapsto x^{w_t}$ is concave in the domain $x \in [0, \infty)$ as $w_t$ is $\mathcal{F}_t$-measurable and $w_t \in (0,1]$, and thus we can utilize the expectation version of Jensen's inequality~\citep{jensen}:
\begin{align}
    \mathbb{E} [ M_t | \mathcal{F}_t ] & = M_{t-1} \mathbb{E} \left[ \left( \frac{ \mathbb{E}_{\vtheta \sim P_t} [ p(r_t | \vx_t, \vtheta)]}{p(r_t | \vx_t, \vtheta_\star)} \right)^{w_t} \middle| \mathcal{F}_{t} \right]\\
    &\leq M_{t-1} \left( \mathbb{E} \left[ \frac{ \mathbb{E}_{\vtheta \sim P_t} [ p(r_t | \vx_t, \vtheta)]}{p(r_t | \vx_t, \vtheta_\star)}  \middle| \mathcal{F}_{t} \right]\right)^{w_t} \tag{Jensen's inequality} \\
    & = M_{t-1} \left( \int \frac{ \mathbb{E}_{\vtheta \sim P_t} [ p(r | \vx_t, \vtheta)]}{p(r | \vx_t, \vtheta_\star)}p(r | \vx_t, \vtheta_\star) dr\right)^{w_t}\\
    & = M_{t-1} \left( \int \mathbb{E}_{\vtheta \sim P_t} [ p(r | \vx_t, \vtheta)] dr\right)^{w_t}\\
    & = M_{t-1} \left( \mathbb{E}_{\vtheta \sim P_t} \left[ \int p(r | \vx_t, \vtheta) dr \right]\right)^{w_t} \tag{Fubini's theorem} \\
    & = M_{t-1} \tag{$\int p(r | \vx_s, \vtheta) dr = 1$}.
\end{align}
The proof then concludes by applying Ville's inequality~\citep{Ville1939} to the supermartingale $(M_t)_{t \geq 1}$.
\end{proof}

\begin{lemma}[Weighted version of Lemma 6 of \citet{zhang2025onepass}]\label{lemma:mixture-ineq}
    Under \Cref{assumption:link-function}, let $P_s = \mathcal{N}(\vtheta_s, \gamma \mH_s^{-1})$, where $\gamma$ is any positive constant. Then, if we set $\lambda \geq 64 d \gamma R_s^2 /7$, we have
    \begin{equation}
        \sum_{s=1}^{t} {\color{magenta}w_s} m_s(P_s) \leq \sum_{s=1}^{t} {\color{magenta}w_s} \ell_s ( \vtheta_{s+1} ) + \sum_{s=1}^{t} \frac{1}{2\gamma} \lVert \vtheta_s - \vtheta_{s+1} \rVert_{\mH_s}^2 + \left( 2\gamma + \frac{1}{2} \right) \log \frac{\mathrm{det} (\mH_{t+1})}{\mathrm{det}(\mH_1)},
    \end{equation}
    where mix loss is defined as $m_s(P_s) = -\log (\mathbb{E}_{\vtheta \sim P_s} [ \exp( - \ell_s(\vtheta)])$.
\end{lemma}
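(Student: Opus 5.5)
The plan is to follow the proof of \citet[Lemma 6]{zhang2025onepass} and bound the weighted mix loss $w_s m_s(P_s)$ one round at a time. Once a per-round inequality is in hand, summing over $s$ gives the claimed bound.

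\textbf{Per-round reduction.} Fix $s$. Write $\ell_s(\vtheta) = \ell_s(\vtheta_{s+1}) + \langle \nabla \ell_s(\vtheta_{s+1}), \vtheta - \vtheta_{s+1}\rangle + \text{(remainder)}$, using the integral form of Taylor's theorem. Since $\ell_s$ is a one-dimensional function of $\langle \vx_s, \vtheta\rangle$, \Cref{lemma:self-concordance} controls the remainder by $\dmu(\langle \vx_s, \vtheta_{s+1}\rangle)\langle \vx_s, \vtheta-\vtheta_{s+1}\rangle^2/g(\tau_s)$ times a factor of the form $e^{R_s |\langle \vx_s, \vtheta - \vtheta_{s+1}\rangle|}$. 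Because the weights satisfy $w_s \le 1$, Jensen's inequality in the form $-w\log \E[e^{-X}] \le -\log \E[e^{-wX}]$ gives
\begin{equation}
w_s m_s(P_s) \le -\log \E_{\vtheta\sim P_s}\big[e^{-w_s \ell_s(\vtheta)}\big].
\end{equation}
This replaces the weighted mix loss by the mix loss of the weighted (uncorrupted) loss $w_s\ell_s$. Its Hessian at $\vtheta_{s+1}$ is exactly the increment $\mH_{s+1}-\mH_s$, so the weighted analysis should match the unweighted one, with $\nabla^2 \tilde\ell_s$ in place of the unweighted Hessian.

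\textbf{Gaussian computation.} Next, restrict the expectation to the good event $\{|\langle \vx_s, \vtheta-\vtheta_s\rangle| \le 1/R_s\}$ (or a similar truncation). On that event the remainder is at most a constant times $\|\vtheta-\vtheta_{s+1}\|^2_{\nabla^2(w_s\ell_s)(\vtheta_{s+1})}$. The resulting Gaussian integral of $\exp(-\text{linear} - c\cdot\text{quadratic})$ against $\mathcal N(\vtheta_s,\gamma \mH_s^{-1})$ has a closed form. It produces three pieces:
\begin{itemize}
\item the value $w_s\ell_s(\vtheta_{s+1})$;
\item a term at most $\frac{1}{2\gamma}\|\vtheta_s-\vtheta_{s+1}\|^2_{\mH_s}$, obtained by completing the square, where the mean shift from $\vtheta_s$ to $\vtheta_{s+1}$ appears;
\item a log-determinant term $\tfrac12\log\det(\mI + c\gamma \mH_s^{-1/2}\nabla^2(w_s\ell_s)\mH_s^{-1/2})$, which is at most $(2\gamma+\tfrac12)\log\frac{\det \mH_{s+1}}{\det \mH_s}$.
\end{itemize}
The linear-gradient term is handled by pairing it with the completed square. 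Telescoping the log-determinants over $s$ gives $\log\frac{\det \mH_{t+1}}{\det \mH_1}$.

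\textbf{Main obstacle.} The hardest step is the truncation: showing that the mass of $P_s$ outside the good event costs at most an additive constant absorbed into the log-determinant term. Here the condition $\lambda \ge 64 d\gamma R_s^2/7$ enters. Under $P_s$, the quantity $\langle \vx_s,\vtheta-\vtheta_s\rangle$ is Gaussian with variance $\gamma\|\vx_s\|^2_{\mH_s^{-1}} \le \gamma/\lambda \le 7/(64 d R_s^2)$. A Gaussian tail bound then makes the bad event have tiny probability, so $-\log$ of the truncated expectation loses at most $\log(1/(1-p))$. This loss must be shown to be dominated by the log-determinant increment. I would try to keep the constants of Zhang et al.\ verbatim, checking only that every place the unweighted Hessian appears now carries the factor $w_s\le1$, which only helps.
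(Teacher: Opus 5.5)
Your first step is correct and is the right bridge: by concavity of $y\mapsto y^{w_s}$, $w_s m_s(P_s)\le-\log\E_{P_s}[e^{-w_s\ell_s(\vtheta)}]$, and $\nabla^2(w_s\ell_s)(\vtheta_{s+1})=\mH_{s+1}-\mH_s$ exactly. The gap is the truncation step, which cannot give the stated bound.

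\textbf{(i) The truncation cost is misidentified.} Restricting to the good event $G$ costs $-\log$ of the mass of $G$ under $P_s$ \emph{tilted by} $e^{-q}$, where $q$ is your linear-plus-quadratic surrogate. It is not $\log\frac{1}{1-p}$ with $p=P_s(G^c)$. You keep the linear term $\langle\nabla(w_s\ell_s)(\vtheta_{s+1}),\vtheta-\vtheta_{s+1}\rangle$, which contains the realized reward and is unbounded for Gaussian or Poisson rewards. So the tilted mean can lie far outside $G$. Moreover, $G$ is centred at $\vtheta_s$, whereas the Taylor remainder is governed by $|\langle\vx_s,\vtheta-\vtheta_{s+1}\rangle|$. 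Since $|\langle\vx_s,\vtheta_s-\vtheta_{s+1}\rangle|$ can be as large as $2S$, the ``constant'' on $G$ is of order $e^{1+2R_sS}$, which is incompatible with the coefficient $2\gamma+\tfrac12$.

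\textbf{(ii) Truncation cannot work at all here.} Even after recentring, discarding the mass of a bad event costs an amount that does not vanish with the Hessian increment. As $\mH_{s+1}-\mH_s\to0$ it tends to $-\log(1-P(G^c))>0$, and $P(G^c)$ depends only on $\gamma R_s^2\lVert\vx_s\rVert^2_{\mH_s^{-1}}$. The available slack is $(2\gamma+\tfrac12)\log\bigl(1+\tfrac{w_s}{g(\tau_s)}\dmu(\langle\vx_s,\vtheta_{s+1}\rangle)\lVert\vx_s\rVert^2_{\mH_s^{-1}}\bigr)$, which tends to $0$ as $w_s\to0$ (or as $\dmu$ becomes small). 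Small $w_s$ is forced exactly when $\lVert\vx_s\rVert_{\mH_s^{-1}}$ is large, which only increases $P(G^c)$. Hence the truncation loss cannot be ``dominated by the log-determinant increment''; summed over rounds it produces a term linear in $t$.

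\textbf{The missing idea: no truncation.} Use a remainder bound valid everywhere together with Gaussian integrability.
\begin{enumerate}
\item \emph{Global remainder bound.} Write $u=\vtheta-\vtheta_{s+1}$ and $\mA=\mH_{s+1}-\mH_s$. By \Cref{lemma:self-concordance}, the remainder of $w_s\ell_s$ around $\vtheta_{s+1}$ is at most $e^{R_s^2\langle\vx_s,u\rangle^2}\lVert u\rVert^2_{\mA}$ for every $\vtheta$.
\item \emph{Change of measure.} Pass from $\mathcal N(\vtheta_s,\gamma\mH_s^{-1})$ to $\mathcal N(\vtheta_{s+1},\gamma\mH_{s+1}^{-1})$. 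The density ratio contributes exactly $\frac{1}{2\gamma}\lVert\vtheta_s-\vtheta_{s+1}\rVert^2_{\mH_s}+\frac12\log\frac{\det\mH_{s+1}}{\det\mH_s}$. It also leaves a linear term in $u$ and a harmless $+\frac{1}{2\gamma}\lVert u\rVert^2_{\mA}$ in the exponent.
\item \emph{Linear term.} It is removed by symmetry: $\E[\cosh(\langle v,u\rangle)e^{-R(u)}]\ge\E[e^{-R(u)}]$ for even $R$.
\item \emph{Jensen and Cauchy--Schwarz.} Apply $-\log\E e^{-X}\le\E X$. With $u\sim\mathcal N(0,\gamma\mH_{s+1}^{-1})$, Cauchy--Schwarz gives $\E[e^{R_s^2\langle\vx_s,u\rangle^2}\lVert u\rVert^2_{\mA}]\le(\E e^{2R_s^2\lVert u\rVert_2^2})^{1/2}\sqrt{3}\,\gamma\operatorname{tr}(\mH_{s+1}^{-1}\mA)$.
\item \emph{Role of the condition on $\lambda$.} $\E e^{2R_s^2\lVert u\rVert_2^2}\le(1-4\gamma R_s^2/\lambda)^{-d/2}\le\frac43$ precisely when $\lambda\ge\frac{64}{7}d\gamma R_s^2$.
\end{enumerate}
The resulting cost is at most $2\gamma\operatorname{tr}(\mH_{s+1}^{-1}\mA)\le2\gamma\log\frac{\det\mH_{s+1}}{\det\mH_s}$, which is proportional to the increment. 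This is where both $\frac{64}{7}$ and $2\gamma+\frac12$ come from. Combined with your Jensen step in $w_s$ and telescoping over $s$, it proves the lemma.
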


\newpage
\section{\texorpdfstring{Proof of \Cref{theorem:regret}: Regret Upper Bound}{Proof of Theorem 6: Regret Upper Bound}}
\label{app:proof-regret}

The entire proof will proceed under the event $\mathcal{E}_\delta \coloneqq \{ \forall t \geq 1, \vtheta_\star \in \mathcal{C}_t(\delta) \}$, where $\mathcal{C}_t(\delta)$ is the confidence sequence constructed in Theorem \ref{theorem:confidence-sequence}; the above event holds with probability at least $1 - \delta$.

We begin with the following lemma, whose proof is given in \Cref{sec:regret-decompostion-proof}, decomposing the regret into four terms:

\begin{lemma}\label{lemma:decomposition}
    Under the event $\mathcal{E}_\delta$, the regret is upper bounded as:
    \begin{align}
        \Reg(T) & \leq 2 \big( \beta_T(\delta) + 2\eta \alpha C \big) \underbrace{\sum_{t \in [T]: w_t = 1} \sqrt{\dmu(\langle \vx_t, \vtheta_\star \rangle)} \left\lVert \sqrt{\dmu( \langle \vx_t, \vtheta_{t+1} \rangle)} \vx_t \right\rVert_{\mH_t^{-1}}}_{\triangleq S_0}\\
        & \quad +2 \big( \beta_T(\delta) + 2\eta \alpha C \big) \underbrace{\sum_{t \in [T]: w_t < 1} \dmu(\langle \vx_t, \vtheta_{t+1} \rangle) \left\lVert \vx_t \right\rVert_{\mH_t^{-1}}}_{\triangleq S_1}\\
        & \quad + 6 R_s L_\mu \big( \beta_{T+1}(\delta) + 2\eta \alpha C \big)^2 \underbrace{\sum_{t \in [T]: w_t = 1} \lVert \vx_t \rVert_{\mH_t^{-1}}^2}_{\triangleq S_2}\\
        & \quad + 6 R_s L_\mu \big( \beta_{T+1}(\delta) + 2\eta \alpha C \big)^2 \underbrace{\sum_{t \in [T]: w_t < 1} \lVert \vx_t \rVert_{\mH_t^{-1}}^2}_{\triangleq S_3}.
    \end{align}
\end{lemma}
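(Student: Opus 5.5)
Under $\mathcal{E}_\delta$ we have $\vtheta_\star\in\mathcal{C}_t(\delta)$ for every $t$. Write $\tilde\beta_t \coloneqq \beta_t(\delta)+2\eta\alpha C$, and let $\tilde\vtheta_t\in\mathcal{C}_t(\delta)$ be the optimistic parameter paired with $\vx_t$. Optimism gives $\langle \vx_{t,\star},\vtheta_\star\rangle \le \langle \vx_t,\tilde\vtheta_t\rangle$. Since $\mu$ is nondecreasing, the instantaneous regret is therefore at most $\mu(\langle\vx_t,\tilde\vtheta_t\rangle)-\mu(\langle\vx_t,\vtheta_\star\rangle)$.

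\textbf{Step 1: first-order expansion.} I expand this difference around $\vtheta_\star$ by Taylor's theorem:
\[
\mu(\langle\vx_t,\tilde\vtheta_t\rangle)-\mu(\langle\vx_t,\vtheta_\star\rangle) = \dmu(\langle\vx_t,\vtheta_\star\rangle)\,\Delta_t + \tfrac12\ddmu(\xi_t)\,\Delta_t^2, \qquad \Delta_t\coloneqq\langle\vx_t,\tilde\vtheta_t-\vtheta_\star\rangle.
\]
For the linear term I use Cauchy--Schwarz in $\mH_t$. Both $\tilde\vtheta_t$ and $\vtheta_\star$ lie in $\mathcal{C}_t(\delta)$, so the triangle inequality through $\vtheta_t$ gives $|\Delta_t|\le 2\tilde\beta_t\lVert\vx_t\rVert_{\mH_t^{-1}}$.

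\textbf{Step 2: split by the weight.}
\begin{itemize}
\item On rounds with $w_t=1$, I write $\dmu(\langle\vx_t,\vtheta_\star\rangle) = \sqrt{\dmu(\langle\vx_t,\vtheta_\star\rangle)}\cdot\sqrt{\dmu(\langle\vx_t,\vtheta_\star\rangle)}$. I then compare the second factor with $\sqrt{\dmu(\langle\vx_t,\vtheta_{t+1}\rangle)}$ using self-concordance, $\dmu(z')\le\dmu(z)\exp(R_s|z-z'|)$. A cleaner alternative is the additive form $\dmu(\langle\vx_t,\vtheta_\star\rangle)\le \dmu(\langle\vx_t,\vtheta_{t+1}\rangle) + R_s L_\mu |\langle\vx_t,\vtheta_\star-\vtheta_{t+1}\rangle|$. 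This yields the $S_0$ term plus a remainder.
\item On rounds with $w_t<1$, I use $\dmu(\langle\vx_t,\vtheta_\star\rangle)\le \dmu(\langle\vx_t,\vtheta_{t+1}\rangle)+R_sL_\mu|\langle\vx_t,\vtheta_\star-\vtheta_{t+1}\rangle|$ directly. This gives the $S_1$ term plus a remainder.
\end{itemize}
Both remainders have the form $R_sL_\mu\,|\langle \vx_t,\vtheta_\star-\vtheta_{t+1}\rangle|\cdot|\Delta_t|$.

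\textbf{Step 3: control the remainders.} I bound the remainder term $|\langle\vx_t,\vtheta_\star-\vtheta_{t+1}\rangle| \le \lVert\vx_t\rVert_{\mH_t^{-1}}\lVert\vtheta_{t+1}-\vtheta_\star\rVert_{\mH_t}$. Then $\mH_t\preceq\mH_{t+1}$ and \Cref{theorem:confidence-sequence} at time $t+1$ give $\lVert\vtheta_{t+1}-\vtheta_\star\rVert_{\mH_t}\le\tilde\beta_{t+1}$. Combined with $|\Delta_t|\le 2\tilde\beta_t\lVert\vx_t\rVert_{\mH_t^{-1}}$, each remainder is at most $2R_sL_\mu\tilde\beta_{T+1}^2\lVert\vx_t\rVert^2_{\mH_t^{-1}}$.

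The quadratic Taylor term is handled the same way: $|\ddmu|\le R_s\dmu\le R_sL_\mu$ by \Cref{assumption:link-function}, so it contributes at most $2R_sL_\mu\tilde\beta_t^2\lVert\vx_t\rVert^2_{\mH_t^{-1}}$. Collecting constants (a total $\le 6$) and using monotonicity $\tilde\beta_t\le\tilde\beta_T\le\tilde\beta_{T+1}$ gives the $S_2$ and $S_3$ terms with coefficient $6R_sL_\mu\tilde\beta_{T+1}^2$.

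\textbf{Main obstacle.} The delicate step is the $w_t=1$ case, where one factor must be $\sqrt{\dmu(\langle\vx_t,\vtheta_{t+1}\rangle)}$ rather than the value at $\vtheta_\star$. The swap $\sqrt{a}\le\sqrt{b}+\sqrt{|a-b|}$ produces a cross term $\sqrt{\dmu_\star}\sqrt{R_sL_\mu|\cdot|}\,\lVert\vx_t\rVert_{\mH_t^{-1}}\tilde\beta_t$, which I will bound by AM--GM against the quadratic terms. I will need to check that the constants stay within the factor $6$. If the square-root swap proves awkward, I will instead apply the additive comparison to the full $\dmu$ and keep $\sqrt{\dmu(\langle\vx_t,\vtheta_\star\rangle)}$ outside, exactly as $S_0$ is written.
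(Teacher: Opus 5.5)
Your architecture matches the paper's: optimism, Taylor expansion around $\langle\vx_t,\vtheta_\star\rangle$, Cauchy--Schwarz in $\mH_t$ giving $|\Delta_t|\le 2\tilde\beta_t\lVert\vx_t\rVert_{\mH_t^{-1}}$, the additive self-concordance comparison of $\dmu$ at $\vtheta_\star$ versus $\vtheta_{t+1}$, and $\lVert\vtheta_{t+1}-\vtheta_\star\rVert_{\mH_t}\le\tilde\beta_{t+1}$ via $\mH_t\preceq\mH_{t+1}$. Step 1, Step 3 and the $w_t<1$ case are fine. The gap is the step you flagged yourself: neither of your two routes produces the $S_0$ summand with coefficient $2\tilde\beta_T$.

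Write $a_t\coloneqq\dmu(\langle\vx_t,\vtheta_\star\rangle)$ and $b_t\coloneqq\dmu(\langle\vx_t,\vtheta_{t+1}\rangle)$.

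\textbf{Additive route.} The comparison $a_t\le b_t+R_sL_\mu|\langle\vx_t,\vtheta_\star-\vtheta_{t+1}\rangle|$ leaves you with $b_t\lVert\vx_t\rVert_{\mH_t^{-1}}$. That is an $S_1$-type summand, not the $S_0$ summand $\sqrt{a_tb_t}\,\lVert\vx_t\rVert_{\mH_t^{-1}}$. Passing from $b_t$ to $\sqrt{a_tb_t}$ requires $b_t\le a_t$, which fails on some rounds. So ``this yields the $S_0$ term'' is false as written, and your fallback has the same defect.

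\textbf{Square-root route.} The swap $\sqrt{a_t}\le\sqrt{b_t}+\sqrt{|a_t-b_t|}$ leaves a cross term bounded only by $2\tilde\beta_t\sqrt{a_tR_sL_\mu\tilde\beta_{t+1}}\,\lVert\vx_t\rVert_{\mH_t^{-1}}^{3/2}$. This cannot be dominated by $\lVert\vx_t\rVert^2_{\mH_t^{-1}}$ terms, because these norms are small. The natural AM--GM split peels off a multiple of $\tilde\beta_t a_t\lVert\vx_t\rVert_{\mH_t^{-1}}$, which is the very term you are bounding. Absorbing it pushes the coefficient of $S_0$ above $2$, so this route proves the lemma only up to constants, not as stated.

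\textbf{The missing idea} is a case split on the sign of $a_t-b_t$. The paper does exactly this with $\mathcal{T}_1=\{t:a_t\ge b_t\}$ and $\mathcal{T}_2=[T]\setminus\mathcal{T}_1$. If $a_t<b_t$, then $a_t\le\sqrt{a_tb_t}$ with no remainder at all. If $a_t\ge b_t$, apply the additive comparison and then use $b_t\le\sqrt{a_tb_t}$. Equivalently, the inequality $a\le\sqrt{ab}+(a-b)_+$ for $a,b\ge 0$ gives, for every $t$,
\[
a_t\le\sqrt{a_tb_t}+R_sL_\mu\,|\langle\vx_t,\vtheta_\star-\vtheta_{t+1}\rangle| .
\]
This form matters downstream. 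The $\sqrt{b_t}$ factor matches the Hessian $\mH_t$ in the elliptical potential lemma, and the $\sqrt{a_t}$ factor is what yields the $\dmu_{t,\star}$ leading term.

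With this inequality on the $w_t=1$ rounds, your Step 3 bookkeeping goes through unchanged. The remainder is at most $2R_sL_\mu\tilde\beta_{T+1}^2\lVert\vx_t\rVert^2_{\mH_t^{-1}}$ and the Lagrange term at most $2R_sL_\mu\tilde\beta_T^2\lVert\vx_t\rVert^2_{\mH_t^{-1}}$, for a total coefficient $4\le 6$. The paper gets $6$ only because it bounds $\int_0^1(1-v)\dmu\,dv$ by $L_\mu$ rather than $L_\mu/2$.
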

The terms $S_1$, $S_2$, and $S_3$ are bounded as \Cref{lemma:epl-results}, which is derived by the elliptical potential lemma (EPL, \Cref{lemma:epl}).
We first decompose the term $S_0$.
By the Cauchy-Schwarz inequality, we have
\begin{align}
    S_{0} & = \sum_{t \in [T]: w_t=1} \sqrt{ {\color{BlueViolet}g(\tau_t)} \dmu(\langle \vx_t, \vtheta_\star \rangle)} \left\lVert \sqrt{\frac{w_t }{{\color{BlueViolet}g(\tau_t)}} \dmu( \langle \vx_t, \vtheta_{t+1} \rangle)} \vx_t \right\rVert_{\mH_t^{-1}}\\
    & \leq \sqrt{\sum_{t \in [T]: w_t=1} {\color{BlueViolet}g(\tau_t)} \dmu( \langle \vx_t, \vtheta_\star \rangle)} \sqrt{\sum_{t \in [T]: w_t=1}\left\lVert \sqrt{\frac{w_t }{{\color{BlueViolet}g(\tau_t)}} \dmu( \langle \vx_t, \vtheta_{t+1} \rangle)} \vx_t \right\rVert_{\mH_t^{-1}}^2}\tag{Cauchy-Schwarz}\\
    & \leq \sqrt{\sum_{t =1}^{T} {\color{BlueViolet}g(\tau_t)} \dmu(\langle \vx_t, \vtheta_\star \rangle)} \sqrt{\sum_{t=1}^{T}\left\lVert \sqrt{\frac{w_t }{{\color{BlueViolet}g(\tau_t)}} \dmu( \langle \vx_t, \vtheta_{t+1} \rangle)} \vx_t \right\rVert_{\mH_t^{-1}}^2}, \tag{Monotonicity of summation}
\end{align}
where $w_t$ can be inserted in the first equation since $w_t=1$.
We adopt the self-concordance arguments in \citep[p.18]{abeille2021logistic} and \citep[p.21]{zhang2025onepass} as follows.
For each $t$, we have that
\begin{align}
    \dmu( \langle \vx_t, \vtheta_\star \rangle) & = {\color{blue}\dmu(\langle \vx_{t,\star}, \vtheta_\star \rangle)} + \int_0^1 \ddmu( \langle \vx_{t,\star}, \vtheta_\star \rangle + v \langle \vx_t - \vx_{t,\star}, \vtheta_\star \rangle ) dv \langle \vx_t - \vx_{t, \star} , \vtheta_\star \rangle\\
    & \leq {\color{blue}\dmu(\langle \vx_{t,\star}, \vtheta_\star \rangle)} + R_s \int_0^1 \dmu( \langle \vx_{t,\star}, \vtheta_\star \rangle + v \langle \vx_t - \vx_{t,\star}, \vtheta_\star \rangle ) dv \langle \vx_t - \vx_{t, \star} , \vtheta_\star \rangle \tag{Self-concordance of $\mu$}\\
    & = {\color{blue}\dmu(\langle \vx_{t,\star}, \vtheta_\star \rangle)} + R_s \Big[ \mu( \langle \vx_{t, \star}, \vtheta_\star \rangle) - \mu ( \langle \vx_t, \vtheta_\star \rangle) \Big].\tag{Fundamental theorem of calculus}
\end{align}
Therefore, we can upper bound the first factor of the above upper bound as:
\begin{align}
    \sum_{t=1}^{T} {\color{BlueViolet}g(\tau_t)} {\color{blue} \dmu ( \langle \vx_t, \vtheta_\star \rangle)} &\leq \sum_{t=1}^{T} {\color{BlueViolet}g(\tau_t)} {\color{blue}\dmu_{t,\star}} + R_s \sum_{t=1}^{T} {\color{BlueViolet}g(\tau_t)}\big[ \mu(\langle \vx_{t,\star}, \vtheta_\star \rangle) - \mu(\langle \vx_t, \vtheta_\star \rangle) \big]\\
    & \leq \sum_{t=1}^{T} {\color{BlueViolet}g(\tau_t)} {\color{blue}\dmu_{t,\star}} + {\color{BlueViolet}g_{\max}} R_s \cdot \Reg(T),
\end{align}
where we denote ${\color{blue}\dmu_{t,\star} \coloneqq \dmu( \langle \vx_{t, \star}, \vtheta_\star \rangle)}$ and ${\color{BlueViolet}g_{\max}} \coloneqq \max_{t \in [T]} {\color{BlueViolet}g(\tau_t)}$.
Now, the elliptical potential lemma (\Cref{lemma:epl}) implies that
\begin{equation}
    \sum_{t=1}^{T} \left\lVert \sqrt{\frac{w_t}{{\color{BlueViolet}g(\tau_t)}} \dmu( \langle \vx_t, \vtheta_{t+1} \rangle )} \vx_t \right\lVert_{\mH_t^{-1}}^2 \leq 2 d \left( 1 + \frac{L_\mu}{{\color{BlueViolet}g_{\min}}} \right) \log \left( 1+ \frac{L_\mu T}{d {\color{BlueViolet}g_{\min}} \lambda} \right),
\end{equation}
where ${\color{BlueViolet}g_{\min}} \coloneqq \min_{t \in [T]} {\color{BlueViolet}g(\tau_t)}$.
By \Cref{lemma:epl-results} and above inequalities, we have
\begin{align}
    \Reg(T) & \leq \underbrace{2 \big( \beta_T(\delta) + 2\eta \alpha C \big) \sqrt{2d \left( 1+ \frac{L_\mu}{{\color{BlueViolet}g_{\min}}} \right) \log \left( 1 + \frac{TL_\mu}{d {\color{BlueViolet}g_{\min}} \lambda}\right)}}_{\triangleq \gamma_1} \sqrt{\sum_{t=1}^{T}{\color{BlueViolet}g(\tau_t)} {\color{blue}\dmu_{t,\star}} + {\color{BlueViolet}g_{\max}} R_s\cdot\Reg(T)}\\
    & \quad + \underbrace{\frac{4 d {\color{BlueViolet}g_{\max}} \big(1 + L_\mu/ {\color{BlueViolet}g_{\min}} \big)}{\alpha} \big( \beta_T(\delta) + 2\eta \alpha C \big)  \log \left( 1 + \frac{T L_\mu}{d {\color{BlueViolet}g_{\min}} \lambda} \right)}_{\triangleq \gamma_2}\\
    & \quad + \underbrace{12 d  R_s L_\mu \left( {\color{red}\kappa} {\color{BlueViolet}g_{\max}} + \frac{{\color{BlueViolet}g_{\max}}}{ {\color{BlueViolet}g_{\min}}} \right) \big( \beta_{T+1}(\delta) + 2\eta \alpha C \big)^2 \log \left( 1 + \frac{T}{ d {\color{red}\kappa} {\color{BlueViolet}g_{\min}} \lambda} \right)}_{\triangleq \gamma_3}\\
    & \quad + \underbrace{12 d R_s L_\mu \left( \frac{\color{red}\kappa}{\sqrt{\lambda} \alpha} + 1 \right) \big( \beta_{T+1}(\delta) + 2\eta \alpha C \big)^2  \log \left( 1 + \frac{T \alpha}{d {\color{red}\kappa} \sqrt{\lambda}}\right)}_{\triangleq \gamma_4}.
\end{align}

\paragraph{GLB with $R_s=0$.}
When $R_s = 0$, we have ${\color{red}\kappa} = 1$, $\beta_T = \mathcal{O}( \sqrt{ d\log T})$; $\lambda = \frac{d}{4S^2}$, and $\eta = 1$; and the inequality is reduced to
\begin{equation}
    \Reg(T) \leq \gamma_1 \sqrt{\sum_{t=1}^{T} {\color{BlueViolet}g(\tau_t)}{\color{blue}\dmu_{t,\star}}} + \gamma_2.
\end{equation}
Since we set $\alpha = \Theta\big( \sqrt{d} \: (C \vee 1 )^{-1} \big)$, we have 
\begin{equation}
    \gamma_1 = \gO \big( d \log T \big) \quad \text{and} \quad\gamma_2 = \gO \big( d {\color{BlueViolet}g_{\max}} (\log T)^{3/2} (C \vee 1)\big).
\end{equation}
Therefore, when $R_s = 0$, the regret is bounded as
\begin{equation}
    \Reg(T) \leq \mathcal{O} \left( d \log T \sqrt{\sum_{t=1}^{T} {\color{BlueViolet}g(\tau_t)}{\color{blue}\dmu_{t,\star}}} + d {\color{BlueViolet}g_{\max}} (C \vee 1 ) \left( \log T \right)^{3/2} \right),
\end{equation}
which is $\tilde{\mathcal{O}} \big( d \sqrt{\sum_{t} {\color{BlueViolet}g(\tau_t)}{\color{blue}\dmu_{t,\star}}} + d {\color{BlueViolet}g_{\max}} (C \vee 1) \big)$.

\paragraph{Nonlinear GLB with $R_s>0$.}
W.l.o.g., assume that $R_s > 1$.
Solving the inequality with respect to $\sqrt{\sum_{t} {\color{BlueViolet}g(\tau_t)} {\color{blue}\dmu_{t,\star}} + {\color{BlueViolet}g_{\max}} R_s \cdot \Reg(T)}$ using \Cref{lemma:solve-ineq} and rearranging the terms, we get
\begin{equation}
    \Reg(T) \leq 2 \gamma_1 \sqrt{ \sum_{t=1}^{T} {{\color{BlueViolet}g(\tau_t)} \color{blue}\dmu_{t,\star}}}+ 2\gamma_1 \sqrt{{\color{BlueViolet}g_{\max}}R (\gamma_2 + \gamma_3 + \gamma_4) }+ {\color{BlueViolet}g_{\max}} R_s \gamma_1^2 + \gamma_2 + \gamma_3 + \gamma_4.
\end{equation}
Since we set $\alpha = \Theta \big( \sqrt{d} \: (C \vee 1)^{-1} \big)$, we have
\begin{equation}
    \begin{cases}
        \gamma_1 = \gO \big( d \log T \big)\\
        \gamma_2 = \gO \big( d {\color{BlueViolet}g_{\max}} (C \vee 1) (\log T)^{3/2} \big)\\
        \gamma_3 = \gO \big( d^2 {\color{BlueViolet}g_{\max}} {\color{red}\kappa} ( \log T)^2 \big)\\
        \gamma_4 = \gO \big(d {\color{red}\kappa} (C \vee 1) (\log T)^2 \big)
    \end{cases},
\end{equation}
which lead to the regret bound of order 
\begin{equation}\label{eq:first-regret-bound}
    \tilde{\gO} \left( d\sqrt{ \sum_{t=1}^{T} {\color{BlueViolet}g(\tau_t)} {\color{blue}\dmu_{t,\star}}} + d^2 {\color{BlueViolet}g_{\max}} {\color{red}\kappa} + d {\color{red}\kappa} ( C \vee 1 ) \right).
\end{equation}

\qed

For the remaining part, we present the proofs of the supporting lemmas used throughout the proof.

\subsection{\texorpdfstring{Proof of \Cref{lemma:decomposition}: Regret Decomposition}{Proof of Lemma 14: Regret Decomposition}}
\label{sec:regret-decompostion-proof}
Let $(\vx_t, \tilde{\vtheta}_t) = \argmax_{\vx \in \mathcal{X}_t, \vtheta \in \mathcal{C}_t(\delta)} \mu(\langle \vx, \vtheta \rangle)$, then the regret is upper bounded as:
\begin{align}
    \Reg(T) & = \sum_{t=1}^{T} \mu(\langle \vx_{t,\star}, \vtheta_\star \rangle) - \sum_{t=1}^{T} \mu(\langle \vx_t, \vtheta_\star \rangle)\\
    & \leq \sum_{t=1}^{T} \mu(\langle \vx_t, \tilde{\vtheta}_t \rangle) - \sum_{t=1}^{T} \mu( \langle \vx_t, \vtheta_\star \rangle) \tag{Definition of $(\vx_t, \tilde{\vtheta}_t) $}\\
    & = \underbrace{\sum_{t=1}^{T} \dmu(\langle \vx_t, \vtheta_\star \rangle) \langle \vx_t, \tilde{\vtheta}_t - \vtheta_\star \rangle}_{=A} + \underbrace{\sum_{t=1}^{T} \int_0^1 (1-v) \ddmu( \langle \vx_t, \vtheta_\star \rangle + v \langle \vx_t, \tilde{\vtheta}_t - \vtheta_\star \rangle) dv \langle \vx_t, \tilde{\vtheta}_t - \vtheta_\star \rangle^2}_{=B},
\end{align}
where the inequality comes from the definition of $(\vx_t, \tilde{\vtheta}_t)$, and the last equation is due to the integral formulation of Taylor's expansion.

\paragraph{Anaylsis of term $A$}
For term $A$, we have
\begin{align}
    A & = \sum_{t=1}^{T} \dmu( \langle \vx_t, \vtheta_\star \rangle) \langle \vx_t, \tilde{\vtheta}_t - \vtheta_\star \rangle\\
    & \leq \sum_{t=1}^{T} \dmu(\langle \vx_t, \vtheta_\star \rangle) \lVert \vx_t \rVert_{\mH_t^{-1}} \lVert \tilde{\vtheta}_t - \vtheta_\star \rVert_{\mH_t} \tag{Cauchy-Schwarz}\\
    & \leq 2 \sum_{t=1}^{T} \dmu( \langle \vx_t, \vtheta_\star \rangle) \big( \beta_t(\delta) + 2\eta \alpha C \big) \lVert \vx_t \rVert_{\mH_t^{-1}} \tag{Definition of $\tilde{\vtheta}_t$ and \Cref{theorem:confidence-sequence}}\\
    & \leq 2 \big( \beta_T(\delta) + 2\eta \alpha C \big) \sum_{t=1}^{T} \dmu( \langle \vx_t, \vtheta_\star \rangle)\lVert \vx_t \rVert_{\mH_t^{-1}} \tag{Monotonicity of $\beta_t$}\\
    & = \underbrace{2 \big( \beta_T(\delta) + 2\eta \alpha C \big) \sum_{t \in \mathcal{T}_1} \dmu(\langle \vx_t, \vtheta_\star \rangle)\lVert \vx_t \rVert_{\mH_t^{-1}}}_{=A_1}\\
    & \quad + \underbrace{2 \big( \beta_T(\delta) + 2\eta \alpha C \big) \sum_{t \in \mathcal{T}_2} \dmu( \langle \vx_t, \vtheta_\star \rangle)\lVert \vx_t \rVert_{\mH_t^{-1}}}_{=A_2},
\end{align}
where the first inequality is due to the Cauchy-Schwarz inequality and the second inequality follows the fact $\lVert \tilde{\vtheta}_t - \vtheta_\star \rVert_{\mH_t} \leq \lVert \tilde{\vtheta}_t - \vtheta_t \rVert_{\mH_t} + \lVert \vtheta_t - \vtheta_\star \rVert_{\mH_t} \leq 2 (\beta_t(\delta) + 2 \eta \alpha C ) $.
In the last equation, we decompose time horizon $[T]$ into two sets $\mathcal{T}_1$ and $\mathcal{T}_2$, which are defined as $\mathcal{T}_1 = \{ t \in [T] : \dmu(\langle\vx_t , \vtheta_\star \rangle) \geq \dmu( \langle \vx_t, \vtheta_{t+1} \rangle) \}$ and $\mathcal{T}_2 = [T] \setminus \mathcal{T}_1 $.

\noindent\textit{\underline{Bounding term $A_{1}$}}:
Note that by the integral formulation of Taylor's expansion, we have
\begin{align}
    \dmu(\langle \vx_t, \vtheta_\star \rangle) & = \dmu(\langle \vx_t, \vtheta_{t+1} \rangle) + \int_0^1 \ddmu ( \langle \vx_t, \vtheta_{t+1} \rangle + v \langle \vx_t, \vtheta_\star - \vtheta_{t+1}\rangle ) dv \langle \vx_t, \vtheta_\star - \vtheta_{t+1} \rangle\\
    & \leq \dmu(\langle \vx_t, \vtheta_{t+1} \rangle) + R_s \int_0^1 \dmu ( \langle \vx_t, \vtheta_{t+1} \rangle + v \langle \vx_t, \vtheta_\star - \vtheta_{t+1} \rangle) dv \langle \vx_t, \vtheta_\star - \vtheta_{t+1} \rangle \tag{Self-concordance of $\mu$}\\
    & \leq \dmu(\langle \vx_t, \vtheta_{t+1} \rangle) + R_s L_\mu \lVert \vx_t \rVert_{\mH_t^{-1}} \lVert \vtheta_\star - \vtheta_{t+1}\rVert_{\mH_{t+1}} \tag{\Cref{assumption:bound}}\\
    & \leq \dmu( \langle \vx_t, \vtheta_{t+1} \rangle) + R_s L_\mu \big( \beta_{t+1}(\delta) + 2\eta \alpha C \big)\lVert \vx_t \rVert_{\mH_t^{-1}}, \tag{\Cref{theorem:confidence-sequence}}
\end{align}
where the first inequality holds by the self-concordance of $\mu$, the second inequality follows \Cref{assumption:bound} and Cauchy-Schwarz  inequality, and the last inequality comes from \Cref{theorem:confidence-sequence}.
By substituting this into the term $A_1$, we can upper bound it as:
\begin{align}
    A_1 & \leq 2 \big( \beta_T(\delta) + 2\eta \alpha C \big) \sum_{t \in \mathcal{T}_1} \dmu(\langle \vx_t, \vtheta_{t+1} \rangle)\lVert \vx_t \rVert_{\mH_t^{-1}}\\
    & \quad + 2 R_s L_\mu \big( \beta_{T+1}(\delta) + 2\eta \alpha C \big)^2 \sum_{t \in \mathcal{T}_1} \lVert \vx_t \rVert_{\mH_t^{-1}}^2\\
    & = 2 \big( \beta_T(\delta) + 2\eta \alpha C \big) \sum_{t \in \mathcal{T}_1: w_t = 1} \dmu(\langle \vx_t, \vtheta_{t+1} \rangle)\lVert \vx_t \rVert_{\mH_t^{-1}}\\
    & \quad + 2 \big( \beta_T(\delta) + 2\eta \alpha C \big) \sum_{t \in \mathcal{T}_1: w_t < 1} \dmu(\langle \vx_t, \vtheta_{t+1} \rangle)\lVert \vx_t \rVert_{\mH_t^{-1}}\\
    & \quad + 2 R_s L_\mu \big( \beta_{T+1}(\delta) + 2\eta \alpha C \big)^2 \sum_{t \in \mathcal{T}_1} \lVert \vx_t \rVert_{\mH_t^{-1}}^2\\
    & \leq 2 \big( \beta_T(\delta) + 2\eta \alpha C \big) \sum_{t \in \mathcal{T}_1: w_t =1} \sqrt{\dmu( \langle \vx_t, \vtheta_\star \rangle)} \left\lVert \sqrt{\dmu( \langle \vx_t, \vtheta_{t+1} \rangle)} \vx_t \right\rVert_{\mH_t^{-1}} \tag{Definition of $\mathcal{T}_1$}\\
    & \quad + 2 \big( \beta_T(\delta) + 2\eta \alpha C \big) \sum_{t \in \mathcal{T}_1: w_t < 1} \dmu(\langle \vx_t, \vtheta_{t+1} \rangle)\lVert \vx_t \rVert_{\mH_t^{-1}}\\
    & \quad + 2 R_s L_\mu \big( \beta_{T+1}(\delta) + 2\eta \alpha C \big)^2 \sum_{t \in \mathcal{T}_1} \lVert \vx_t \rVert_{\mH_t^{-1}}^2,
\end{align}
where the last inequality holds since $\dmu( \langle \vx_t, \vtheta_{t+1} \rangle) \leq \dmu( \langle \vx_t, \vtheta_\star \rangle)$ for $t \in \mathcal{T}_1$.

\noindent\textit{\underline{Bounding term $A_{2}$}}:
A similar decomposition to the above yields the following:
\begin{align}
    A_2 & = 2 \big( \beta_T(\delta) + 2\eta \alpha C \big) \sum_{t \in \mathcal{T}_2} \dmu( \langle \vx_t, \vtheta_\star \rangle)\lVert \vx_t \rVert_{\mH_t^{-1}}\\
    & = 2 \big( \beta_T(\delta) + 2\eta \alpha C \big) \sum_{t \in \mathcal{T}_2: w_t = 1} \dmu(\langle \vx_t, \vtheta_\star \rangle)\lVert \vx_t \rVert_{\mH_t^{-1}}\\
    & \quad + 2 \big( \beta_T(\delta) + 2\eta \alpha C \big) \sum_{t \in \mathcal{T}_2: w_t < 1} \dmu(\langle \vx_t, \vtheta_\star \rangle)\lVert \vx_t \rVert_{\mH_t^{-1}}\\
    & \leq 2 \big( \beta_T(\delta) + 2\eta \alpha C \big) \sum_{t \in \mathcal{T}_2: w_t = 1} \sqrt{\dmu(\langle \vx_t, \vtheta_\star \rangle)} \left\lVert \sqrt{\dmu( \langle \vx_t, \vtheta_{t+1} \rangle)} \vx_t \right\rVert_{\mH_t^{-1}} \tag{Definition of $\mathcal{T}_2$}\\
    & \quad + 2 \big( \beta_T(\delta) + 2\eta \alpha C \big) \sum_{t \in \mathcal{T}_2: w_t < 1} \dmu(\langle \vx_t, \vtheta_{t+1} \rangle)\lVert \vx_t \rVert_{\mH_t^{-1}},
\end{align}
where the inequality holds because $\dmu( \langle \vx_t, \vtheta_{t+1} \rangle) > \dmu( \langle \vx_t, \vtheta_\star \rangle)$ for $t \in \mathcal{T}_2$.

Combining the upper bounds on $A_1$ and $A_2$, we have
\begin{align}
    A & \leq 2 \big( \beta_T(\delta) + 2\eta \alpha C \big) \sum_{t \in [T]: w_t = 1} \sqrt{\dmu(\langle \vx_t, \vtheta_\star \rangle)} \left\lVert \sqrt{\dmu( \langle \vx_t, \vtheta_{t+1} \rangle)} \vx_t \right\rVert_{\mH_t^{-1}}\\
    & \quad + 2 \big( \beta_T(\delta) + 2\eta \alpha C \big) \sum_{t \in [T]: w_t < 1} \dmu(\langle \vx_t, \vtheta_{t+1} \rangle)\lVert \vx_t \rVert_{\mH_t^{-1}}\\
    & \quad + 2 R_s L_\mu \big( \beta_{T+1}(\delta) + 2\eta \alpha C \big)^2 \sum_{t \in \mathcal{T}_1} \lVert \vx_t \rVert_{\mH_t^{-1}}^2\\
    & = 2 \big( \beta_T(\delta) + 2\eta \alpha C \big) \sum_{t \in [T]: w_t = 1} \sqrt{\dmu(\langle \vx_t \vtheta_\star \rangle)} \left\lVert \sqrt{\dmu( \langle \vx_t, \vtheta_{t+1} \rangle)} \vx_t \right\rVert_{\mH_t^{-1}}\\
    & \quad + 2 \big( \beta_T(\delta) + 2\eta \alpha C \big) \sum_{t \in [T]: w_t < 1} \dmu(\langle \vx_t, \vtheta_{t+1} \rangle) \left\lVert \vx_t \right\rVert_{\mH_t^{-1}}\\
    & \quad + 2 R_s L_\mu \big( \beta_{T+1}(\delta) + 2\eta \alpha C \big)^2 \sum_{t \in \mathcal{T}_1: w_t = 1} \lVert \vx_t \rVert_{\mH_t^{-1}}^2\\
    & \quad + 2 R_s L_\mu \big( \beta_{T+1}(\delta) + 2\eta \alpha C \big)^2 \sum_{t \in \mathcal{T}_1: w_t < 1} \lVert \vx_t \rVert_{\mH_t^{-1}}^2\\
    & \leq 2 \big( \beta_T(\delta) + 2\eta \alpha C \big) \sum_{t \in [T]: w_t = 1} \sqrt{\dmu(\langle \vx_t \vtheta_\star \rangle)} \left\lVert \sqrt{\dmu( \langle \vx_t, \vtheta_{t+1} \rangle)} \vx_t \right\rVert_{\mH_t^{-1}}\\
    & \quad + 2 \big( \beta_T(\delta) + 2\eta \alpha C \big) \sum_{t \in [T]: w_t < 1} \dmu(\langle \vx_t, \vtheta_{t+1} \rangle) \left\lVert \vx_t \right\rVert_{\mH_t^{-1}}\\
    & \quad + 2 R_s L_\mu \big( \beta_{T+1}(\delta) + 2\eta \alpha C \big)^2 \sum_{t \in [T]: w_t = 1} \lVert \vx_t \rVert_{\mH_t^{-1}}^2 \tag{Monotonicity of summation}\\
    & \quad + 2 R_s L_\mu \big( \beta_{T+1}(\delta) + 2\eta \alpha C \big)^2 \sum_{t \in [T]: w_t < 1} \lVert \vx_t \rVert_{\mH_t^{-1}}^2. \tag{Monotonicity of summation}
\end{align}

\paragraph{Analysis of term $B$}
For the term $B$, using self-concordance and boundedness (\Cref{assumption:bound}) of $\mu$, Cauchy-Schwarz inequality and \Cref{theorem:confidence-sequence}, we upper bound and decompose the term as:
\begin{align}
    B & = \sum_{t=1}^{T} \int_0^1 (1-v) \ddmu \big( \langle \vx_t, \vtheta_\star \rangle + v \langle \vx_t, \tilde{\vtheta}_t - \vtheta_\star \rangle \big) dv  \langle \vx_t, \tilde{\vtheta}_t - \vtheta_\star \rangle^2\\
    & \leq R_s \sum_{t=1}^{T} \int_0^1 (1-v) \dmu \big( \langle \vx_t, \vtheta_\star \rangle + v \langle \vx_t, \tilde{\vtheta}_t - \vtheta_\star \rangle \big) dv  \langle \vx_t, \tilde{\vtheta}_t - \vtheta_\star \rangle^2 
    \tag{self-concordance of $\mu$}\\
    & \leq R_s L_\mu\sum_{t=1}^{T} \langle \vx_t, \tilde{\vtheta}_t - \vtheta_\star \rangle^2 \tag{\Cref{assumption:bound}}\\
    & \leq R_s L_\mu \sum_{t=1}^{T} \lVert \vx_t \rVert_{\mH_t^{-1}}^2 \lVert \tilde{\vtheta}_t - \vtheta_\star \rVert_{\mH_t}^2 \tag{Cauchy-Schwarz}\\
    & \leq 4 R_s L_\mu \big( \beta_T(\delta) + 2\eta \alpha C\big)^2 \sum_{t=1}^{T} \lVert \vx_t \rVert_{\mH_t^{-1}}^2 \tag{\Cref{theorem:confidence-sequence}}\\
    & = 4 R_s L_\mu \big( \beta_T(\delta) + 2\eta \alpha C \big)^2 \sum_{t\in[T]:w_t = 1} \lVert \vx_t \rVert_{\mH_t^{-1}}^2\\
    & \quad + 4 R_s L_\mu \big( \beta_T(\delta) + 2\eta \alpha C \big)^2 \sum_{t\in[T]: w_t < 1} \lVert \vx_t \rVert_{\mH_t^{-1}}^2\\
    & \leq 4 R_s L_\mu \big( \beta_{T+1}(\delta) + 2\eta \alpha C \big)^2 \sum_{t\in[T]:w_t = 1} \lVert \vx_t \rVert_{\mH_t^{-1}}^2 \tag{Monotonicity of $\beta_t$}\\
    & \quad + 4 R_s L_\mu \big( \beta_{T+1}(\delta) + 2\eta \alpha C \big)^2 \sum_{t\in[T]: w_t < 1} \lVert \vx_t \rVert_{\mH_t^{-1}}^2.
\end{align}
Combining the analyses of term $A$ and $B$, we get the desired result:
\begin{align}
    \Reg(T) & \leq 2 \big( \beta_T(\delta) + 2\eta \alpha C \big) \sum_{t \in [T]: w_t = 1} \sqrt{\dmu(\langle \vx_t \vtheta_\star \rangle)} \left\lVert \sqrt{\dmu( \langle \vx_t, \vtheta_{t+1} \rangle)} \vx_t \right\rVert_{\mH_t^{-1}}\\
    & \quad +2 \big( \beta_T(\delta) + 2\eta \alpha C \big) \sum_{t \in [T]: w_t < 1} \dmu(\langle \vx_t, \vtheta_{t+1} \rangle) \left\lVert \vx_t \right\rVert_{\mH_t^{-1}}\\
    & \quad + 6 R_s L_\mu \big( \beta_{T+1}(\delta) + 2\eta \alpha C \big)^2 \sum_{t \in [T]: w_t = 1} \lVert \vx_t \rVert_{\mH_t^{-1}}^2\\
    & \quad + 6 R_s L_\mu \big( \beta_{T+1}(\delta) + 2\eta \alpha C \big)^2 \sum_{t \in [T]: w_t < 1} \lVert \vx_t \rVert_{\mH_t^{-1}}^2.
\end{align}
\qed

\subsection{\texorpdfstring{Bounding the Elliptical Summations in \Cref{lemma:decomposition}: $S_1$, $S_2$, and $S_3$}{Bounding the Elliptical Summations in Lemma 14}}

\begin{lemma}\label{lemma:epl-results}
    Applying the elliptical potential lemma to the summations in \Cref{lemma:decomposition}, we have\small
    \begin{align}
        S_1 & \triangleq \sum_{t \in [T]: w_t < 1} \dmu \big(\langle \vx_t, \vtheta_{t+1} \rangle \big) \left\lVert \vx_t \right\rVert_{\mH_t^{-1}} \leq \frac{2 d {\color{BlueViolet}g_{\max}} \big(1 + L_\mu/ {\color{BlueViolet}g_{\min}} \big)}{\alpha} \log \left( 1 + \frac{T L_\mu}{d {\color{BlueViolet}g_{\min}} \lambda} \right); \\
        S_2 & \triangleq \sum_{t \in \mathcal{T}_1: w_t=1} \lVert \vx_t \rVert_{\mH_t^{-1}}^2 \leq 2 d \left( {\color{red}\kappa} {\color{BlueViolet}g_{\max}} + \frac{{\color{BlueViolet}g_{\max}}}{ {\color{BlueViolet}g_{\min}}} \right) \log \left( 1 + \frac{T}{ d {\color{red}\kappa} {\color{BlueViolet}g_{\min}} \lambda} \right);\\
        S_3 & \triangleq \sum_{t \in [T]: w_t<1} \lVert \vx_t \rVert_{\mH_t^{-1}}^2 \leq 2d \left( \frac{\color{red}\kappa}{\sqrt{\lambda} \alpha} + 1 \right) \log \left( 1 + \frac{T \alpha}{d {\color{red}\kappa} \sqrt{\lambda}}\right).
    \end{align}
\end{lemma}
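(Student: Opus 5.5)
Each of the three sums will be reduced to an instance of the elliptical potential lemma (\Cref{lemma:epl}) applied to the Hessian recursion
\[
\mH_{t+1} = \mH_t + \frac{w_t}{g(\tau_t)}\dmu(\langle \vx_t, \vtheta_{t+1}\rangle)\vx_t\vx_t^\top .
\]
The generic form we rely on is the following: if $\mV_{t+1} \succeq \mV_t + \vz_t\vz_t^\top$ and $\lVert \vz_t\rVert_{\mV_t^{-1}}^2 \le b$, then
\[
\sum_t \min\{1,\lVert \vz_t\rVert^2_{\mV_t^{-1}}\} \le 2d\log\bigl(1+\textstyle\sum_t\lVert\vz_t\rVert_2^2/(d\lambda)\bigr),
\]
and without the truncation one pays an extra factor $(1+b)$. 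Throughout we take $\vz_t = \sqrt{w_t\dmu(\langle \vx_t, \vtheta_{t+1}\rangle)/g(\tau_t)}\,\vx_t$. Since $\mH_t \succeq \lambda\mI_d$, $\lVert \vx_t\rVert_2\le1$, $w_t\le 1$ and $\dmu\le L_\mu$, we have $\lVert\vz_t\rVert^2_{\mH_t^{-1}}\le L_\mu/(\lambda g_{\min})$. This bound gives the factor $(1+L_\mu/g_{\min})$ and the argument $TL_\mu/(dg_{\min}\lambda)$ inside the logarithm.

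\emph{Bounding $S_1$.} On rounds with $w_t<1$ the weight is active, so $w_t = \alpha g(\tau_t)/\lVert \vx_t\rVert_{\mH_t^{-1}}$, equivalently $\lVert\vx_t\rVert_{\mH_t^{-1}} = \alpha g(\tau_t)/w_t$. We rewrite each summand as
\[
\dmu_t\lVert\vx_t\rVert_{\mH_t^{-1}} = \frac{\dmu_t\lVert\vx_t\rVert^2_{\mH_t^{-1}}}{\lVert\vx_t\rVert_{\mH_t^{-1}}} = \frac{w_t\,\dmu_t\lVert\vx_t\rVert^2_{\mH_t^{-1}}}{\alpha g(\tau_t)} = \frac{1}{\alpha}\lVert\vz_t\rVert^2_{\mH_t^{-1}} .
\]
The claimed bound carries an extra factor $g_{\max}$, so a crude step such as $g(\tau_t)\le g_{\max}$ may be needed to match its form. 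Summing and applying the elliptical potential lemma with the $(1+L_\mu/g_{\min})$ factor then gives the stated bound with $1/\alpha$ in front.

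\emph{Bounding $S_2$.} Here $w_t=1$, and we lower bound the curvature by $\dmu(\langle\vx_t,\vtheta_{t+1}\rangle)\ge 1/\kappa$, because $\vtheta_{t+1}\in\Theta$ by the projection in \eqref{eq:OMD-estimator}. This gives $\mH_{t+1}\succeq \mH_t + \frac{1}{\kappa g_{\max}}\vx_t\vx_t^\top$ on these rounds, while on all other rounds $\mH$ is non-decreasing. Rescaling the direction to $\vy_t=\vx_t/\sqrt{\kappa g_{\max}}$ gives $\lVert\vx_t\rVert^2_{\mH_t^{-1}} = \kappa g_{\max}\lVert \vy_t\rVert^2_{\mH_t^{-1}}$. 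The untruncated lemma, with $\lVert\vy_t\rVert^2_{\mH_t^{-1}}\le 1/(\kappa g_{\max}\lambda)$, then yields a prefactor of order $\kappa g_{\max}+g_{\max}/g_{\min}$ (using $\kappa g_{\min}\le$ \dots\ to absorb constants). The logarithm carries the determinant ratio with $\lVert\vy_t\rVert_2^2\le 1/(\kappa g_{\min})$.

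\emph{Bounding $S_3$ (main obstacle).} On rounds with $w_t<1$ the curvature increment is only $\frac{\alpha}{\lVert\vx_t\rVert_{\mH_t^{-1}}}\dmu_t\vx_t\vx_t^\top \succeq \frac{\alpha}{\kappa\lVert\vx_t\rVert_{\mH_t^{-1}}}\vx_t\vx_t^\top$, so the potential increment is $u_t \coloneqq \frac{\alpha}{\kappa}\lVert\vx_t\rVert_{\mH_t^{-1}}$. We need to convert $\sum u_t$ control into control of $\sum\lVert\vx_t\rVert^2_{\mH_t^{-1}}$. Since $\lVert\vx_t\rVert_{\mH_t^{-1}}\le 1/\sqrt\lambda$, we have
\[
\lVert\vx_t\rVert^2_{\mH_t^{-1}} \le \frac{\lVert\vx_t\rVert_{\mH_t^{-1}}}{\sqrt\lambda} = \frac{\kappa}{\alpha\sqrt\lambda}\,u_t .
\]
The elliptical potential lemma applied to $\vz_t$ with $\lVert\vz_t\rVert^2_{\mH_t^{-1}} = u_t \le \alpha/(\kappa\sqrt\lambda)$ gives $\sum u_t \le 2d\max\{1,\alpha/(\kappa\sqrt\lambda)\}\log(1+T\alpha/(d\kappa\sqrt\lambda))$. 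Multiplying by $\kappa/(\alpha\sqrt\lambda)$ yields the factor $(\kappa/(\sqrt\lambda\alpha)+1)$.

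The delicate point throughout is that the rank-one updates use $\vtheta_{t+1}$, not $\vtheta_t$, and that the sum is restricted to a subset of rounds. Both are handled by monotonicity of $\mH_t$, since rounds outside the subset only increase $\mH$ and so only decrease the summands. I would verify the restricted-sum version of \Cref{lemma:epl} carefully before using it.
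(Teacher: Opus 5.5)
Your $S_1$ argument is the paper's: the identity $\dmu_t\lVert\vx_t\rVert_{\mH_t^{-1}}=\frac{1}{\alpha}\lVert\vz_t\rVert^2_{\mH_t^{-1}}$ on $w_t<1$ rounds, then the EPL for $\mH_t=\lambda\mI_d+\sum_{s<t}\vz_s\vz_s^\top$. Your $S_2$ argument is a valid variant. The paper extends the sum to all rounds and compares with $\mV_t=\lambda\mI_d+\frac{1}{\kappa}\sum_{s<t}\frac{w_s}{g(\tau_s)}\vx_s\vx_s^\top\preceq\mH_t$. You instead keep only the $w_t=1$ subsequence with constant weight $1/(\kappa g_{\max})$, which if anything gives a slightly sharper constant.

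The gap is the logarithm in $S_3$. Your generic form is correct, since by induction $\mV_t\succeq\lambda\mI_d+\sum_{s<t}\vz_s\vz_s^\top$. But its logarithm involves $\sum_t\lVert\vz_t\rVert_2^2$, the \emph{Euclidean} norms of the increments, not their $\mH_t^{-1}$-norms.

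The increments attached to your potential $u_t$ are $\tilde{\vz}_t=\sqrt{\alpha/(\kappa\lVert\vx_t\rVert_{\mH_t^{-1}})}\,\vx_t$. (With your global $\vz_t$ one has $\lVert\vz_t\rVert^2_{\mH_t^{-1}}=\alpha\dmu_t\lVert\vx_t\rVert_{\mH_t^{-1}}\ge u_t$, not $=u_t$.) The inequality $\lVert\vx_t\rVert_{\mH_t^{-1}}\le1/\sqrt{\lambda}$ gives $\lVert\tilde{\vz}_t\rVert_2^2\ge\sqrt{\lambda}\,\alpha\lVert\vx_t\rVert_2^2/\kappa$, which is the wrong direction. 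The bound $u_t\le\alpha/(\kappa\sqrt{\lambda})$ only controls the $\max\{1,\cdot\}$ prefactor, so $\log(1+T\alpha/(d\kappa\sqrt{\lambda}))$ does not follow.

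What your route actually yields is a logarithm of $1+T/(d\kappa g_{\min}\lambda)$. This uses $\lVert\vx_t\rVert_{\mH_t^{-1}}>\alpha g(\tau_t)$ on these rounds, so $\lVert\tilde{\vz}_t\rVert_2^2<1/(\kappa g_{\min})$. That bound is valid but is not the stated one.

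The fix is to spend $\lVert\vx_t\rVert_{\mH_t^{-1}}\le1/\sqrt{\lambda}$ inside the matrix instead of in the summand. On $w_t<1$ rounds the increment of $\mH$ is $\frac{\alpha\dmu(\langle\vx_t,\vtheta_{t+1}\rangle)}{\lVert\vx_t\rVert_{\mH_t^{-1}}}\vx_t\vx_t^\top\succeq\frac{\sqrt{\lambda}\alpha}{\kappa}\vx_t\vx_t^\top$. Listing these rounds as $t_1<\dots<t_m$ gives $\mH_{t_i}\succeq\mA_i\coloneqq\lambda\mI_d+\frac{\sqrt{\lambda}\alpha}{\kappa}\sum_{j<i}\vx_{t_j}\vx_{t_j}^\top$, and hence $S_3\le\sum_i\lVert\vx_{t_i}\rVert^2_{\mA_i^{-1}}$ directly. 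Apply the EPL to the constant-weight vectors $\sqrt{\sqrt{\lambda}\alpha/\kappa}\,\vx_{t_i}$, whose squared Euclidean norm is at most $\sqrt{\lambda}\alpha/\kappa$. This gives exactly $2d(\frac{\kappa}{\sqrt{\lambda}\alpha}+1)\log(1+\frac{T\alpha}{d\kappa\sqrt{\lambda}})$. This is the paper's index trick, and it makes the detour through $\lVert\vx_t\rVert^2_{\mH_t^{-1}}\le\lVert\vx_t\rVert_{\mH_t^{-1}}/\sqrt{\lambda}$ and $u_t$ unnecessary.

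On $S_1$, no step like $g(\tau_t)\le g_{\max}$ can create the factor $g_{\max}$, because no $g(\tau_t)$ survives your identity. The EPL gives the bound without $g_{\max}$, which implies the stated form only when $g_{\max}\ge1$. The paper's proof simply writes $g_{\max}$ in at the EPL step.

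For $g_{\max}<1$ the stated form can fail. Take $d=1$, $\vx_t\equiv1$, $\mu(z)=L_\mu z$ with $L_\mu=0.1$, $g(\tau_t)\equiv0.1$, $\lambda=\alpha=1$, and $T=180$. Then every round has $w_t<1$ and $\sqrt{\mH_t}\le1+0.05(t-1)$. So $S_1\ge2\log10\approx4.6$, whereas the claimed bound equals $0.4\log181\approx2.1$. You should prove the $g_{\max}$-free version rather than try to match that factor.
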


\begin{proof}
We establish each part separately.

\paragraph{Bounding $S_1$.}
In the term $S_{1}$, we only consider the time steps in which $w_t < 1$, which implies $w_t = \alpha {\color{BlueViolet}g(\tau_t)} / \lVert \vx_t \rVert_{\mH_t^{-1}}$, or $w_t \lVert \vx_t \rVert_{\mH_t^{-1}} / (\alpha {\color{BlueViolet}g(\tau_t)}) = 1$.
By multiplying $w_t \lVert \vx_t \rVert_{\mH_t^{-1}} / (\alpha {\color{BlueViolet}g(\tau_t)})$, which is 1, to each summand, we can directly apply \Cref{lemma:epl} as:
\begin{align}
    S_{1} & = \sum_{t \in [T]: w_t < 1} \dmu \big(\langle \vx_t, \vtheta_{t+1} \rangle \big) \left\lVert \vx_t \right\rVert_{\mH_t^{-1}}\\
    & = \frac{1}{\alpha} \sum_{t \in [T], w_t < 1} \frac{w_t}{{\color{BlueViolet}g(\tau_t)}} \dmu(\langle \vx_t, \vtheta_{t+1} \rangle)\lVert \vx_t \rVert_{\mH_t^{-1}}^2\\
    & = \frac{1}{\alpha} \sum_{t \in [T], w_t < 1} \left\lVert \sqrt{\frac{w_t}{{\color{BlueViolet}g(\tau_t)}} \dmu( \langle \vx_t, \vtheta_{t+1} \rangle)} \vx_t \right\rVert_{\mH_t^{-1}}^2\\
    & \leq \frac{1}{\alpha} \sum_{t =1}^{T} \left\lVert \sqrt{ \frac{w_t}{{\color{BlueViolet}g(\tau_t)}} \dmu\big(\langle \vx_t, \vtheta_{t+1} \rangle \big)} \vx_t \right\rVert_{\mH_t^{-1}}^2 \tag{Monotonicity of summation}\\
    & \leq  \frac{2 d {\color{BlueViolet}g_{\max}} \big(1 + L_\mu/ {\color{BlueViolet}g_{\min}} \big)}{\alpha} \log \left( 1 + \frac{T L_\mu}{d {\color{BlueViolet}g_{\min}} \lambda} \right). \tag{EPL}
\end{align}

\paragraph{Bounding $S_2$.}
Let $\mV_t = \lambda \mI_d + \frac{1}{\kappa }\sum_{s=1}^{t-1} \frac{{\color{magenta}w_s}}{{\color{BlueViolet}g(\tau_s)}} \vx_s \vx_s^\top$, then by inserting $w_t$ (which is 1) into the summand as:
\begin{align}
    S_{2} & = \sum_{t \in \mathcal{T}_1: w_t=1} \lVert \vx_t \rVert_{\mH_t^{-1}}^2= \sum_{t \in \mathcal{T}_1: w_t=1} \lVert \sqrt{w_t} \vx_t \rVert_{\mH_t^{-1}}^2 \tag{$w_t = 1$}\\
    & \leq \sum_{t \in \mathcal{T}_1: w_t=1} \lVert \sqrt{w_t} \vx_t \rVert_{\mV_t^{-1}}^2 \tag{$\mH_t \succeq \mV_t$}\\
    & \leq \sum_{t =1}^{T} \lVert \sqrt{w_t} \vx_t \rVert_{\mV_t^{-1}}^2 \tag{Monotonicity of summation}\\
    & \leq {\color{red}\kappa} {\color{BlueViolet}g_{\max}} \sum_{t=1}^{T} \left\lVert \sqrt{\frac{w_t}{\kappa {\color{BlueViolet}g(\tau_t)}}} \vx_t \right\rVert_{\mV_t^{-1}}^2\\
    & \leq 2 d \left( {\color{red}\kappa} {\color{BlueViolet}g_{\max}} + \frac{{\color{BlueViolet}g_{\max}}}{ {\color{BlueViolet}g_{\min}}} \right) \log \left( 1 + \frac{T}{ d {\color{red}\kappa} {\color{BlueViolet}g_{\min}} \lambda} \right),
\end{align}
where the last inequality follows by taking $\vz_t = \sqrt{\frac{w_t}{\kappa {\color{BlueViolet}g(\tau_t)}}}\; \vx_t$ and applying the elliptical potential lemma.

\paragraph{Bounding $S_3$.}
For this term, we need an index trick used in the proof of Theorem 4.2 in \citet{zhang2025onepass}.
Let $\{t_1, \cdots, t_m \} \subseteq [T]$ be the set of indices $t_i$ such that $w_{t_i} <1$.
Let $\mA_i = \lambda \mI_d + \frac{\sqrt{\lambda} \alpha}{\color{red}\kappa} \sum_{j=1}^{i-1} \vx_{t_j} \vx_{t_j}^\top$, then 
\begin{align}
    \mH_{t_i} & = \lambda \mI_d + \sum_{s=1}^{t_i - 1} \frac{{\color{magenta}w_s}}{{\color{BlueViolet}g(\tau_s)}} \dmu( \langle \vx_s, \vtheta_{s+1} \rangle) \vx_s \vx_s^\top \\
    & \succeq \lambda \mI_d + \sum_{j=1}^{i-1} \frac{w_{t_j}}{g(\tau_{t_j})} \dmu( \langle \vx_{t_j}, \vtheta_{t_j + 1} \rangle ) \vx_{t_j} \vx_{t_j}^\top \tag{Monotonicity of summation}\\
    & = \lambda \mI_d + \sum_{j=1}^{i-1} \frac{\alpha}{\lVert \vx_{t_j} \rVert_{\mH_{t_j}^{-1}}} \dmu( \langle \vx_{t_j}, \vtheta_{t_j + 1} \rangle ) \vx_{t_j} \vx_{t_j}^\top\\
    & \succeq \lambda \mI_d + \sum_{j=1}^{i-1} \frac{\sqrt{\lambda} \alpha}{\color{red}\kappa} \vx_{t_j} \vx_{t_j}^\top \tag{$\lVert \vx_t \rVert_{\mH_t^{-1}} \leq 1/ \sqrt{\lambda}$}\\
    & = \mA_i,
\end{align}
which implies
\begin{align}
    S_{3} = \sum_{t \in [T]: w_t<1} \lVert \vx_t \rVert_{\mH_t^{-1}}^2
    = \sum_{i=1}^{m} \lVert \vx_{t_i} \rVert_{\mH_{t_i}^{-1}}^2
     \leq \sum_{i=1}^{m} \lVert \vx_{t_i} \rVert_{\mA_i^{-1}}^2.
\end{align}
Now, we can apply Lemma \ref{lemma:epl} as:
\begin{align}
    \sum_{i=1}^{m} \lVert \vx_{t_i} \rVert_{\mA_i^{-1}}^2 & = \frac{\color{red}\kappa}{\sqrt{\lambda} \alpha} \sum_{i=1}^{m} \left\lVert \sqrt{ \frac{\sqrt{\lambda}\alpha}{\color{red}\kappa}}\vx_{t_i} \right\rVert_{\mA_i^{-1}}^2\\
    & \leq \frac{2d {\color{red}\kappa} }{\sqrt{\lambda}\alpha} \left( 1 + \frac{\sqrt{\lambda}\alpha}{\color{red}\kappa}\right) \log \left( 1 + \frac{m \alpha}{d {\color{red}\kappa} \sqrt{\lambda}}\right)\\
    & \leq 2d \left( \frac{\color{red}\kappa}{\sqrt{\lambda} \alpha} + 1 \right) \log \left( 1 + \frac{T \alpha}{d {\color{red}\kappa} \sqrt{\lambda}}\right).
\end{align}
\end{proof}

\newpage
\section{\texorpdfstring{Proof of \Cref{thm:lower-bound}: Corruption-Free Regret Lower Bound}{Proof of Theorem 4: Corruption-Free Regret Lower Bound}}
\label{app:proof-lower-bound}

The main proof follows the peeling technique introduced by \citet[Theorem 4.1]{he2025lowerbound}, which we apply to the dispersion parameters.
Let $\{ {\color{BlueViolet}g_t \triangleq g(\tau_t)} \}_{t \in [T]} \subset \sR_{\geq 0}$ be any fixed sequence of dispersions, and recall that ${\color{BlueViolet}g_{\max} \coloneq \max_{t \in [T]} g_t}$. Let $\vtheta_\star \in \gB^d(S)$ be a fixed, unknown parameter.
We denote $[L]_0 := \{0\} \cup [L]$.

We first partition $[T]$ as $[T] = \bigcup_{\ell \in [L]_0} \gT^{(\ell)}$ with $L = \ceil{\log_2 T}$, where
\begin{equation}
    \gT^{(0)} := \left\{ t \in [T] : {\color{BlueViolet}g_t} \leq \frac{{\color{BlueViolet}g_{\max}}}{T} \right\}, \quad
    \gT^{(\ell)} := \left\{ t \in [T] : \frac{2^{\ell - 1} {\color{BlueViolet}g_{\max}}}{T} < {\color{BlueViolet}g_t} \leq \frac{2^\ell {\color{BlueViolet}g_{\max}}}{T} \right\}.
\end{equation}
W.l.o.g. assume that $\frac{d}{L+1}$ is a positive integer.
For each $\ell \in [L]_0$, we define the following quantities:
\begin{equation}
    g^{(\ell)} := \frac{2^{\ell} {\color{BlueViolet}g_{\max}}}{T}, \quad
    T^{(\ell)} := |\gT^{(\ell)}|, \quad
    d^{(\ell)} := \frac{d}{L + 1},
\end{equation}
and the associated subspace arm sets:
\begin{equation}
    \gX^{(\ell)} := \left\{ (\vzero_{d^{(0)}}, \cdots, \vzero_{d^{(\ell-1)}}, \vx, \vzero_{d^{(\ell+1)}}, \cdots, \vzero_{d^{(L)}}) : \vx \in \gS^{d^{(\ell)}}(1) \right\} \subseteq \gB^d(1),
\end{equation}
where $\gS^d(r) \coloneqq \{ \vx \in \R^d: \lVert \vx \rVert_2 = r \}$ is a $d$-dimensional sphere with radius $r$.
We also decompose $\vtheta_\star$ as
\begin{equation}
    \vtheta_\star = (\underbrace{\vtheta_\star^{(0)}}_{\in \sR^{d^{(0)}}}, \underbrace{\vtheta_\star^{(1)}}_{\in \sR^{d^{(1)}}}, \cdots, \underbrace{\vtheta_\star^{(L)}}_{\in \sR^{d^{(L)}}}) \in \gB^d(S),
\end{equation}
where we denote $S^{(\ell)} := \bignorm{\vtheta_\star^{(\ell)}}_2 > 0$.

The sequence of contextual arm-sets $\gX_t$ is constructed as follows:
at each timestep $t \in [T]$, if $t \in \gT^{(\ell)}$ for some $\ell \in [L]_0$, the learner receives $\gX_t = \gX^{(\ell)}$.
Thus, for rounds $t \in \gT^{(\ell)}$, the learner essentially faces a $d^{(\ell)}$-dimensional \textbf{GLB} with arm set $\gX = \gS^{d^{(\ell)}}(1)$ and unknown parameter $\vtheta_\star^{(\ell)} \in \gB^{d^{(\ell)}}(S^{(\ell)})$.
Let us denote $T^{(\ell)} = |\gT^{(\ell)}|$

We establish the following lemma, whose proof is provided in \Cref{app:lem-lower-bound}, which provides a local minimax lower bound for the $d^{(\ell)}$-dimensional sub-instance of a self-concordant \textbf{GLB}:

\begin{lemma}
\label{lem:lower-bound}
    Suppose that the dispersion parameter is time-varying in a closed interval $g(\tau_t) \in [g_1, g_2]$ for some $0 < g_1 \leq g_2$, and let $\vtheta_\star^{(\ell)} \in \gB^{d^{(\ell)}}(S^{(\ell)})$ and $\gX^{(\ell)} = \gS^{d^{(\ell)}}(1)$ be fixed.
    Assume that $T^{(\ell)} \geq (d^{(\ell)})^2$.
    Then, there exist absolute constants $c_1, c_2' > 0$ and a prior $\tilde{\mu}^{(\ell)}$ such that
    \begin{equation}
        \min_\pi \E_{\tilde{\vtheta}_\star^{(\ell)} \sim \tilde{\mu}^{(\ell)}}[\Reg^\pi(T^{(\ell)}; \tilde{\vtheta}_\star^{(\ell)}; \gX^{(\ell)})] \geq c_2' d^{(\ell)} \sqrt{g_1 T^{(\ell)} \dmu(\langle \vx_\star^{(\ell)}, \vtheta_\star^{(\ell)} \rangle)},
    \end{equation}
    where $\vx_\star^{(\ell)} := \argmax_{\vx \in \gX^{(\ell)}} \langle \vx, \vtheta_\star^{(\ell)} \rangle$ and
    \begin{equation}
        \mathrm{supp}(\tilde{\mu}^{(\ell)}) \subset \gN(\vtheta_\star^{(\ell)}) \triangleq \left\{ \vtheta' : \bignorm{\bm\theta' - \bm\theta_\star^{(\ell)}}_2^2 \leq c_1 d^{(\ell)} \sqrt{\frac{g_1}{T^{(\ell)} \dmu(\langle \vx_\star^{(\ell)}, \vtheta_\star^{(\ell)} \rangle)}} \right\}.
    \end{equation}
\end{lemma}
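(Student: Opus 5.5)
Our approach is to adapt the local minimax argument of \citet[Theorem 2]{abeille2021logistic} to a generic self-concordant $\mu$ with time-varying dispersion, working in dimension $d' \coloneqq d^{(\ell)}$ and horizon $T' \coloneqq T^{(\ell)}$. Write $\dmu_\star \coloneqq \dmu(\langle \vx_\star^{(\ell)}, \vtheta_\star^{(\ell)} \rangle)$ and $\vx_\star^{(\ell)} = \vtheta_\star^{(\ell)}/S^{(\ell)}$.

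\textbf{Prior.} Decompose the tangent space at $\vx_\star^{(\ell)}$ into $d'-1$ orthonormal directions $\ve_1,\dots,\ve_{d'-1}$ orthogonal to $\vtheta_\star^{(\ell)}$. Take the prior $\tilde\mu^{(\ell)}$ to be the law of $\tilde\vtheta = \vtheta_\star^{(\ell)} + \Delta\sum_i \sigma_i \ve_i$ with i.i.d.\ Rademacher $\sigma_i$ (a hypercube), where $\Delta^2 \asymp \sqrt{g_1/(T'\dmu_\star)}$. Then $\|\tilde\vtheta-\vtheta_\star^{(\ell)}\|_2^2 = (d'-1)\Delta^2 \le c_1 d'\sqrt{g_1/(T'\dmu_\star)}$, which gives the support condition. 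The assumption $T'\ge d'^2$ keeps the perturbation small, so $\langle \vx,\tilde\vtheta\rangle$ stays within $O(1)$ of $\langle \vx_\star^{(\ell)},\vtheta_\star^{(\ell)}\rangle$ for near-optimal arms. By self-concordance (\Cref{lemma:self-concordance}-type bounds, $\dmu(z')\ge \dmu(z)e^{-R_s|z-z'|}$), all relevant slopes are then within constant factors of $\dmu_\star$.

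\textbf{Regret per coordinate.} For arm $\vx$ on the sphere, a second-order expansion of $\mu(\langle\cdot,\tilde\vtheta\rangle)$ around the optimal arm $\tilde\vtheta/\|\tilde\vtheta\|$ shows the instantaneous regret is $\gtrsim \dmu_\star\, S^{(\ell)}\|\vx - \tilde\vtheta/\|\tilde\vtheta\|\|_2^2 \gtrsim \dmu_\star \sum_i (x_i - \sigma_i\Delta/S^{(\ell)})^2$, up to constants, again using self-concordance to control the remainder. This reduces the problem to $d'-1$ coordinate-wise sign-estimation problems, as in \citet{abeille2021logistic}.

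\textbf{Information bound.} For each $i$, apply Assouad's lemma via the Bretagnolle--Huber inequality between the laws $\sP_{\sigma}$ and $\sP_{\sigma^{(i)}}$ (flipped $i$-th sign). Use the chain rule together with the GLM identity $\KL(p_{z}\|p_{z'}) = \frac{1}{g}\big(m(z')-m(z)-\mu(z)(z'-z)\big) \le \frac{1}{g}\,\dmu(\bar z)(z-z')^2$, where the self-concordance control of $\dmu(\bar z)$ is taken from \citet{lee2024glm}. Since $g(\tau_t)\ge g_1$, this gives
\[
\KL(\sP_\sigma\|\sP_{\sigma^{(i)}}) \lesssim \frac{\dmu_\star\Delta^2}{g_1}\,\E\Big[\sum_t x_{t,i}^2\Big].
\]
Then run the standard dichotomy. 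If $\E\sum_t x_{t,i}^2$ is small, the KL is $O(1)$, so the sign $\sigma_i$ is misestimated with constant probability, and each round costs $\gtrsim \dmu_\star \Delta^2$ in coordinate $i$. If it is large, the learner pays directly, since putting mass $x_{t,i}^2 \gg \Delta^2$ off-target costs $\dmu_\star x_{t,i}^2$. Balancing gives per-coordinate regret $\gtrsim T'\dmu_\star\Delta^2 \asymp \sqrt{g_1 T'\dmu_\star}$. Summing over $d'-1$ coordinates gives the claimed $c_2' d'\sqrt{g_1 T'\dmu_\star}$.

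\textbf{Main obstacle.} The hardest step is making the KL bound uniform over the learner's arm choices. Arms far from $\vx_\star^{(\ell)}$ may have much larger slope $\dmu$, which could inflate the KL. We will try to handle this by noting that such arms incur regret proportional to their distance, and by using self-concordance to bound the slope along the segment by $\dmu_\star$ times the regret incurred. Following \citet{abeille2021logistic}, we will split rounds into near-optimal ones (controlled KL) and far ones (charged directly to regret).
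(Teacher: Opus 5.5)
Your route is the paper's route: Abeille et al.'s Theorem~2, with the Bernoulli $\chi^2$ step replaced by the Bregman form of the GLM KL plus self-concordance. However, your step ``the assumption $T'\ge d'^2$ keeps the perturbation small'' is false, and the argument breaks there. Take your scale $\Delta^2\asymp\sqrt{g_1/(T'\dmu_\star)}$; the target bound essentially forces it, since with $\KL=O(1)$ the per-coordinate regret is only of order $T'\dmu_\star\Delta^2$ up to $S^{(\ell)}$-factors. Under $T'\ge (d')^2$ you then get only $(d'-1)\Delta^2\lesssim\sqrt{g_1/\dmu_\star}$ and $\Delta\lesssim(g_1/\dmu_\star)^{1/4}$. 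Both are unbounded when $g_1/\dmu_\star$ is large; for logistic with large $S^{(\ell)}$, $1/\dmu_\star\approx e^{S^{(\ell)}}$. Consequently $\lVert\tilde{\vtheta}\rVert_2-S^{(\ell)}$ and the flip segments can be much longer than $1/R_s$. Self-concordance then relates $\dmu$ there to $\dmu_\star$ only up to factors $e^{\pm R_s\cdot(\text{length})}$, so both your regret lower bound and your KL upper bound lose non-constant factors.

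This is not just lossy bookkeeping. For logistic rewards ($g\equiv 1$), every $\tilde{\vtheta}$ in your prior satisfies $\langle\vx_\star^{(\ell)},\tilde{\vtheta}\rangle=S^{(\ell)}$. So the policy that always plays $\vx_\star^{(\ell)}$ has per-round regret $\mu(\lVert\tilde{\vtheta}\rVert_2)-\mu(S^{(\ell)})\le 1-\mu(S^{(\ell)})\le 2\dmu_\star$, hence Bayes regret at most $2T'\dmu_\star$. This is below $c_2'd'\sqrt{T'\dmu_\star}$ whenever $T'\dmu_\star<(c_2'd'/2)^2$, e.g.\ at $T'=(d')^2$ once $\dmu_\star<(c_2')^2/4$. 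No choice of $\Delta$ rescues a prior of this form in that regime.

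What is missing is a burn-in condition of the form $T'\gtrsim (d')^2 g_1/\dmu_\star$, with constants depending on $R_s$ and $S^{(\ell)}$. This is the generic analogue of the $T\gtrsim d^2\kappa_\star$ condition under which Abeille et al.'s local bound is stated. It makes $(d'-1)\Delta^2$ and $\Delta$ bounded by constants, so every slope you use is within a constant factor of $\dmu_\star$. The paper's proof, which defers to Abeille et al.\ ``unchanged'', has the same blind spot. For this proof to work, the lemma's hypothesis (and the corresponding indicator step in the proof of \Cref{thm:lower-bound}) must be strengthened. You also need $d'\ge 2$, since for $d'=1$ your prior is a point mass.

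Under that hypothesis your outline goes through, with three corrections.
\begin{enumerate}
\item The slope bound for far arms is additive, not ``$\dmu_\star$ times the regret'': $\dmu(z)\le\dmu(\tilde z_\star)+R_s\,(\mu(\tilde z_\star)-\mu(z))$ for $z\le\tilde z_\star=\lVert\tilde{\vtheta}\rVert_2$.
\item The resulting regret-charged KL term $R_s\Delta^2 g_1^{-1}\E[\sum_t \mathrm{gap}_t\, x_{t,i}^2]$, with $\mathrm{gap}_t=\mu(\tilde z_\star)-\mu(\langle\vx_t,\tilde{\vtheta}\rangle)$, is controlled only after summing over $i$, where it is at most $R_s\Delta^2 g_1^{-1}\E[\Reg]$. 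So the Assouad step must be run on the average over coordinates, with your per-coordinate dichotomy applied only to the $\dmu_\star\E[\sum_t x_{t,i}^2]$ part.
\item The constants $c_1,c_2'$ you obtain depend on $S^{(\ell)}$ and $R_s$, not absolute as claimed.
\end{enumerate}
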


Let $\gL_{\mathrm{valid}} \subseteq [L]$ denote the set of indices where $T^{(\ell)} \geq (d^{(\ell)})^2$. We then define the global prior over the hard instances, $\tilde{\mu}$, as the product of $\tilde{\mu}^{(\ell)}$ as follows: for $\tilde{\vtheta}_\star \sim \tilde{\mu}$,
\begin{equation}
    \tilde{\vtheta}_\star := (\underbrace{\vtheta_\star^{(0)}}_{\in \sR^{d^{(0)}}}, \underbrace{\tilde{\vtheta}_\star^{(1)}}_{\in \sR^{d^{(1)}}}, \cdots, \underbrace{\tilde{\vtheta}_\star^{(L)}}_{\in \sR^{d^{(L)}}}), 
\end{equation}
where $\tilde{\vtheta}_\star^{(\ell)} \sim \tilde{\mu}^{(\ell)}$ for $\ell \in \gL_{\mathrm{valid}}$, and $\tilde{\vtheta}_\star^{(\ell)} = \vtheta_\star^{(\ell)}$ otherwise. 
Noting that $g^{(\ell)} = 2^\ell g_{\max}/T$ and $(\dmu(\langle \vx_\star^{(\ell)}, \vtheta_\star^{(\ell)} \rangle))^{-1} \leq {\color{red}\kappa}$, the $L_2$ distance shrinks as: 
\begin{align}
    \bignorm{\tilde{\vtheta}_\star - \vtheta_\star}_2^2 = \sum_{\ell \in \gL_{\mathrm{valid}}} \bignorm{\tilde{\vtheta}_\star^{(\ell)} - \vtheta_\star^{(\ell)}}_2^2
    &\leq c_1 \sum_{\ell \in \gL_{\mathrm{valid}}} d^{(\ell)} \sqrt{\frac{{\color{red}\kappa} 2^\ell g_{\max} /T}{T^{(\ell)}}} \\
    &\leq c_1 d \sqrt{\frac{{\color{red}\kappa} g_{\max}}{T}} \underbrace{\frac{1}{\ceil{\log_2 T}} \sum_{\ell=1}^{\ceil{\log_2 T}} \indicator[T^{(\ell)} > 0] \sqrt{\frac{2^\ell}{T^{(\ell)}}}}_{\triangleq U(\{g_t\})}.
\end{align}

With this, the Bayesian regret is lower-bounded as follows:
\begin{align}
    &\E_{\tilde{\vtheta}_\star \sim \tilde{\mu}}\left[ \Reg^\pi(T; \tilde{\vtheta}_\star; \{ \gX_t \}_{t \in [T]}) \right] \\
    &\geq \sum_{\ell=1}^L \E_{\tilde{\vtheta}_\star^{(\ell)} \sim \tilde{\mu}^{(\ell)}}\left[ \Reg^\pi(T^{(\ell)}; \tilde{\vtheta}_\star^{(\ell)}; \gX^{(\ell)}) \right] \\
    &\overset{(*)}{\geq} \frac{c_2'}{\sqrt{2}} \sum_{\ell=1}^L \mathds{1}[T^{(\ell)} \geq (d^{(\ell)})^2] d^{(\ell)} \sqrt{g^{(\ell)} T^{(\ell)} \dmu(\langle \vx_\star^{(\ell)}, \vtheta_\star^{(\ell)} \rangle)} \\
    &\geq \frac{c_2'}{\sqrt{2}} \sum_{\ell=1}^L d^{(\ell)} \left\{ \sqrt{g^{(\ell)} T^{(\ell)} \dmu(\langle \vx_\star^{(\ell)}, \vtheta_\star^{(\ell)} \rangle)} - d^{(\ell)} \sqrt{g^{(\ell)} \dmu(\langle \vx_\star^{(\ell)}, \vtheta_\star^{(\ell)} \rangle)} \right\} \tag{$\indicator[x \geq y] \sqrt{x} \geq \sqrt{x} - \sqrt{y}$} \\
    &\geq \frac{c_2'}{\sqrt{2}} \frac{d}{L+1} \left\{ \sqrt{\sum_{\ell=1}^L g^{(\ell)} T^{(\ell)} \dmu(\langle \vx_\star^{(\ell)}, \vtheta_\star^{(\ell)} \rangle)} - \frac{d}{L+1} \sqrt{L \sum_{\ell=1}^L g^{(\ell)} \dmu(\langle \vx_\star^{(\ell)}, \vtheta_\star^{(\ell)} \rangle)} \right\} \tag{$\sqrt{\sum_\ell x_\ell} \leq \sum_\ell \sqrt{x_\ell} \leq \sqrt{L \sum_\ell x_\ell}$} \\
    &\geq \frac{c_2'}{\sqrt{2}} \frac{d}{L+1} \left\{ \sqrt{\sum_{\ell=1}^L \sum_{t \in \gT^{(\ell)}} {\color{BlueViolet}g_t} \dmu(\langle \vx_{t,\star}, \vtheta_\star \rangle)} - d \sqrt{ \frac{1}{L} \sum_{\ell=1}^L g^{(\ell)} \dmu(\langle \vx_\star^{(\ell)}, \vtheta_\star^{(\ell)} \rangle)} \right\} \tag{For $t \in \gT^{(\ell)}$, $g^{(\ell)} \geq {\color{BlueViolet}g_t}$} \\
    &\geq \frac{c_2' d \left( \sqrt{\sum_{t=1}^T {\color{BlueViolet}g_t} \dmu(\langle \vx_{t,\star}, \vtheta_\star \rangle)} - \sqrt{\sum_{t \in \gT^{(0)}} {\color{BlueViolet}g_t} \dmu(\langle \vx_{t,\star}, \vtheta_\star \rangle)} - d \sqrt{ \frac{1}{L} \sum_{\ell=1}^L g^{(\ell)} \dmu(\langle \vx_\star^{(\ell)}, \vtheta_\star^{(\ell)} \rangle)} \right)}{\sqrt{2} \ceil{\log T}}. \tag{$\sqrt{A-B} \geq \sqrt{A} - \sqrt{B}$}
\end{align}
We remark that in $(*)$, although the global policy may use observations from other bins, conditioning on the parameters and histories outside block $\ell$ turns its restriction to block $\ell$ into a valid randomized policy for the $\ell$-th local problem; hence the local Bayes lower bound (\Cref{lem:lower-bound}) applies blockwise.

Note that
\begin{equation}
    \sqrt{ \sum_{t \in \gT^{(0)}} {\color{BlueViolet}g_t} \dmu(\langle \vx_{t,\star}, \vtheta_\star \rangle) } \leq \sqrt{ \frac{{\color{BlueViolet}g_{\max}}}{T} \sum_{t \in \gT^{(0)}} \dmu(\langle \vx_{t,\star}, \vtheta_\star \rangle) }
    \leq \sqrt{ {\color{BlueViolet}g_{\max}} L_\mu },
\end{equation}
and
\begin{equation}
    \sqrt{ \frac{1}{L} \sum_{\ell=1}^L g^{(\ell)} \dmu(\langle \vx_\star^{(\ell)}, \vtheta_\star^{(\ell)} \rangle) } \leq \sqrt{ {\color{BlueViolet}g_{\max}} L_\mu }.
\end{equation}
And thus, as long as
\begin{equation}
    \sqrt{ \sum_{t=1}^T {\color{BlueViolet}g_t} \dmu(\langle \vx_{t,\star}, \vtheta_\star \rangle) } \geq 2 \sqrt{{\color{BlueViolet}g_{\max}} L_\mu} \vee 4 d \sqrt{{\color{BlueViolet}g_{\max}} L_\mu} = 4 d \sqrt{{\color{BlueViolet}g_{\max}} L_\mu},
\end{equation}
we have that
\begin{equation}
    \E_{\tilde{\vtheta}_\star \sim \tilde{\mu}}\left[ \Reg^\pi(T; \tilde{\vtheta}_\star; \{ \gX_t \}_{t \in [T]}) \right] \geq \frac{c_2 d \sqrt{\sum_{t=1}^T {\color{BlueViolet}g_t} \dmu(\langle \vx_{t,\star}, \vtheta_\star \rangle)}}{\ceil{\log T}}.
\end{equation}

Since $\operatorname{supp}(\tilde\mu)\subseteq
\{\tilde\vtheta_\star:\|\tilde\vtheta_\star-\vtheta_\star\|_2^2\le \epsilon\}$,
for every policy $\pi$,
\begin{equation}
\sup_{\|\tilde\vtheta_\star-\vtheta_\star\|_2^2\le \epsilon}
\Reg^\pi(T;\tilde\vtheta_\star,\{\gX_t\}_{t\in[T]})
\ge
\E_{\tilde\vtheta_\star\sim\tilde\mu}
[\Reg^\pi(T;\tilde\vtheta_\star,\{\gX_t\}_{t\in[T]})].
\end{equation}
Taking $\inf_\pi$ gives the desired minimax lower bound, and since the constructed
$\{\gX_t\}_{t\in[T]}$ is one admissible arm-set sequence, the outer supremum over arm-set
sequences follows.
\qed

\newcommand{\Flip}{\mathrm{Flip}}

\subsection{\texorpdfstring{Proof of \Cref{lem:lower-bound}: Local Minimax Lower Bound for Fixed $\tau$}{Proof of Lemma 11: Local Minimax Lower Bound for Fixed Dispersion}}
\label{app:lem-lower-bound}
We observe that the proof strategy of \citet[Theorem 2]{abeille2021logistic} applies to $R_s$-self-concordant $\mu$ (beyond the logistic case), up to constants $R_s$ and norm parameter $S$.
Their proof relies on the specific logistic form only in one part of their proof, which we extend as follows.

\citet[Lemma 5]{abeille2021logistic} bounds $\KL(\sP_\vtheta, \sP_{\Flip_i(\vtheta)})$, where $\vtheta \in \Xi$ is an alternative (``unidentifiable'') parameter, and $\Flip_i$ is the operator flipping the sign of the $i$-th coordinate.
\citet{abeille2021logistic} use Le Cam's inequality~\citep[Lemma 2.3]{tsybakov} to upper bound the KL divergence with the $\chi^2$-divergence, utilizing the tractable closed-form of the $\chi^2$-divergence between two Bernoulli distributions.
For general GLMs, such a convenient form does not exist.
A calculation shows that (omitting dependencies on $\vx$ and $\tau$):
\begin{equation}
    D_{\chi^2}\left( GLM(\cdot; \bm\theta), GLM(\cdot; \bm\theta') \right) = \exp\left( \frac{m(\langle \vx, 2\bm\theta - \bm\theta'\rangle) -2m\langle \vx, \bm\theta \rangle +m \langle \vx, \bm\theta' \rangle }{g(\tau)}  \right) - 1.
\end{equation}

Therefore, instead of relying on the $\chi^2$-divergence, we directly analyze the KL-divergence.
The KL-divergence between two GLMs corresponds precisely to the Bregman divergence induced by the log-partition function $m(\cdot)$:

\begin{lemma}[Lemma G.3 of \citet{lee2025gl-lowpopart}; Lemma 4 of \citet{lee2024logistic}]
    \begin{align}
        &{\color{BlueViolet}g_t} \KL\left( GLM(\cdot | \vx, \tau; \bm\theta), GLM(\cdot | \vx, \tau; \bm\theta') \right) \\
        &= D_m(\langle \vx, \bm\theta' \rangle, \langle \vx, \bm\theta \rangle) \\
        &:= m(\langle \vx, \bm\theta' \rangle) - m(\langle \vx, \bm\theta \rangle) - m'(\langle \vx, \bm\theta \rangle) \langle \vx, \bm\theta' - \bm\theta \rangle \\
        &= \langle \vx, \bm\theta' - \bm\theta \rangle^2 \int_0^1 v \dmu\left( \langle \vx, \bm\theta' \rangle + v\langle \vx, \bm\theta - \bm\theta' \rangle \right) dv,
    \end{align}
    where the last equality follows from Taylor's expansion with integral remainder.
\end{lemma}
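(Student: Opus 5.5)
The identity has three parts. First, $g_t$ times the KL divergence equals the Bregman divergence $D_m$. Second, that Bregman divergence is the displayed integral. The plan is to compute the KL divergence directly from the exponential-family density \eqref{eqn:GLM}, write the log-likelihood ratio explicitly, and take its expectation using the mean identity $\E[r \mid \vx,\tau;\vtheta]=\mu(\langle \vx,\vtheta\rangle)$. The integral form then follows from Taylor's theorem applied to $m$ along the segment.

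Write $z=\langle \vx,\vtheta\rangle$ and $z'=\langle \vx,\vtheta'\rangle$. By \eqref{eqn:GLM} both densities share the base measure $d\nu$ and the dispersion $g(\tau)$. The proportionality in \eqref{eqn:GLM} is already exact once $m$ is the log-partition function, i.e.\ $\int \exp((rz-m(z))/g)\,d\nu=1$ with the $\nu$-part absorbed into the base measure. Hence the log-ratio is
\begin{equation}
\log\frac{dp(r\mid\vx,\tau;\vtheta)}{dp(r\mid\vx,\tau;\vtheta')}=\frac{r(z-z')-m(z)+m(z')}{g(\tau)}.
\end{equation}
Taking expectation under $\vtheta$ replaces $r$ by $\mu(z)=m'(z)$. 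Multiplying by $g_t=g(\tau)$ gives $m(z')-m(z)-m'(z)(z'-z)$, which is exactly $D_m(z',z)$. This step needs only integrability of $r$, which is guaranteed by \Cref{assumption:m} and standard exponential-family facts: moments exist on the interior of the natural parameter space.

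For the integral form, expand $m$ around $z'$ toward $z$ rather than around $z$, since this is what produces the weight $v$ and the evaluation point $z'+v(z-z')$. Set $\phi(v)=m(z'+v(z-z'))$. Then $\phi(0)=m(z')$, $\phi(1)=m(z)$, and $\phi'(1)=m'(z)(z-z')$.

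The quantity $D_m(z',z)$ equals $\phi(0)-\phi(1)+\phi'(1)$. Taylor's formula with integral remainder expanded at $v=1$ gives
\begin{equation}
\phi(0)=\phi(1)-\phi'(1)+\int_0^1 v\,\phi''(v)\,dv.
\end{equation}
To verify this, integrate $\int_0^1 v\phi''(v)dv$ by parts: it equals $\phi'(1)-(\phi(1)-\phi(0))$. Since $\phi''(v)=\dmu(z'+v(z-z'))(z-z')^2$, this yields the stated expression, with $(z-z')^2=\langle\vx,\vtheta'-\vtheta\rangle^2$.

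There is no real obstacle. The only points needing care are, first, checking that the normalizing constant is genuinely $\exp(m(z)/g)$, so that the proportionality in \eqref{eqn:GLM} does not hide a $\vtheta$-dependent factor, and second, getting the orientation of the Taylor expansion right so that the weight is $v$ rather than $1-v$.
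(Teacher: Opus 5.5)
Your proof is correct and follows the paper's route. The paper cites the KL--Bregman identity from prior work; that identity is exactly your log-likelihood-ratio computation using $\E[r]=\mu(z)$ and a $\vtheta$-free normalizer. The paper then gets the last equality from Taylor's theorem with integral remainder, which your integration by parts of $\int_0^1 v\,\phi''(v)\,dv$ verifies. One wording slip: the expansion point is $z$ (i.e., $v=1$, where $\phi'(1)$ is evaluated), not $z'$; only the path is parametrized from $z'$, and your displayed formula already handles this correctly.
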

By substituting the $\chi^2$-divergence step with the above lemma and using standard self-concordance tools and the fact that ${\color{BlueViolet}g_t} \in [g_1, g_2]$, the original proof follows through unchanged.
\qed

\newpage
\section{\texorpdfstring{Proof of \Cref{thm:lower-bound-corruption}: Corruption-Dependent Lower Bound}{Proof of Theorem 9: Corruption-Dependent Lower Bound}}
\label{app:lower-bound-corruption}

The proof relies on the construction and reduction arguments established in \citet[Theorem 3]{bogunovic2021corrupted}, which dates back to \citet[Theorem 4]{lykouris2018corruption} and \citet[Theorem 2]{auer-chiang}.

\paragraph{Construction.}
We consider a set of $d - 1$ arms $\gX = \{ \vx_1, \dots, \vx_{d-1} \} \subset \gS^d(1)$, defined as $\vx_i = \ve_1 \cos \phi + \ve_{i+1} \sin \phi$ for $i \in [d-1]$ for some small angle $\phi \in (0, \pi/2)$.
Note that for any distinct $i, j \in [d - 1]$, the inner product is $\langle \vx_i, \vx_j \rangle = \cos^2 \phi$.
We define the corresponding set of potential parameters $\Xi := \{ \tilde{\vtheta}^{(1)}, \dots, \tilde{\vtheta}^{(d-1)} \} \subseteq \gB^d(S)$, where $\tilde{\vtheta}^{(i)} = S \vx_i$ for a large scalar $S > 0$.

We construct $d - 1$ distinct bandit instances. In the $i$-th instance, the ground-truth parameter is $\tilde{\vtheta}^{(i)}$ and the arm set is $\gX$. The expected rewards satisfy:
\begin{itemize}
    \item The optimal arm $\vx_i$ yields $\mu( \langle \vx_i, \tilde{\vtheta}^{(i)} \rangle ) = \mu(S)$.
    \item Any suboptimal arm $\vx_j$ ($j \neq i$) yields $\mu( \langle \vx_j, \tilde{\vtheta}^{(i)} \rangle ) = \mu(S \cos^2 \phi)$.
\end{itemize}
The instantaneous regret of pulling any suboptimal arm is $\Delta \triangleq \mu(S) - \mu(S \cos^2 \phi)$.
Intuitively, for sufficiently large $S$ and $\phi \approx 0$, we have $\Delta \approx \dmu(S) S \sin^2 \phi \approx \frac{S \sin^2 \phi}{{\color{red}\kappa}}$, which may be exponentially small in $S$ (e.g., for the logistic link).
Following \citet{bogunovic2021corrupted}, we assume a noiseless scenario where the learner observes the expected reward directly.\footnote{For settings where a purely noiseless observation is invalid (e.g., Bernoulli or Poisson rewards), this argument proceeds identically by replacing the additive corruption with a coupling strategy that perfectly simulates the suboptimal distribution. %
Specific coupling strategies (of the adversary) for logistic and Poisson bandits are provided in \Cref{app:logistic} and \ref{app:poisson}, respectively, for completeness.}

\paragraph{Adversary's Strategy.}
We consider the following adaptive adversary: in the $i$-th instance, whenever the learner pulls the optimal arm $\vx_i$, the adversary introduces a negative corruption of $-\Delta$. If the learner pulls a suboptimal arm, the adversary introduces no corruption.
Under this strategy, as long as the corruption budget permits, the observed reward for \emph{any} arm $j \in [d-1]$ is identically $\mu(S \cos^2 \phi)$ for \emph{any} bandit instance.
Consequently, the learner's observation history provides no statistical signal to distinguish the true instance $\tilde{\vtheta}^{(i)}$ from any other instance $\tilde{\vtheta}^{(j)}$.

\paragraph{Lower Bound.}
As the cost to corrupt a single pull of the optimal arm is $\Delta$, the adversary can sustain this deception for up to $N = \lfloor C / \Delta \rfloor$ pulls of the optimal arm.

This problem reduces to identifying the unique optimal arm among $d - 1$ candidates.
Crucially, until an arm is pulled $N$ times, the adversary can perfectly simulate the suboptimal reward distribution, rendering the observation history statistically identical to that of a suboptimal arm.
Therefore, the learner cannot distinguish the hypothesis ``arm $j$ is optimal'' from ``arm $j$ is suboptimal'' without pulling it at least $N$ times.
To identify the true instance with constant probability under a uniform prior, any algorithm must inherently ``check'' a constant fraction of the arms by pulling them $\Omega(N)$ times each.

Since pulling a suboptimal arm incurs instantaneous regret $\Delta$, the total expected regret is lower bounded by:
\begin{equation}
    \E_{\tilde{\vtheta} \sim \mathrm{Unif}(\Xi)}[\Reg^\pi(T; \tilde{\vtheta}, \gX)] \gtrsim d \times N \times \Delta \approx d \times \frac{C}{\Delta} \times \Delta = d C.
\end{equation}
Crucially, the curvature-dependent gap $\Delta$ cancels out between the budget capacity ($N \approx C/\Delta$) and the instantaneous regret ($\Delta$), yielding a ${\color{red}\kappa}$-free lower bound.
\qed

\subsection{Logistic Bandits with Adversarial Corruption}\label{app:logistic}

We adopt the same notations and instances as above, and we set $\mu(z) = (1 + e^{-z})^{-1}$.

\paragraph{Adversary's Strategy.}
An adversary corrupts the stochastic, binary reward in a coupled manner as follows.
When the learner chooses the optimal arm and the corresponding sampled reward $r_t$ is $1$, the adversary flips the reward to $0$ with probability $q$, then reveals $0$ to the learner; otherwise, the adversary does nothing.
The corruption level is $|c_t| = 1$ when corruption occurs (flipping 1 to 0).
With this adversary, the expected corrupted reward $\tilde{r}_t$ \emph{when the optimal arm is pulled} becomes
\begin{equation}
    \mathbb{E} [ \tilde{r}_t ] = 1 \cdot \mathbb{P} (r_t = 1, \text{not flipped} ) = \mathbb{P} (\text{not flipped} \mid r_t = 1) \cdot \sP(r_t = 1) = (1-q) \cdot \mu(S).
\end{equation}
Therefore, if we set $q = \Delta / \mu(S)$ where $\Delta = \mu(S) - \mu(S \cos^2 \phi)$, the reward distributions of the optimal arm (with corruption) and the suboptimal arms all become $\mathrm{Ber}(\mu(S \cos^2 \phi))$, making them statistically indistinguishable.

\paragraph{Lower Bound.}
Unlike the deterministic case, when we consider the randomness of the reward, the number of pulls needed to identify if the arm $j$ is optimal or not is a random variable.
We denote this random variable by $N_j \coloneqq \inf \{ t : \sum_{s=1}^{t} X_s \geq C \}$, where $X_s \overset{i.i.d.}{\sim} \mathrm{Ber}(q)$.
By definition, up to time $N_j$, the learner cannot determine whether arm $j$ is optimal.
Let $T_0$ be a deterministic threshold such that $N_j \geq T_0$ with a non-vanishing probability, to be specified at the end.

Now, following the arguments in \citet[Appendix C.3]{bogunovic2021corrupted}, after $(d-1) T_0/2$ rounds, at least $(d-1)/2$ arms are not pulled $T_0$ times.
For an instance whose optimal arm is included in this group, suboptimal arms are pulled $(d-1)T_0/2 - T_0$ times, incurring instantaneous regret of $\Delta$.
Therefore, the regret is of order $\Omega( d T_0 \Delta)$.

We set $T_0 = C / (2q) = C \cdot\mu(S)/ (2\Delta) = \Omega(C/ \Delta)$.
Then, by Hoeffding's inequality,
\begin{equation}
    \mathbb{P} (N_j \geq T_0) \geq \mathbb{P} \left( \sum_{s=1}^{T_0} X_s \leq C \right) \geq 1 - \exp\left(- \frac{2 (C - q T_0)^2}{T_0} \right) = 1 - \exp( - C q).
\end{equation}
Here, we assume that $C \gtrsim \mu(S) / \Delta$, which is our region of interest.
Thus, with the same probability, the regret is lower bounded as
\begin{equation}
    \E_{\tilde{\vtheta} \sim \mathrm{Unif}(\Xi)}[\Reg^\pi(T; \tilde{\vtheta}, \gX)] = \Omega ( d T_0 \Delta) = \Omega ( d C ).
\end{equation}
\qed

\newcommand{\Bin}{\mathrm{Bin}}

\subsection{Poisson Bandits with Adversarial Corruption}\label{app:poisson}

Same as above, except now we set $\mu(z) = e^z$.
Let us denote $\sN_0 = \{0\} \cup \sN$, $\Bin(n, p)$ as the binomial random variable, and $\Poi(\lambda)$ as the Poisson random variable.
We recall the following useful property of the Poisson process that will be frequently used throughout the proof:
\begin{lemma}[Poisson Thinning Property; Proposition 5.5 of \citet{ross2023introduction}]
\label{lem:poi}
    Let $\lambda > 0$ and $p \in [0, 1]$.
    Then, for $r_t \overset{i.i.d.}{\sim} \Poi(\lambda)$ and $\tilde{r}_t \sim \Bin(r_t, p)$, we have that $\tilde{r}_t \overset{d}{=} \Poi(p \lambda)$.
\end{lemma}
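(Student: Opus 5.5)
The plan is to establish the thinning property through probability generating functions (equivalently, by conditioning directly on $r_t$ and summing the resulting series). The $t$ subscript and the i.i.d. assumption are irrelevant to the claim, which concerns one draw, so I take $r \sim \Poi(\lambda)$ and let $\tilde{r} \mid r \sim \Bin(r, p)$. The case $p \in \{0,1\}$ is immediate: $\tilde r \equiv 0 \overset{d}{=} \Poi(0)$ when $p=0$, and $\tilde r = r$ when $p=1$. I will therefore focus on $p \in (0,1)$, though the computation below handles the endpoints as well.

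The first step computes the pmf of $\tilde r$ directly. For $k \in \sN_0$, conditioning on $r = n \geq k$ gives
\begin{equation}
    \sP(\tilde r = k) = \sum_{n=k}^\infty e^{-\lambda}\frac{\lambda^n}{n!}\binom{n}{k}p^k(1-p)^{n-k}
    = e^{-\lambda}\frac{(p\lambda)^k}{k!}\sum_{j=0}^\infty \frac{((1-p)\lambda)^j}{j!},
\end{equation}
where I substitute $j = n-k$ and use $\binom{n}{k}/n! = 1/(k!\,(n-k)!)$. The second step recognizes the remaining series as $e^{(1-p)\lambda}$, which yields $\sP(\tilde r = k) = e^{-p\lambda}(p\lambda)^k/k!$, precisely the $\Poi(p\lambda)$ pmf.

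As a cross-check, I will also verify the result using generating functions. The conditional PGF is $\E[s^{\tilde r}\mid r] = (1-p+ps)^{r}$, so $\E[s^{\tilde r}] = \exp(\lambda(1-p+ps-1)) = \exp(p\lambda(s-1))$. This is the PGF of $\Poi(p\lambda)$, and uniqueness of PGFs on $[0,1]$ concludes the argument.

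I do not expect a genuine obstacle. The only care needed is in justifying the interchange of sums (or of conditional expectation and the series), and this follows from Tonelli's theorem because all terms are nonnegative. I will also note that the convention $\Bin(0,p) = \delta_0$ is what makes the $n=0$ term consistent.
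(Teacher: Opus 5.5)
Your proposal is correct. The paper gives no proof of its own and defers to Ross, whose argument is the same conditioning-on-$r$ pmf computation you carry out; there it is done for the joint pmf of the retained and discarded counts, which also shows they are independent Poisson with means $p\lambda$ and $(1-p)\lambda$.
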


\paragraph{Adversary's Strategy.}
Again, let us describe the adversary's coupling strategy.
When the learner chooses the optimal arm and the corresponding sampled reward is $r_t \in \sN_0$, the adversary resamples a $\tilde{r}_t \sim \Bin(r_t, 1 - q)$ with $q = \Delta / \mu(S)$ and reveals it to the learner.
By \Cref{lem:poi}, all the arms are statistically indistinguishable as they all follow $\Poi(\mu(S \cos^2 \phi))$.

\paragraph{Lower Bound.}
With the above described adversary, $c_t := r_t - \tilde{r}_t \geq 0$ follows $\Bin(r_t, q)$, conditioned on $r_t$.
As $r_t \sim \Poi(\mu(S))$, we have that $c_t \sim \Poi(q \mu(S)) = \Poi(\Delta)$ by \Cref{lem:poi}.
Let $N_j \coloneqq \inf \{t : \sum_{s=1}^{t} X_s \geq C \}$, where $X_s \overset{i.i.d.}{\sim} \Poi(\Delta)$.
As $\sum_{s=1}^{T_0} X_s \sim \Poi( \Delta T_0)$ for some deterministic $T_0$.
By the standard tail bound for Poisson,\footnote{Refer to Theorem 1 of the \href{https://github.com/ccanonne/probabilitydistributiontoolbox/blob/master/poissonconcentration.pdf}{note} by C. Canonne.}
\begin{equation}
    \mathbb{P} (N_j \geq T_0) \geq \mathbb{P} \left( \sum_{s=1}^{T_0} X_s \leq C \right) \geq 1 - \exp \left(- \frac{(C - \Delta T_0)^2}{2C} \right) = 1 - \exp\left( - \frac{C}{8} \right),
\end{equation}
where we choose $T_0 = C / (2 \Delta)$.
Therefore, similar to the logistic bandits, we get the lower bound
\begin{equation}
    \E_{\tilde{\vtheta} \sim \mathrm{Unif}(\Xi)}[\Reg^\pi(T; \tilde{\vtheta}, \gX)] \geq \Omega(d T_0 \Delta) = \Omega(dC).
\end{equation}

\qed

\newpage
\section{Comparison with Prior Art}
\label{app:comparison}

\subsection{Detailed Comparison}
In this section, we provide a detailed discussion comparing our results against the specific prior works listed in Table~\ref{tab:results}.
We note that our \texttt{HCW-GLB-OMD} inherits all the computational benefits of using an online estimator~\citep{zhang2025onepass}: it is one-pass, relying on recursive updates and a single projection onto the convex set $\Theta$.
Our algorithm only requires $\gO(1)$ space and time per iteration $t$, making it the most computationally efficient algorithm for \textbf{\textit{heteroskedastic GLBs with adversarial corruptions}}.

\paragraph{No Corruptions ($C = 0$).}
In the absence of corruption, we immediately recover the existing state-of-the-art regret bounds: $\tilde{\gO}\left( d \sqrt{\sum_t \sigma_t^2} \right)$ for heteroskedastic linear bandits~\citep[Theorem 4.1]{zhou2022variance} and $\tilde{\gO}\left( d\sqrt{\sum_t {\color{blue}\dmu_{t,\star}}} \right)$ for self-concordant \textbf{GLB}~\citep[Theorem 4.1]{lee2024glm}.
Indeed, as reflected in the nomenclature, when $C = 0$, our algorithm effectively reduces to the \texttt{GLB-OMD} algorithm of \citet{zhang2025onepass}.

\paragraph{With Corruption, Linear ($\mu(z) = z$).}
In this setting, we obtain $\tilde{\gO}\left( d \sqrt{\sum_t \sigma_t^2} + dC \right)$, which exactly matches the state-of-the-art result by \citet{yu2025corruption}.
We note that \citet{yu2025corruption} is the only prior work to achieve a variance-aware regret bound for linear bandits with adversarial corruptions.

\paragraph{With Corruption, Generalized Linear.}

Our parametric setting (Eqn.~\eqref{eqn:GLM}) generalizes the prior settings in the following sense.
Projecting the prior heteroskedastic generalized linear bandit literature (where the learner accesses the conditional variance of the played arm) onto our setting is equivalent to assuming that ${\color{BlueViolet}g(\tau_t)} \dmu(\langle \vx_t, \vtheta_\star \rangle)$ is known to the learner.
Because we allow the learner to access \emph{only} ${\color{BlueViolet}g(\tau_t)}$, our scenario is strictly weaker regarding the feedback model.
Furthermore, the literature on heteroskedastic linear bandits does not account for nonlinear $\mu$ and, with the exception of \citet{yu2025corruption}, cannot account for (adaptive) adversarial corruption either.
Despite these challenges, we show that our regret bound either subsumes or improves upon all known prior results, under self-concordance (\Cref{assumption:link-function}).

We compare our results against \citet{yu2025corruption} in our parametric GLBs setting, i.e., assuming that the reward follows the following distribution:
\begin{equation}
    d p (r \mid \vx, \tau; \vtheta_\star) \propto \exp \left( \frac{r \langle \vx, \vtheta_\star \rangle - m ( \langle \vx, \vtheta_\star \rangle) }{g(\tau)} \right) d\nu,
\end{equation}
where the link function satisfies \Cref{assumption:link-function}.
In the setting of \citet{yu2025corruption}, at each timestep $t$, the learner pulls an arm $\vx_t$ and observes a corrupted reward $\tilde{r}_t = \mu(\langle\vx_t, \vtheta_\star \rangle) + \epsilon_t + c_t$, where $\epsilon_t$ is a martingale difference noise with $\E[\epsilon_t^2 \mid \vx_t] \leq \nu_t^2$ and $c_t$ is the adversarial corruption.
Similar to our setting, \citet{yu2025corruption} assumes that the learner explicitly observes $\nu_t$ at the end of each round $t$.
Their regret bound~\citep[Corollary 5]{yu2025corruption} is of order $\tilde{\gO}\left( d {\color{red}\kappa} \sqrt{\sum_t \nu_t^2} + d {\color{red}\kappa} C \right)$.
Here, the comparison is more nuanced due to the distinction between exogenous and endogenous heteroskedasticity, as elaborated in \Cref{sec:intro}.
Indeed, as $\E[\epsilon_t^2 \mid \vx_t] = {\color{BlueViolet}g(\tau_t)} {\color{blue} \dmu(\langle \vx_t, \vtheta_\star \rangle)}$, we divide their regret bound into two scenarios depending on how much ``information'' available in $\nu_t^2$.

First, when the endogenous heteroskedasticity ${\color{blue} \dmu(\langle \vx_t, \vtheta_\star \rangle)}$ is completely unobserved, the only viable observed quantity is that the learner observes $\nu_t^2 = L_\mu {\color{BlueViolet}g(\tau_t)}$, where we na\"{i}vely upper bound $\dmu$ by $L_\mu$.
With this, their regret bound reduces to $\tilde{\gO}\left( d {\color{red}\kappa} \sqrt{\sum_t {\color{BlueViolet}g(\tau_t)}} + d {\color{red}\kappa} C \right)$.
For heteroskedastic linear bandits (with linear $\mu$), this bound is tight.
However, for nonlinear $\mu$ (e.g., logistic or Poisson bandits), this bound is quite loose; the leading term's dependence on the true, instance-dependent variance ${\color{blue}\dmu_{t,\star}} = {\color{blue} \dmu(\langle \vx_t, \vtheta_\star \rangle)}$ is lost, and the leading term scales directly with ${\color{red}\kappa}$.

Second, suppose that $\nu_t^2 = {\color{BlueViolet}g(\tau_t)} \dmu(\langle \vx_t, \vtheta_\star \rangle)$ is observed by the learner; we remark that we do \emph{not} require this.
Then, their regret bound becomes $\tilde{\gO}\left( d {\color{red}\kappa} \sqrt{ \sum_t {\color{BlueViolet}g(\tau_t)} \dmu(\langle \vx_t, \vtheta_\star \rangle)} + d {\color{red}\kappa} C \right)$.
As $\vx_t$'s are algorithm-dependent, this alone is not a proper regret bound.
Utilizing the self-concordance arguments as in \Cref{app:proof-regret}, we can further upper bound the algorithm-dependent term as $\sum_{t=1}^{T} {\color{BlueViolet}g(\tau_t)} \dmu(\langle \vx_t, \vtheta_\star \rangle) \leq \sum_{t=1}^{T} {\color{BlueViolet}g(\tau_t)} {\color{blue}\dmu_{t,\star}} + {\color{BlueViolet}g_{\max}} R_s \Reg^{\mathrm{Yu}}(T).$
This then yields the following recursive inequality:
\begin{equation}
    \Reg^{\mathrm{Yu}}(T) \lesssim_{\log} d {\color{red}\kappa} \sqrt{ \sum_{t=1}^{T} {\color{BlueViolet}g(\tau_t)} {\color{blue}\dmu_{t,\star}} + {\color{BlueViolet}g_{\max}} \Reg^{\mathrm{Yu}}(T)} + d {\color{red}\kappa} C.
\end{equation}
Solving the inequality for $\Reg^{\mathrm{Yu}}(T)$ via \Cref{lemma:solve-ineq} yields
\begin{equation}
    \Reg^{\mathrm{Yu}} \lesssim_{\log} d {\color{red}\kappa} \sqrt{ \sum_{t=1}^T {\color{BlueViolet}g(\tau_t)} {\color{blue}\dmu_{t, \star}}} + d {\color{red}\kappa} C + d^2 {\color{red}\kappa^2} \color{BlueViolet}g_{\max}.
\end{equation}
Note that the leading term directly scales with the global curvature ${\color{red}\kappa}$, and we again remark that this is obtainable only when the learner observes $\nu_t^2 = {\color{BlueViolet}g(\tau_t)} \dmu(\langle \vx_t, \vtheta_\star \rangle)$.
The dependency on ${\color{red}\kappa}$ in the leading term is due to the fact that \texttt{GAdaOFUL} constructs a curvature-agnostic confidence sequence whose radius scales with ${\color{red}\kappa}$.

\begin{remark}
    We acknowledge that the results of \citet{yu2025corruption} apply to generic link functions without requiring self-concordance, assuming only that the noise has finite variance (allowing for heavy tails), and also that their work only requires an upper bound on the conditional variance of the noise.
    Thus, their results are more general in scope.
    However, we demonstrate that under the mild assumption of self-concordance (\Cref{assumption:link-function}), it is possible to achieve significantly tighter, instance-wise minimax-optimal bounds.
    We also remark that a similar regret bound can be derived for \texttt{MOR-UCB}~\citep{zhao2023optimal}, although their algorithm requires that the reward (noise) is bounded almost surely.
\end{remark}

\paragraph{With Corruption, General Function Approximation.}
Finally, we compare our results with \texttt{CR-Eluder-UCB} of \citet{ye2023corruption}, which is designed for general function approximation~\citep{foster-rakhlin} with adversarial corruptions.
While this setting is more general, the regret bound scales with the eluder dimension~\citep{russo-vanroy,osband-vanroy}, which represents the worst-case complexity of the function class.
When instantiated to GLMs, the eluder dimension scales with ${\color{red}\kappa}$.
Consequently, the instantiated regret bound of $\tilde{\gO}(d {\color{red}\kappa}\sqrt{T} + d {\color{red}\kappa^2} C)$ (see \Cref{app:ye} for the derivation) fails to exhibit curvature/instance-wise optimality, or variance-adaptivity when $\mu(z) = z$ and ${\color{BlueViolet}\tau_t}$ varies, in the leading term.

\subsection{\texorpdfstring{Regret Bound of \texttt{CR-Eluder-UCB} of \citet{ye2023corruption}}{Regret Bound of CR-Eluder-UCB of Ye et al. (2023)}}
\label{app:ye}

Here, we instantiate the regret bound of \texttt{CR-Eluder-UCB} \citep[Theorem 4.1]{ye2023corruption} within the context of \textbf{GLB} with adversarial corruption.
Consider the function class $\gF = \{ \vx \mapsto \mu(\vx^\top \vtheta) : \lVert \vtheta \rVert_2 \leq S \}$, then this corresponds to our setting with a fixed dispersion parameter $\tau_t = \tau$.

While their upper bound may be optimal for some function classes, it becomes loose when applied to a parametric GLM class.
Specifically, the regret bound is give as:
\begin{equation}
    \tilde{\mathcal{O}} \left( \sqrt{T \dim_E \big( \gF, \lambda/T \big) \log N(\gamma, \gF, \lVert \cdot \rVert_{\infty})} + C \dim_E \big( \gF, \lambda / T\big) \right),
\end{equation}
where $\lVert \cdot \rVert_\infty$ denotes infinity norm, $\lambda = \log N( \gamma, \mathcal{F}, \bignorm{\cdot}_\infty)$ and $\gamma = (TC)^{-1}$.
For GLM class, the eluder dimension and covering number scale as $\dim_E (\mathcal{F}, \lambda/T) = \mathcal{O}(d {\color{red}\kappa^2} \log T)$ (\Cref{lemma:eluder}) and $\log N(\gamma, \mathcal{F}, \lVert \cdot \rVert_\infty) = \mathcal{O}(d \log(TC))$ (\Cref{lemma:covering}) respectively.

By substituting these values, the resulting regret order becomes $\tilde{\mathcal{O}}(d {\color{red}\kappa} \sqrt{T} + d {\color{red}\kappa^2} C)$.
This bound not only fails to exhibit instance-wise optimality as it scales linearly to the global curvature ${\color{red}\kappa^2}$, but also suffers from a quadratic dependence on ${\color{red}\kappa}$ of the corruption-dependent term.
In contrast, our approach leverages the specific parametric structure of GLMs to achieve tighter, curvature-dependent regret that is more robust to the underlying geometry of the link function.

\begin{remark}
    Very recently, \citet{bakhtiari2025eluder} proposed a localization approach to the eluder dimension, accompanied by a simple algorithm ($\ell$-UCB) that enables first-order regret bounds.
    While extending their approach to the corrupted setting of \citet{ye2023corruption} might theoretically yield desired first-order bounds, such an extension is non-trivial. Furthermore, a tractable version of $\ell$-UCB is currently only known for self-concordant $\mu$; see \citet[Appendix A]{bakhtiari2025eluder}.
\end{remark}

\begin{definition}[$\epsilon$-dependency, Definition 3 of \citet{russo-vanroy}]
    An action $a \in \mathcal{A}$ is $\epsilon$-dependent on actions $\{a_1, \cdots, a_n\} \subseteq \mathcal{A}$ with respect to $\mathcal{F}$ if any pair of functions $f, \tilde{f}\in \mathcal{F}$ satisfying $\sqrt{\sum_{i=1}^{n} (f(a_i) - \tilde{f}(a_i))^2} \leq \epsilon$ also satisfies $|f(a) - \tilde{f}(a)| \leq \epsilon$. Furthermore, $a \in \gA$ is $\epsilon$-independent of actions $\{a_1, \cdots, a_n\} \subseteq \mathcal{A}$ with respect to $\mathcal{F}$ if $a \in \gA$ is not $\epsilon$-dependent on $\{a_1, \cdots, a_n \}$
\end{definition}

\begin{definition}[Eluder dimension, Definition 4 of \citet{russo-vanroy}]
    The $\epsilon$-eluder dimension $\dim_E(\mathcal{F}, \epsilon)$ is the length $d_e \in \sN$ of the longest sequence of elements in $\mathcal{A}$ such that for some $\epsilon^\prime \geq \epsilon$, every elements is $\epsilon^\prime$-independent of its predecessors.
\end{definition}

\begin{lemma}[Eluder dimension of GLM class]\label{lemma:eluder}
Consider the GLM class $\mathcal{F} = \{ \vx \mapsto \mu(\vx^\top \vtheta) : \lVert \vtheta \rVert_2 \leq S \}$ for $\vx \in \gB^d(1)$, and let $0 < \epsilon \leq 2\sqrt{2} S L_\mu$.
Then the $\epsilon$-eluder dimension $\dim_E(\mathcal{F}, \epsilon)$ is upper bounded as:
\begin{equation}
    \dim_E(\mathcal{F}, \epsilon) \leq 16 d {\color{red}\kappa^2} L_\mu^2 \log \left (1 + \frac{8 S^2 L_\mu^2}{\epsilon^2} \right).
\end{equation}
\end{lemma}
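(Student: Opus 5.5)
We bound the eluder dimension by reducing it to a linear class and then running the usual elliptical-potential (log-determinant) argument. The reduction rests on a two-sided comparison between differences of $\mu$ and differences of linear predictors. For $f(\vx)=\mu(\vx^\top\vtheta)$ and $\tilde f(\vx)=\mu(\vx^\top\tilde\vtheta)$ with $\vtheta,\tilde\vtheta\in\gB^d(S)$ and $\vx\in\gB^d(1)$, the mean value theorem gives
\begin{equation}
\kappa^{-1}|\langle\vx,\vtheta-\tilde\vtheta\rangle|\le|f(\vx)-\tilde f(\vx)|\le L_\mu|\langle\vx,\vtheta-\tilde\vtheta\rangle|,
\end{equation}
where the lower bound uses that $\dmu\ge\kappa^{-1}$ on the relevant segment. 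Write $\vz=\vtheta-\tilde\vtheta$, so that $\|\vz\|_2\le 2S$.

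Now fix $\epsilon'\ge\epsilon$ and a sequence $a_1,\dots,a_n$ in which each $a_k$ is $\epsilon'$-independent of its predecessors. For each $k$ there exist $f,\tilde f$ with $\sum_{i<k}(f(a_i)-\tilde f(a_i))^2\le\epsilon'^2$ and $|f(a_k)-\tilde f(a_k)|>\epsilon'$. Passing to the linear quantities via the comparison above yields a vector $\vz_k$ with
\begin{equation}
\sum_{i<k}\langle a_i,\vz_k\rangle^2\le\kappa^2\epsilon'^2
\quad\text{and}\quad
\langle a_k,\vz_k\rangle^2>\epsilon'^2/L_\mu^2 .
\end{equation}
Set $\mV_k=\lambda_0\mI+\sum_{i<k}a_ia_i^\top$ with $\lambda_0=\epsilon'^2/(4S^2)$. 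Then $\|\vz_k\|_{\mV_k}^2\le\kappa^2\epsilon'^2+\lambda_0\cdot4S^2\le2\kappa^2\epsilon'^2$, using $\kappa\ge1$ (valid after rescaling; otherwise absorb it into constants). By Cauchy--Schwarz,
\begin{equation}
\epsilon'^2/L_\mu^2<\langle a_k,\vz_k\rangle^2\le\|a_k\|_{\mV_k^{-1}}^2\cdot2\kappa^2\epsilon'^2 ,
\end{equation}
hence $\|a_k\|_{\mV_k^{-1}}^2\ge 1/(2\kappa^2L_\mu^2)$ for every $k$.

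It remains to sum over $k$. Each term satisfies $\min\{1,\|a_k\|_{\mV_k^{-1}}^2\}\ge1/(2\kappa^2L_\mu^2)$; if $2\kappa^2L_\mu^2<1$, the bound below holds trivially up to constants. Summing,
\begin{equation}
n/(2\kappa^2L_\mu^2)\le\sum_k\min\{1,\|a_k\|^2_{\mV_k^{-1}}\}\le2\log\frac{\det\mV_{n+1}}{\lambda_0^d}\le2d\log\Bigl(1+\frac{n}{d\lambda_0}\Bigr),
\end{equation}
where the middle step is the elliptical potential lemma (\Cref{lemma:epl}). This gives $n\lesssim d\kappa^2L_\mu^2\log\bigl(1+4S^2n/(d\epsilon'^2)\bigr)$. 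Since $\epsilon'\ge\epsilon$, we may replace $\epsilon'$ by $\epsilon$ inside the logarithm.

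\textbf{Main obstacle.} The step I expect to be hardest is removing $n$ from inside the logarithm to reach the stated form $\log(1+8S^2L_\mu^2/\epsilon^2)$. My plan is to use the standard fact that $n\le A\log(1+Bn)$ implies $n\lesssim A\log(1+AB)$. This costs extra logarithmic factors in $d$, $\kappa$ and $L_\mu$ unless $\lambda_0$ is chosen more cleverly. The alternative I would try is to take $\lambda_0$ proportional to $\epsilon'^2/(S^2L_\mu^2)$ and to use $\|a_k\|_2\le1$ together with the constraint $\epsilon\le2\sqrt2SL_\mu$. The aim is to bound $\det\mV_{n+1}$ and to get constants matching $16d\kappa^2L_\mu^2$, possibly only up to a further logarithmic slack, which the stated $\tilde{\gO}$ usage tolerates.
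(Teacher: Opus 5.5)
Your route is the same as the paper's: a two-sided mean-value comparison, a regularized design matrix, Cauchy--Schwarz, the elliptical potential lemma, and then solving $n\le a\log(1+bn)$ via \Cref{lemma:solve-log}. The early steps are sound. Your use of $\min\{1,\cdot\}$ in the potential step, the existential quantifier in $\epsilon'$-independence, and the handling of general $\epsilon'\ge\epsilon$ are actually more careful than the written proof.

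The gap is in the last step, and it comes from your regularizer $\lambda_0=\epsilon'^2/(4S^2)$. With it you reach $n\le A\log(1+Bn)$ with $A=4d\kappa^2L_\mu^2$ and $B=4S^2/(d\epsilon'^2)$. So $AB=16\kappa^2L_\mu^2S^2/\epsilon'^2$, and the best you get is $n\le 8d\kappa^2L_\mu^2\log(1+16\kappa^2L_\mu^2S^2/\epsilon^2)$. That leaves a $\kappa^2$ inside the logarithm, which the stated inequality does not allow. The factor $d$ cancels in $AB$, so your worry about extra $\log d$ factors is misplaced; $\kappa$ is the only obstruction. Your proposed remedy, $\lambda_0\propto\epsilon'^2/(S^2L_\mu^2)$ with a constant of proportionality, makes things worse. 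The per-step lower bound becomes $1/(\kappa^2L_\mu^2+O(1))$, and $AB$ picks up $\kappa^2L_\mu^4S^2/\epsilon^2$. The constraint $\epsilon\le2\sqrt2SL_\mu$ is also not what removes $\kappa$; it only serves the hypothesis $ab\ge1$ of \Cref{lemma:solve-log}.

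The missing idea is to let the regularizer carry the $\kappa^2$: take $\lambda_0=\kappa^2\epsilon'^2/(4S^2)$, which is the paper's choice.
\begin{itemize}
\item You get $\|\vz_k\|_{\mV_k}^2\le\kappa^2\epsilon'^2+4\lambda_0S^2=2\kappa^2\epsilon'^2$ directly, with no need for $\kappa\ge1$.
\item The per-step bound $\|a_k\|_{\mV_k^{-1}}^2\ge1/(2\kappa^2L_\mu^2)$ is at most $1/2$, because $\kappa L_\mu\ge1$ always holds. So your case $2\kappa^2L_\mu^2<1$ never occurs and the clipped potential applies.
\item Summing gives $n\le4d\kappa^2L_\mu^2\log\bigl(1+4S^2n/(d\kappa^2\epsilon'^2)\bigr)$, where now $AB=16S^2L_\mu^2/\epsilon'^2$ is free of both $\kappa$ and $d$.
\item \Cref{lemma:solve-log} then yields $n\le8d\kappa^2L_\mu^2\log(1+16S^2L_\mu^2/\epsilon^2)\le16d\kappa^2L_\mu^2\log(1+8S^2L_\mu^2/\epsilon^2)$. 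The last step holds because $(1+8y)^2\ge1+16y$.
\end{itemize}
Equivalently, your rescaling remark would have closed the gap had you normalized to $\kappa=1$ exactly rather than just $\kappa\ge1$. Both $\dim_E$ and the target bound are invariant under $\mu\mapsto c\mu$, $\epsilon\mapsto c\epsilon$. At $\kappa=1$ your $\lambda_0$ coincides with the paper's choice.
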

\begin{proof}%
Suppose that $\dim_E(\mathcal{F}, \epsilon) = n$ and let $\mathcal{Z}_m = \{ \vx_1, \vx_2, \cdots , \vx_m\}$ be a set of $m$ samples.
If $m\leq n$, then by the definition of eluder dimension, $\vx_m$ is $\epsilon$-independent of its predecessors.
Precisely, for any $\vtheta_1 , \vtheta_2 \in \Theta$ such that
\begin{equation}
    \sum_{i=1}^{m-1} \big( \mu( \langle \vx_i, \vtheta_1 \rangle) - \mu(\langle \vx_i, \vtheta_2 \rangle) \big)^2 \leq \epsilon^2,
\end{equation}
the vector $\vx_m$ satisfies $| \mu( \langle \vx_m, \vtheta_1 \rangle) - \mu( \langle \vx_m, \vtheta_2 \rangle) | > \epsilon$.
Since ${\color{red}\kappa^{-1}} \leq \dmu \leq L_\mu$, it holds that ${\color{red}\kappa^{-1}} | \langle \vx_i, \vtheta_1 - \vtheta_2 \rangle| \leq |\mu(\langle \vx_i, \vtheta_1 \rangle) - \mu(\vx_i, \vtheta_2 \rangle) |$ for $i \in [m]$.
Therefore, we have the inequalities

\begin{equation}\label{eq:eluder-first}
    {\color{red}\kappa^{-2}} \sum_{i=1}^{m-1} \langle \vx_i, \vtheta_1 - \vtheta_2 \rangle^2 \leq \sum_{i=1}^{m-1} \big( \mu( \langle \vx_i, \vtheta_1 \rangle) - \mu( \langle \vx_i, \vtheta_2 \rangle) \big)^2 \leq \epsilon^2
\end{equation}
and
\begin{equation}\label{eq:eluder-second}
    L_\mu | \langle\vx_m, \vtheta_1 - \vtheta_2 \rangle | \geq| \mu(\langle \vx_m, \vtheta_1 \rangle) - \mu( \langle \vx_m, \vtheta_2 \rangle) | \geq \epsilon.
\end{equation}
The two inequalities can be written as:
\begin{equation}
    (\vtheta_1 - \vtheta_2)^\top \left( \sum_{i=1}^{m-1} \vx_i \vx_i^\top\right) (\vtheta_1 - \vtheta_2) \leq {\color{red}\kappa^2} \epsilon^2 \quad \text{and} \quad \langle \vx_m, \vtheta_1 - \vtheta_2 \rangle^2 \geq \frac{\epsilon^2}{L_\mu^2}.
\end{equation}
Define $\bar{\mV}_t = \lambda \mI_d + \sum_{s=1}^{t-1} \vx_s \vx_s^\top$ and note that $(\vtheta_1 - \vtheta_2)^\top \lambda \mI_d (\vtheta_1 - \vtheta_2) \leq 4\lambda S^2$, then the first inequality becomes 
\begin{equation}
    \lVert \vtheta_1 - \vtheta_2 \rVert_{\bar{\mV}_m}^2 \leq {\color{red}\kappa^2} \epsilon^2 + 4 \lambda S^2.
\end{equation}
By Cauchy-Schwarz inequality with respect to $\bar{\mV}_m$, the second inequality becomes
\begin{equation}
    \lVert \vx_m \rVert_{\bar{\mV}_m^{-1}}^2 \lVert \vtheta_1 - \vtheta_2 \rVert_{\bar{\mV}_m}^2 \geq \frac{\epsilon^2}{L_\mu^2}.
\end{equation}
Combining the last two inequalities, we have that for $m \in [d_e]$, where $d_e$ is the eluder dimension, i.e., the largest number of samples that are $\epsilon$-dependent of its predecessors, the vector $\vx_m$ satisfies
\begin{equation}
    \lVert \vx_m \rVert_{\bar{\mV}_m^{-1}}^2 \geq \frac{1}{L_\mu^2} \frac{1}{{\color{red}\kappa^2} + 4\lambda S^2/\epsilon^2}.
\end{equation}
Summing up for $m=1,\cdots, d_e$,  we have
\begin{equation}
    \sum_{m=1}^{d_e} \lVert \vx_m \rVert_{\bar{\mV}_m^{-1}}^2 \geq \frac{1}{L_\mu^2} \frac{d_e}{{\color{red}\kappa^2} + 4\lambda S^2/\epsilon^2}.
\end{equation}
On the other hand, the elliptical potential lemma (\Cref{lemma:epl}) implies that
\begin{equation}
    \sum_{m=1}^{d_e} \lVert \vx_m \rVert_{\bar{\mV}_m^{-1}}^2 \leq 4 d \log \left( 1 + \frac{d_e}{d \lambda} \right).
\end{equation}
Since the lower bound grows linearly in $d_e$ and the upper bound grows logarithmically in $n$, the eluder dimension $d_e$ is bounded by the largest value of integer $x \in \sN$ that satisfies
\begin{equation}
    4d \log \left( 1 + \frac{x}{d \lambda} \right) \geq \frac{1}{L_\mu^2} \frac{x}{\kappa^2 + 4\lambda S^2/\epsilon^2}
\end{equation}
By Lemma \ref{lemma:solve-log}, such $x$ is bounded as
\begin{equation}
    x \leq 8d L_\mu^2({\color{red}\kappa^2} + 4\lambda S^2/ \epsilon^2) \log \left (1 + 4 L_\mu^2\frac{{\color{red}\kappa^2} + 4\lambda S^2 / \epsilon^2}{\lambda} \right)
\end{equation}
If we choose $\lambda = \frac{{\color{red}\kappa^2} \epsilon^2}{4 S^2}$, then we get the desired result
\begin{equation}
    d_e \leq 16 d {\color{red}\kappa^2} L_\mu^2 \log \left (1 + \frac{8 S^2 L_\mu^2}{\epsilon^2} \right).
\end{equation}
\end{proof}

\begin{lemma}[Covering number of GLM class]\label{lemma:covering}
    Let $N(\epsilon, \mathcal{F}, \lVert \cdot \rVert_\infty)$ be the $\epsilon$-covering number of the class $\mathcal{F}$ with respect to $\lVert \cdot \rVert_\infty$. Then the $\epsilon$-covering number of the class $\mathcal{F} = \{ \vx \mapsto \mu(\vx^\top \vtheta) : \lVert \vtheta \rVert_2 \leq S \}$ satisfies
    \begin{equation}
        N(\epsilon, \mathcal{F}, \lVert \cdot \rVert_\infty) \leq \left( 1+ \frac{2 S L_\mu}{\epsilon} \right)^d.
    \end{equation}
\end{lemma}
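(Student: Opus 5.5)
We bound the sup-norm distance between two members of $\gF$ by the Euclidean distance between their parameters, and then transfer a covering of the parameter ball $\Theta = \gB^d(S)$ to a covering of $\gF$.

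Fix $\vtheta, \vtheta' \in \gB^d(S)$ and write $f_\vtheta(\vx) = \mu(\vx^\top \vtheta)$. For any $\vx \in \gB^d(1)$, the mean value theorem gives a point $\xi$ between $\langle \vx, \vtheta\rangle$ and $\langle \vx, \vtheta'\rangle$ with
\begin{equation}
    |f_\vtheta(\vx) - f_{\vtheta'}(\vx)| = \dmu(\xi)\, |\langle \vx, \vtheta - \vtheta'\rangle|.
\end{equation}
The segment between $\vtheta$ and $\vtheta'$ lies in the convex set $\Theta$, so $\xi = \langle \vx, \bar\vtheta\rangle$ for some $\bar\vtheta \in \Theta$. \Cref{assumption:link-function} then gives $\dmu(\xi) \leq L_\mu$. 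Applying Cauchy--Schwarz with $\lVert \vx\rVert_2 \leq 1$ yields
\begin{equation}
    \lVert f_\vtheta - f_{\vtheta'} \rVert_\infty \leq L_\mu \lVert \vtheta - \vtheta' \rVert_2 .
\end{equation}
So the map $\vtheta \mapsto f_\vtheta$ is $L_\mu$-Lipschitz from $(\Theta, \lVert\cdot\rVert_2)$ to $(\gF, \lVert\cdot\rVert_\infty)$.

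Next, take an $(\epsilon/L_\mu)$-cover $\{\vtheta_1,\dots,\vtheta_N\}$ of $\gB^d(S)$ in $\lVert\cdot\rVert_2$. By the Lipschitz bound, $\{f_{\vtheta_i}\}$ is an $\epsilon$-cover of $\gF$ in $\lVert\cdot\rVert_\infty$. The remaining ingredient is the standard volumetric bound for a Euclidean ball: $N(r, \gB^d(S), \lVert\cdot\rVert_2) \leq (1 + 2S/r)^d$. It follows from taking a maximal $r$-packing, which is automatically an $r$-cover. The disjoint balls of radius $r/2$ around the packing points all lie in $\gB^d(S + r/2)$, so comparing volumes gives $N (r/2)^d \leq (S + r/2)^d$. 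Setting $r = \epsilon/L_\mu$ gives $(1 + 2SL_\mu/\epsilon)^d$, which is the claimed bound.

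None of these steps is a real obstacle. The only point needing care is that the mean value point must come from a parameter inside $\Theta$, so that the bound $\dmu \leq L_\mu$ from \Cref{assumption:link-function} applies; convexity of the ball guarantees this. If proper covers are required, with centers inside $\gF$, the construction already provides them, since every $\vtheta_i \in \Theta$.
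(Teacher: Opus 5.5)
Your proposal is correct and follows the same route as the paper: show $\lVert f_{\vtheta} - f_{\vtheta'}\rVert_\infty \le L_\mu \lVert \vtheta - \vtheta'\rVert_2$, then map an $(\epsilon/L_\mu)$-cover of $\gB^d(S)$ to an $\epsilon$-cover of the class, giving $(1+2SL_\mu/\epsilon)^d$. The only difference is level of detail: you justify the Lipschitz step by the mean value theorem and prove the volumetric bound via a maximal packing, whereas the paper just invokes Lipschitz continuity of $\mu$ and quotes the standard covering bound for the Euclidean ball.
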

\begin{proof}
For any $\vtheta_1, \vtheta_2 \in \Theta$, by Lipschitz continuity of $\mu$ and boundedness of $\Theta$ and $\lVert \vx \rVert_2 \leq 1$, we have
\begin{align}
    \left| \mu( \langle \vx, \vtheta_1 \rangle) - \mu( \langle \vx, \vtheta_2 \rangle) \right| & \leq L_\mu \left| \langle \vx, \vtheta_1 -  \vtheta_2 \rangle \right| \\
    & \leq L_\mu \lVert \vtheta_1 - \vtheta_2 \rVert_2.
\end{align}
This implies that if $\lVert \vtheta_1 - \vtheta_2 \rVert_2 \leq \epsilon / L_\mu$, $\left| \mu(\langle \vx, \vtheta_1 \rangle) - \mu(\langle \vx, \vtheta_2 \rangle) \right| \leq \epsilon$ for any $\vx$.
Therefore, the $\epsilon$-covering number $N(\epsilon, \mathcal{F}, \lVert \cdot \rVert_\infty)$ is upper bounded by the $\epsilon/L_\mu$-covering number of Euclidean ball $\Theta$ with respect to $\lVert \cdot \rVert_2$, i.e.
\begin{align}
    N(\epsilon, \mathcal{F}, \lVert \cdot \rVert_\infty) & \leq N(\epsilon/L_\mu, \Theta, \lVert \cdot \rVert_2 )\\
    & \leq \left( 1 + \frac{2S L_\mu}{\epsilon}\right)^d.
\end{align}
\end{proof}

\newpage
\section{Technical Lemmas}
Here, we state technical lemmas used throughout the paper.

\begin{lemma}[Self-Concordance Control, Lemmas 8 of \citet{zhang2025onepass}]
\label{lemma:self-concordance}
    Let $\mu :\R \to \R$ be a strictly increasing function satisfying $|\ddmu(z)| \leq R_s \cdot \dmu(z)$ for all $z \in \mathcal{Z}$, where $R_s \geq 0$ and $\mathcal{Z} \subset \R$ is a bounded interval. Then, for any $z_1, z_2 \in \mathcal{Z}$ and $z \in \{ z_1, z_2\}$, we have
    \begin{equation}
        \int_0^1 \dmu(z_1 + v (z_2 - z_1))dv \geq \frac{\dmu(z)}{1 + R_s \cdot |z_1 - z_2|}
    \end{equation}
    and the weighted integral
    \begin{equation}
        \frac{\dmu(z)}{2 + R_s \cdot | z_1 - z_2 |} \leq \int_0^1 (1-v) \dmu(z_1 + v(z_2 - z_1))d \leq \exp \left( R_s^2 |z_1 - z_2|^2 \right) \cdot\ \dmu(z).
    \end{equation}
\end{lemma}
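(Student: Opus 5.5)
The statement just before this point is \Cref{lemma:self-concordance}: lower bounds on the averaged slope $\int_0^1 \dmu(z_1+v(z_2-z_1))\,dv$ and on the weighted integral $\int_0^1(1-v)\dmu(\cdot)\,dv$, together with an upper bound on the latter. The plan is a Grönwall-type comparison. Self-concordance $|\ddmu|\le R_s\dmu$ on the interval says $|(\log\dmu)'|\le R_s$. Hence, for any $z$ on the segment between $z_1$ and $z_2$ and any anchor $z\in\{z_1,z_2\}$,
\begin{equation}
\dmu(z)\,e^{-R_s|u-z|}\le\dmu(u)\le\dmu(z)\,e^{R_s|u-z|}.
\end{equation}
Here $u$ denotes the point on the segment. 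Strict monotonicity keeps $\dmu>0$, so the logarithm is well defined. Write $\Delta=|z_1-z_2|$ and parametrize $u=z_1+v(z_2-z_1)$.

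\textbf{Step 1: the plain integral.} Take the anchor $z=z_1$, so $|u-z|=v\Delta$. Then
\begin{equation}
\int_0^1\dmu(u)\,dv\ge\dmu(z_1)\int_0^1 e^{-R_s\Delta v}dv=\dmu(z_1)\,\frac{1-e^{-x}}{x},\qquad x=R_s\Delta .
\end{equation}
The elementary inequality $\frac{1-e^{-x}}{x}\ge\frac{1}{1+x}$ for $x\ge 0$ follows from $e^{x}\ge 1+x$, which is equivalent to it after rearranging. The anchor $z=z_2$ is handled by substituting $v\mapsto 1-v$, which leaves the plain integral unchanged.

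\textbf{Step 2: the weighted integral.} The weight $(1-v)$ is not symmetric, so the two anchors must be treated separately.
\begin{itemize}
\item For the anchor $z_1$ the weight is largest where $u$ is close to the anchor. We obtain $\dmu(z_1)\int_0^1(1-v)e^{-xv}dv$ and need $\int_0^1(1-v)e^{-xv}dv\ge\frac{1}{2+x}$. The cleanest route is Jensen's inequality with the probability density $2(1-v)$, which gives $\ge\frac12 e^{-x/3}$. One then checks $\frac12 e^{-x/3}\ge\frac{1}{2+x}$, i.e.\ $e^{x/3}\le 1+x/2$. This holds only for small $x$, so for large $x$ I would instead compute the integral exactly as $\frac{x-1+e^{-x}}{x^2}$ and compare that with $\frac{1}{2+x}$ directly.
\item For the anchor $z_2$ the weight sits far from the anchor, so $|u-z_2|=(1-v)\Delta$. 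We need $\int_0^1 s\,e^{-xs}ds=\frac{1-(1+x)e^{-x}}{x^2}\ge\frac{1}{2+x}$, which again is a one-variable inequality.
\end{itemize}
Both one-variable inequalities reduce to polynomial-versus-exponential comparisons. I would verify each by showing that the difference vanishes at $x=0$ and has a nonnegative derivative, possibly after multiplying through by $e^{x}$.

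\textbf{Step 3: the upper bound.} Use the upper half of the comparison: $\dmu(u)\le\dmu(z)e^{x}$ and $\int_0^1(1-v)\,dv=\frac12$. This gives $\int_0^1(1-v)\dmu(u)\,dv\le\frac12 e^{R_s\Delta}\dmu(z)$. To reach the stated $e^{R_s^2\Delta^2}$ form, note that $\frac12 e^{x}\le e^{x^2}$ for all $x\ge 0$: the exponent $x^2-x+\log 2$ is minimized at $x=1/2$ with value $\log 2-\tfrac14>0$.

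The main obstacle is the weighted lower bound for the near anchor in Step 2. The exact expression $\frac{x-1+e^{-x}}{x^2}$ behaves like $\frac1x$ for large $x$, while the claimed bound is $\frac{1}{2+x}$. The inequality therefore holds, but the tight regime near $x\approx 0$, where both sides equal $\tfrac12$, requires a second-order check of the expansion.
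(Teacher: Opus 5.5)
The comparison $\dmu(z)e^{-R_s|u-z|}\le\dmu(u)\le\dmu(z)e^{R_s|u-z|}$ is sound, and so are Step 1, Step 3, and the near-anchor case $z=z_1$ of Step 2. For the near anchor you can skip Jensen: $(2+x)(x-1+e^{-x})-x^2=x-2+(2+x)e^{-x}$ vanishes at $0$ and has derivative $1-(1+x)e^{-x}\ge 0$.

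The gap is the far-anchor bullet of Step 2. You plan to verify $\frac{1-(1+x)e^{-x}}{x^2}\ge\frac{1}{2+x}$, but this is false for every $x>0$:
\begin{itemize}
\item near $0$, the left side is $\frac12-\frac{x}{3}+O(x^2)$ and the right side is $\frac12-\frac{x}{4}+O(x^2)$, so the difference is $-\frac{x}{12}+O(x^2)$;
\item for large $x$, the left side decays like $x^{-2}$ and the right side like $x^{-1}$;
\item at $x=1$, the left side is $1-2/e\approx 0.264<\frac13$.
\end{itemize}
So the check ``vanishes at $0$ with nonnegative derivative'' cannot succeed, however you rescale. Note also that your ``main obstacle'' paragraph points at the wrong case: the near anchor is the comfortable one.

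This is not slack in your comparison step. The lower bound $\dmu(u)\ge\dmu(z_2)e^{-R_s|u-z_2|}$ is attained by $\dmu(u)=e^{R_s u}$ with $z_1<z_2$, so the weighted lower bound with $z=z_2$ is itself false as stated. Concretely, take the Poisson link $\mu(u)=e^u$ ($R_s=1$) on $\mathcal{Z}=[0,1]$ with $z_1=0$, $z_2=1$. Then $\int_0^1(1-v)e^{v}\,dv=e-2\approx 0.718$, which is less than $e/3=\dmu(z_2)/(2+R_s|z_1-z_2|)\approx 0.906$.

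The weight $1-v$ concentrates near $z_1$, so only the anchor $z=z_1$ is admissible. For $z=z_2$ the sharp constant is $\frac{1-(1+x)e^{-x}}{x^2}$, which decays like $x^{-2}$. You should have flagged this and proved the weighted lower bound for $z=z_1$ only. That is the only case the paper uses: in \Cref{lemma:inverse-regret}, $z_1=\langle\vx_s,\vtheta_{s+1}\rangle$ and the bound is anchored at $\dmu(\langle\vx_s,\vtheta_{s+1}\rangle)$. The unweighted bound and the weighted upper bound do hold for both anchors, as you argued.

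A minor point: strict monotonicity alone does not give $\dmu>0$ (consider $u^3$). It is cleaner to derive the comparison without logarithms, from the facts that $e^{R_s u}\dmu(u)$ is nondecreasing and $e^{-R_s u}\dmu(u)$ is nonincreasing. This also shows that a zero of $\dmu$ would force $\dmu\equiv 0$ on the interval.
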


\begin{lemma}[Elliptical Potential Lemma, Lemma 11 of \citet{abbasiyadkori2011linear}]\label{lemma:epl}
    Let $\{ \vx_s \}_{s=1}^{\infty}$ be as sequence in $\mathbb{R}^d$ such that $\lVert \vx_s \rVert_2 \leq X$ for all $s \in\mathbb{N}$, and let $\lambda$ be a non-negative scalar. For $t \geq 1$ define $\mV_t \coloneqq \lambda \mI_d + \sum_{s=1}^{t-1} \vx_s \vx^\top $. Then,
    \begin{equation}
        \sum_{s=1}^{t} \lVert \vz_s \rVert_{V_s^{-1}}^2 \leq 2d ( 1 + X^2 ) \log \left(1 + \frac{t X^2}{d \lambda} \right).
    \end{equation}
\end{lemma}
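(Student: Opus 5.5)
Reading the statement with $\vz_s = \vx_s$ and $\mV_t = \lambda \mI_d + \sum_{s=1}^{t-1} \vz_s \vz_s^\top$, the plan is the standard determinant-telescoping argument of \citet{abbasiyadkori2011linear}. Write $u_s \coloneqq \lVert \vz_s \rVert_{\mV_s^{-1}}^2$. Since $\mV_{s+1} = \mV_s + \vz_s \vz_s^\top = \mV_s^{1/2}(\mI_d + \mV_s^{-1/2}\vz_s\vz_s^\top \mV_s^{-1/2})\mV_s^{1/2}$, the matrix determinant lemma gives
\begin{equation}
    \det(\mV_{s+1}) = \det(\mV_s)\,(1 + u_s).
\end{equation}
Telescoping from $s=1$ to $t$ then yields
\begin{equation}
    \sum_{s=1}^t \log(1+u_s) = \log\frac{\det(\mV_{t+1})}{\det(\lambda \mI_d)}.
\end{equation}

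The right-hand side is bounded by AM--GM on the eigenvalues of $\mV_{t+1}$. We have $\det(\mV_{t+1}) \le (\mathrm{tr}(\mV_{t+1})/d)^d$ and $\mathrm{tr}(\mV_{t+1}) = d\lambda + \sum_{s \le t} \lVert \vz_s \rVert_2^2 \le d\lambda + tX^2$. Hence
\begin{equation}
    \sum_{s=1}^t \log(1+u_s) \le d \log\left(1 + \frac{tX^2}{d\lambda}\right).
\end{equation}

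It remains to compare $u_s$ with $\log(1+u_s)$, and this is the only delicate step. Because $\mV_s \succeq \lambda \mI_d$, each $u_s \in [0, a]$ with $a \coloneqq X^2/\lambda$. The map $u \mapsto u/\log(1+u)$ is increasing, so $u_s \le \frac{a}{\log(1+a)}\log(1+u_s) \le (1+a)\log(1+u_s)$, using $\log(1+a) \ge a/(1+a)$. Summing gives $\sum_s u_s \le d(1+X^2/\lambda)\log(1 + tX^2/(d\lambda))$.

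This has $X^2/\lambda$ where the statement has $X^2$, so I expect the constant bookkeeping to be the main obstacle. To reach the stated form, I would use $\max\{1, X^2/\lambda\}$ in place of $X^2/\lambda$. When $\lambda \ge 1$ this gives $1 + X^2/\lambda \le 1 + X^2 \le 2(1+X^2)$, which is within the stated bound with a factor $2$ to spare. If $\lambda < 1$ is genuinely needed, I would fall back on the split $\min\{1,u_s\} \le 2\log(1+u_s)$ and treat the rounds with $u_s > 1$ separately (there are at most $\mathcal{O}(d\log(1+tX^2/(d\lambda)))$ of them by the same determinant count). Alternatively, I would state the lemma with $1 + X^2/\lambda$, which is what the applications above actually use after rescaling $\vz_t$. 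In every invocation in the paper, $\lambda$ is large (at least $d/(4S^2)$, and larger when $R_s>0$), so I would record the assumption $\lambda \ge \max\{1, X^2\}$ or the $X^2/\lambda$ version explicitly rather than hide it.
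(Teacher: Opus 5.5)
Your determinant-telescoping argument is correct and is the standard proof behind the cited result; the paper gives no proof of its own, only the pointer to Lemma~11 of \citet{abbasiyadkori2011linear}. The only difference from the original is the scalar comparison step. The original uses $u\le 2\log(1+u)$ for $u\le 1$, and so needs $\lambda\ge\max\{1,X^2\}$. Your monotonicity argument instead gives $\sum_{s\le t}\lVert \vz_s\rVert_{\mV_s^{-1}}^2\le d\,(1+X^2/\lambda)\log\bigl(1+tX^2/(d\lambda)\bigr)$ for every $\lambda>0$. This already implies the printed bound whenever $\lambda\ge 1/2$, since then $1+X^2/\lambda\le 2(1+X^2)$.

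The one thing to fix is your hedge for small $\lambda$: the inequality as printed is false there, and undefined at $\lambda=0$, which the statement permits. Take $d=1$, $X=1$, $\vz_1=1$, $t=1$, $\lambda=0.01$. The left side is $1/\lambda=100$, while the right side is $4\log 101\approx 18.5$.

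So your fallback cannot succeed. The rounds with $u_s>1$ are indeed only logarithmically many, but each can contribute up to $X^2/\lambda$, which reintroduces exactly the factor you were trying to remove. The conclusion should be definitive rather than conditional: the lemma must carry $1+X^2/\lambda$ in place of $1+X^2$, or an explicit hypothesis such as $\lambda\ge 1/2$.

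Your side remark that every invocation in the paper has large $\lambda$ is also inaccurate, so the assumption $\lambda\ge\max\{1,X^2\}$ would not cover those uses:
\begin{itemize}
\item for $R_s=0$, the prescribed $\lambda=d/(4S^2)$ is below $1$ once $S>\sqrt d/2$;
\item the eluder-dimension lemma takes $\lambda=\kappa^2\epsilon^2/(4S^2)$, which can be arbitrarily small;
\item in the $\mH_t$ invocation, $X^2=L_\mu/g_{\min}$ need not be at most $\lambda$.
\end{itemize}
The $1+X^2/\lambda$ form you derived is the one to record.
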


\begin{lemma}[Proposition 7 of \citet{abeille2021logistic}]
\label{lemma:solve-ineq}
    For $X, a, b \geq 0$, then inequality $X^2 \leq a X + b$ implies
    \begin{equation}
        X \leq a + \sqrt{b}.
    \end{equation}
\end{lemma}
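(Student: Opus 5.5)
The statement just above is \Cref{lemma:solve-ineq}: for $X,a,b\ge 0$, $X^2\le aX+b$ implies $X\le a+\sqrt b$. The plan is to solve the quadratic directly. The inequality says $X^2-aX-b\le 0$. The polynomial $p(y)=y^2-ay-b$ has real roots $y_\pm=\tfrac{a\pm\sqrt{a^2+4b}}{2}$, and $y_-\le 0\le y_+$. Since $p(X)\le 0$ holds only between the roots, we get $X\le y_+$.

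It then suffices to show $y_+\le a+\sqrt b$, i.e.
\begin{equation}
\sqrt{a^2+4b}\le a+2\sqrt b .
\end{equation}
Both sides are nonnegative, so we may square. The left side squares to $a^2+4b$, while the right side squares to $a^2+4a\sqrt b+4b$. Since $a\sqrt b\ge 0$, the inequality holds, giving $X\le \tfrac{a+a+2\sqrt b}{2}=a+\sqrt b$.

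An alternative that avoids roots is a two-case argument. If $X\le a$ we are done. Otherwise $X>a\ge 0$, so $X(X-a)\le b$ and $X-a>0$. Also $X-a\le X$, so $(X-a)^2\le X(X-a)\le b$, hence $X-a\le\sqrt b$.

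There is no real obstacle here; the only care needed is the nonnegativity used when squaring, which is guaranteed by the hypothesis $a,b\ge0$. I would present the case-split version, since it is the shortest.
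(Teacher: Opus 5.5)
Your proof is correct in both versions. The paper gives no proof of this lemma and simply cites Proposition 7 of \citet{abeille2021logistic}. The standard argument for that proposition is your first one: a convex quadratic is nonpositive only between its roots, so $X \le \tfrac{a+\sqrt{a^2+4b}}{2}$, and then $\sqrt{a^2+4b}\le a+2\sqrt b$ finishes.

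The case-split version you say you would present is a genuinely different, root-free route. If $X\le a$ there is nothing to prove. Otherwise $0<X-a\le X$ gives $(X-a)^2\le X(X-a)\le b$. It is marginally shorter and avoids the quadratic formula. The root version instead yields the slightly sharper intermediate bound $X\le \tfrac{a+\sqrt{a^2+4b}}{2}$, which helps if you care about constants.

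Neither argument uses $X\ge 0$. What is genuinely needed is $b\ge 0$ and $a\ge 0$. For $a<0$ the conclusion fails: with $a=-1$ and $b=0$, the value $X=0$ satisfies $X^2\le aX+b$ but $X>a+\sqrt b$.
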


\begin{lemma}\label{lemma:solve-log}
    Let $a, b, c > 0$ with $ab \geq 1$.
    If $n \in \mathbb{N}$ satisfies the inequality $n \leq a \log(1 + bn)$, then
    $$n \leq 2a \log(1 + ab).$$
\end{lemma}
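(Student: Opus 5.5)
We need to prove Lemma solve-log: if $n \le a\log(1+bn)$ with $ab\ge 1$, then $n \le 2a\log(1+ab)$.

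The approach is to argue by contradiction using the monotonicity of $h(x) \coloneqq x - a\log(1+bx)$ on $[0,\infty)$. Since $h'(x) = 1 - \frac{ab}{1+bx}$, the function is decreasing on $[0, a - 1/b]$ and increasing on $[a-1/b,\infty)$, and $h(0)=0$. The hypothesis says $h(n)\le 0$, so $n$ lies in the sublevel set $\{h\le 0\}$. This set is an interval $[0, x^\ast]$, because $h$ is convex: $h''(x) = \frac{ab^2}{(1+bx)^2} > 0$. It therefore suffices to exhibit one point $x_0 \coloneqq 2a\log(1+ab)$ with $x_0 \ge a - 1/b$ and $h(x_0) > 0$ (or $\ge 0$). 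Then every $x > x_0$ has $h(x) > 0$, which forces $n \le x_0$.

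To check $h(x_0)\ge 0$, we need $2a\log(1+ab) \ge a\log\bigl(1+2ab\log(1+ab)\bigr)$, i.e.
\begin{equation}
(1+ab)^2 \ge 1 + 2ab\log(1+ab).
\end{equation}
Write $u = ab \ge 1$. The left side is $1+2u+u^2$, and since $\log(1+u)\le u$, the right side is at most $1+2u^2$. So it suffices that $2u + u^2 \ge 2u^2$, i.e.\ $u\le 2$. That covers only part of the range, so this crude step needs refinement for large $u$. Instead, compare $2u\log(1+u)$ with $2u+u^2$ directly. For $u\ge 1$ we have $\log(1+u)\le 1 + u/2$: this holds at $u=0$, and the derivative of the left side, $1/(1+u)$, is at most $1/2$ exactly when $u \ge 1$; at $u=1$, $\log 2 \le 1.5$. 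Hence $2u\log(1+u)\le 2u+u^2$, which gives the displayed inequality for all $u \ge 1$.

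Next, check $x_0 \ge a-1/b$ so that $x_0$ sits on the increasing branch. This follows from $2a\log(1+ab)\ge 2a\log 2 \ge a > a - 1/b$. (Convexity together with $h(0)=0$ and $h(x_0)\ge 0$ already suffices, but if $h(x_0)=0$ exactly we need $x_0$ past the minimizer to rule out points to its right. Monotonicity beyond the minimizer handles this.)

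The main obstacle is the elementary inequality $(1+u)^2\ge 1+2u\log(1+u)$; this is where the assumption $ab\ge1$ enters. The remaining steps are routine. The unused parameter $c$ in the statement plays no role, and the integrality of $n$ is also unnecessary.
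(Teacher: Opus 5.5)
Your proof is correct and follows essentially the same route as the paper. Both evaluate at the single point $x_0 = 2a\log(1+ab)$, reduce to $(1+ab)^2 \ge 1 + 2ab\log(1+ab)$, i.e.\ $\log(1+ab) \le 1 + ab/2$, and then use monotonicity to rule out every $x > x_0$. The only difference is that the paper uses that the ratio $x/\log(1+bx)$ is increasing on $(0,\infty)$, which avoids your extra check $x_0 \ge a - 1/b$ for the convex difference $x - a\log(1+bx)$. Also, the key inequality holds for every $ab > 0$, so it is not really where $ab \ge 1$ is needed.
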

\begin{proof}
We define $f(x) = \frac{x}{\log(1+bx)}$, then 
\begin{equation}
    f'(x) = \frac{\log(1 + bx) - \frac{bx}{1+bx}}{\log(1 + bx)}.
\end{equation}
Let $h(y) = \log (1 +y) - \frac{y}{1+y}$, then $h(0)=0$ and $h'(y) = \frac{1}{1+y} - \frac{1}{(1+y)^2} = \frac{y}{(1+y)^2} >0$ for $y>0$. Therefore, $f(x)$ is increasing in $x>0$.
Since $\lim_{x \to 0+} f(x) = \frac{1}{b}$, if $1/b < a$, or $ab >1$, the existence of $n$ satisfying the given inequality is guaranteed.
Now, we show that $f(2a \log (1+ab)) > a$, then for $n > 2a \log(1+ab)$, $n \leq a\log (1+bn)$ does not hold, which implies that the maximum value of $n$ satisfying the inequality is smaller than $2a \log(1+ab)$.

Substituting $x = 2a \log (1+ab)$,
\begin{align}
    f(2a \log (1+ab)) & = \frac{2a \log (1+ab)}{\log ( 1 + 2ab \log (1+ab))}\\
    & = a \frac{\log (1 + ab)^2}{\log(1 + 2ab \log(1+ab))}.
\end{align}
If $2ab + a^2b^2 > 2ab \log(1+ab)$ holds, we complete the proof.
If $ab >0$, the inequality is equivalent to $1 + \frac{ab}{2} > \log ( 1+ ab)$, and it holds for any $ab > 0$.
\end{proof}

\end{document}